\theoremstyle{definition}
\newtheorem{myDef}{Definition}
\newtheorem{myLemma}{Lemma}
\newtheorem{theorem}{Theorem}
\newtheorem*{remark}{Remark}
\newtheorem{assumption}{Assumption}
\begin{document}

\title{Affine EKF: Exploring and Utilizing Sufficient and Necessary Conditions for Observability Maintenance to Improve EKF Consistency} 

\author{Yang Song$^{*}$, Liang Zhao$^{\dag}$, Shoudong Huang$^{*}$
\thanks{{Email: \tt\small Yang.Song-4@student.uts.edu.au}}
\thanks{$^{*}$Yang Song and Shoudong Huang are with Robotics Institute, University of Technology Sydney, Australia.}
\thanks{$^{\dag}$Liang Zhao is with  School of Informatics, The University of Edinburgh, United Kingdom.}
}



%

\maketitle

\begin{abstract}
Inconsistency issue is one crucial challenge for the performance of extended Kalman filter (EKF) based methods for state estimation problems, which is mainly affected by the discrepancy of observability between the EKF model and the underlying dynamic system. In this work, some sufficient and necessary conditions for observability maintenance are first proved. We find that under certain conditions, an EKF can naturally maintain correct observability if the corresponding linearization makes unobservable subspace independent of the state values. Based on this theoretical finding, a novel affine EKF (Aff-EKF) framework is proposed to overcome the inconsistency of standard EKF (Std-EKF) by affine transformations, which not only naturally satisfies the observability constraint but also has a clear design procedure. The advantages of our Aff-EKF framework over some commonly used methods are demonstrated through mathematical analyses.
The effectiveness of our proposed method is demonstrated on three simultaneous localization and mapping (SLAM) applications with different types of features, typical point features, point features on a horizontal plane and plane features. Specifically, following the proposed procedure,
the naturally consistent Aff-EKFs can be explicitly derived for these problems. The consistency improvement of these Aff-EKFs are validated by Monte Carlo simulations. The MATLAB code of the algorithm is made publicly available\footnote{\leftline{The MATLAB code is available at} \text{https://github.com/YangSONG-SLAM/Affine-EKF}}.


\end{abstract}

\IEEEpeerreviewmaketitle

\section{Introduction}
\IEEEPARstart{T}{he} state estimation is of great concern for nonlinear dynamic systems. Although offline optimization using all the measurements can produce very accurate estimates, the ever-growing size of the state hinders real-time computing \cite{Obj1,Zhuqing_IEKFlocalization}. Among the techniques for online computation, extended Kalman filter (EKF) is still one of the most popular algorithms in robotic society \cite{IEKF,past,InGVIO,FastLio2,PointLinePlane,Zhuqing_IEKFlocalization}. 



However, one crucial challenge for the performance of EKF is the inconsistency issue, namely the underestimation of uncertainty. 
 According to the studies on the inconsistency of EKF \cite{con2,con4,con1,con3}, the violation of observability constraint is a major factor causing such issue. For those partially-observable nonlinear dynamic systems, such as simultaneous localization and mapping (SLAM), standard EKF (Std-EKF) model may have less unobservable directions than the underlying nonlinear system. The spurious information gain along the missing unobservable directions will finally produce over-confident results \cite{FEJ2,con2}, leading to poor estimates.
 This limitation of Std-EKF has been shown not only in various SLAM tasks \cite{VINS1,ic_SLAM1,Obj1}, but also in other applications such as the state estimation for legged robots \cite{ic_LegRob}, the projectile trajectory estimation \cite{ic_ProjTraj}, and map-based localization \cite{Zhuqing_IEKFlocalization}. 


To address the inconsistency issue, there are currently two types of methods: one is represented by first estimated Jacobian EKF (FEJ-EKF) \cite{FEJ2,con2,DRI} or observability constrained EKF (OC-EKF) \cite{VINS1,con5,con4}; the other is represented by invariant EKF (I-EKF) \cite{con9,InGVIO,Obj1,con3}. The EKF models of the above methods maintain the correct observability property, i.e. they share the same dimension of observability as the underlying dynamic system. Specifically, to satisfy this observability constraint, the Jacobians in the FEJ-EKF and OC-EKF are not evaluated at the updated estimates, but the artificially adjusted values. This could result in poor performance in terms of accuracy and convergence \cite{FEJ2}. In contrast, I-EKFs employ the particular linearizations generated by specific Lie group structures so that they naturally satisfy the observability constraint. All the terms in I-EKFs are evaluated at the current updated estimates, rather than any artificially adjusted values, thus generally I-EKFs perform better than FEJ-EKFs and OC-EKFs. 

For any general nonlinear dynamic system, we hope we can find a linearization method similar to those in I-EKFs, based on which EKF naturally satisfies the observability constraint. However, recent I-EKF framework requires a special Lie group sturcture which may be difficult to find. It seems empirical to design those linearizations employed by I-EKFs on specific problems \cite{RIEKF_2DSLAM,InGVIO,ic_ProjTraj,Obj1,con3}. As far as we know, the existing literature did not provide a clear way to find such specific linearization for a general state estimation problem. 



This work aims to fill the gap by providing a clear procedure to design a theoretically sound observability preserved EKF for the state estimation problems. The idea of this paper stems from our finding that the observability discrepancy of Std-EKFs in some problems (such as \cite{Zhuqing_IEKFlocalization}) can be caused by the dependence of unobservable subspace on states. If this property is true for general problems, does it mean that as long as our designed linearization eliminates this dependence, the corresponding EKF can naturally satisfy the observability constraint? 

To answer this question, we further prove some sufficient and necessary conditions for observability maintenance from the perspective of dependence of unobservable subspace on states. 
Using these newly proved observability-based consistency theories, we develop a framework, termed affine EKF (Aff-EKF), which revamps inconsistent Std-EKF by performing specific affine transformations on its linearization to preserve the correct observability property. It is a general approach to transform any Std-EKF to naturally maintain correct observability, with the detailed technical procedure provided. More interestingly, by comparing the formulas of Aff-EKF and Std-EKF, we find those affine transformations on the linearization of Std-EKF are equivalent to correcting its covariance matrix calculation. This intuitively shows how Aff-EKF addresses the inconsistency issue of Std-EKF. 
Specifically, the main contributions of this paper include:
\begin{itemize}
    \item We study in-depth the conditions under which an EKF can naturally satisfy the observability constraint. To be specific, from the perspective of the relationship between the observability discrepancy of an EKF and the dependence of unobservable subspace on states, some sufficient and necessary conditions are proved.
    These newly proved theorems about observability can help us determine which linearization method can make the EKF satisfy the observability constraint.
    \item A novel framework, called Aff-EKF, is proposed to design observability-preserved estimators that revamp the inconsistent Std-EKFs through appropriate affine transformations. The generic procedure for designing such appropriate affine transformations is provided based on the proved sufficient and necessary conditions for observability maintenance. 
    \item We validate the effectiveness of our Aff-EKF framework on three SLAM applications with different types of features: typical point features, point features on a horizontal plane, and plane features. In addition, we also theoretically analyze the advantages of our proposed framework compared to the approaches in the literature. 
\end{itemize}

\section{Related Works}
\subsection{Inconsistency Issue}
The inconsistency issue of Std-EKF in SLAM problems was first discovered by Julier and Uhlmann \cite{AConterExample}. The paper \cite{AConterExample} shows that the (standard) EKF algorithm always yields an underestimated covariance for either a stationary or moving vehicle in the 2D point feature based SLAM. This inconsistency issue will be apparent after a long duration. 

Based on simulation results, Bailey \textit{et al}. \cite{Tim_Consistency} verified the inevitability of eventual inconsistency of Std-EKF. 
They found that simply reducing linearization errors through iterative EKF or unscented filters cannot fundamentally prevent inconsistency. However, they have realized that the issue can be addressed if Jacobians are evaluated at the ground-truth states. Huang \textit{et al}. \cite{con1} then theoretically proved that the inconsistency is caused by the violation of some fundamental constraints related to Jacobians, confirming the findings in \cite{Tim_Consistency}. 


Subsequently, the authors of \cite{con2,con4} further revealed that the inconsistency is caused by the discrepancy of observablity between Std-EKF and underlying dynamic system.  
They showed that, for 2D point feature based SLAM, the unobservable subspace of Std-EKF lacks the direction of robot orientation due to linearization process. This indicates that Std-EKF generated spurious information in such direction, which also explains the findings in \cite{Tim_Consistency}, and thus leading to the underestimated covariance. In \cite{con2,con4}, the proposed observability constraint is that a consistent EKF should share the same dimension of observability as the underlying dynamic system, which is a general criterion for treating the constraints proposed in \cite{con1,AConterExample} as special cases. 

The observability discrepancy of Std-EKF appears in not only the 2D point feature based SLAM but also many other robotic problems, such as legged robots \cite{ic_LegRob}, the projectile trajectory estimation \cite{ic_ProjTraj}, map-based localization \cite{Zhuqing_IEKFlocalization}, visual-inertial navigation systems (VINS) \cite{li2013high,RIEKF_VINS}, and SLAM with 3D point features \cite{con3} and object features \cite{Obj1}. 

{{Through the observability analyses, we notice that unobservable subspaces of inconsistent Std-EKFs for all the above problems are related to the state values, while that of naturally consistent I-EKFs are independent of the state values.}} 
Therefore, based on this finding, we believe there are some general conditions for observability maintenance from the perspective of the dependence of unobservable subspace on states. In this paper, we prove some
sufficient and necessary conditions of the EKF linearization for the observability constraint maintenance. 




\subsection{Consistent Estimators}
Based on the observability constraint \cite{con2,con4}, there are currently two kinds of approaches to design a consistent EKF. 

One is to artificially adjust the evaluation points of the Jacobians, forcing the EKF to retain the same dimension of observability as the underlying dynamic system. The representative methods include FEJ-EKF \cite{con2}, FEJ2-EKF \cite{FEJ2}, decoupled right invariant EKF (DRI-EKF) \cite{DRI} and observability constrained EKF (OC-EKF) \cite{VINS1,con5,con4}. FEJ-EKF \cite{con2} is the first consistent EKF in terms of observability constraint. It simply uses the first estimated values for Jacobian evaluations throughout the entire process, ensuring that the algorithm satisfies the desired observability property. Since FEJ-EKF depends heavily
on the initial estimates, Huang \textit{et al}. \cite{con4} proposed an alternative approach, OC-EKF, which evaluates Jacobians at the optimal values by minimizing the expected linearization errors under observability constraint. 

However, a critical defect of all such methods is that 
the first-order Taylor expansion will not be theoretically hold for the adjusted Jacobians, resulting in some non-negligible extra first-order linearization errors. FEJ2-EKF \cite{FEJ2}, a modification of FEJ-EKF, alleviates part of this limitation, but at the expense of some geometrical information loss. And numerical instability may occur during the implementation of FEJ2-EKF \cite{FEJ2}. DRI-EKF \cite{DRI} is another variant of FEJ-EKF. Instead of modifying $\mathbb{SO}$-EKF (defined in \cite{con3}, and regarded as Std-EKF in \cite{FEJ2,VINS1,con5,con2,con4}), it applies FEJ method to $\mathbb{SE}$-EKF (defined in \cite{con3}) for smaller linearization errors. Although the FEJ2-EKF and DRI-EKF outperform FEJ-EKF \cite{con2} and OC-EKF \cite{VINS1,con5,con4} in some experiments, both of them still have not fundamentally overcome the limitation of additional non-negligible first-order linearization errors.

In contrast, the second kind of approaches to address inconsistency issue are to design proper linearization methods \cite{con9,InGVIO,Obj1,RIEKF_VINS,con3}, such that the corresponding EKFs automatically satisfy the observability constraint. The linearization methods naturally follow the first-order Taylor expansion with no loss of any first-order information, which is more theoretically sound. Currently, these methods \cite{RIEKF_2DSLAM,con9,InGVIO,Obj1,RIEKF_VINS,con3} are originated from the framework of I-EKF \cite{IEKF}. The theories in \cite{IEKF,con9} reveal that if there is a Lie group structure such that the system is totally invariant under its group action, then the (invariant) EKF linearized by the corresponding Lie group logarithm will satisfy the observability constraint. In \cite{RIEKF_2DSLAM,con3}, the linearizations of I-EKFs for point feature based SLAM are generated by regarding the state space as the Lie group $\mathbb{SE}_{k+1}$. Similarly, Wu \textit{et al}. \cite{RIEKF_VINS} proposed I-EKF based on $\mathbb{SE}_{k+2}(3)\times \mathbb{R}^6$ for visual-inertial navigation systems (VINS). And Brossard \textit{et al}. \cite{con9} applied $\mathbb{SE}_{k+m}\times \mathbb{R}^q$ for  multi-sensor fusion. Also, we designed $\mathbb{SE}_{k+1}\times\mathbb{SO}^k$ for object based SLAM in our previous work \cite{Obj1}. 

While recent I-EKFs provide consistent estimations for several applications, they require appropriate Lie group structures that are compatible with the systems, which may not be readily available for general estimation problems \cite{PointLinePlane,Zhuqing_IEKFlocalization}. In this paper, we will exploit the newly proved theorems to develop a novel framework to design consistent EKFs for general problems without the need of finding specific Lie group structures.

\section{Preliminary Knowledge}
In this section, some preliminary knowledge about manifold is first introduced. Then, using the concepts of manifold, the general EKF procedure is presented. Finally, the observabilities of the EKF model and the underlying dynamic system are clearly defined.

\subsection{Manifolds}\label{PK_manifold}
In this paper, we assume the state space, $\mathcal M$, of the dynamic system under consideration is a manifold. 
There are different definitions of manifold, and here we follow \cite[page 4]{Lee2012} where definition of a manifold is given by the concepts of charts and atlases. 

 \begin{myDef}
A chart $\phi_{\textbf{X}}$ centered at $\textbf{X}\in \mathcal M$ is a pair $(\mathcal U_\textbf{X},\phi_\textbf{X})$\footnote{For brevity, in many places in this paper, we simply represent the pair $(\mathcal U_\textbf{X},\phi_\textbf{X})$ by $\phi_\textbf{X}$.}, where $\phi_\textbf{X}$ is an invertible mapping of $\mathcal (\textbf{X}\in)\ \mathcal{U}_\textbf{X}\subset \mathcal M$ onto an open set $\phi_\textbf{X}(\mathcal{U}_\textbf{X})\subset \mathbb{R}^m$ s.t. $\phi_\textbf{X}(\textbf{X})=\textbf{0}$. Here $m$ is the dimension of the manifold.
 \end{myDef}

 \begin{myDef}
 An atlas $\mathcal A=\{\phi_\textbf{X}|\textbf{X}\in \mathcal M\}$ is a collection of charts in which the transformations $\phi_{\textbf{X}_a}\circ\phi^{-1}_{\textbf{X}_b}$ are differentiable on $\mathcal{U}_{\textbf{X}_a}\cap\mathcal{U}_{\textbf{X}_b}$ for any $\textbf{X}_a$ and $\textbf{X}_b$.
 \end{myDef}

We simply define a set $\mathcal M$ to be a manifold if there is an atlas $\mathcal A$ on it\footnote{More rigorous mathematical definitions can be found in \cite{Manifoldbook,Lee2012}.}. Through an atlas, the ``intractable" m-dimensional manifold $\mathcal M$ is locally transformed to the ``well-known" Euclidean space $\mathbb{R}^m$. 

As an example, consider the product manifold $$\mathcal M=\mathbb{SO}(d)\times \mathbb{R}^d\times\mathbb{R}^{q},$$
we can naturally find an atlas $\mathcal A^\eta=\{\phi_{\hat{\textbf{X}}}\}$ on it, where
\begin{equation}\label{example_std}
    \phi_{\hat{\textbf{X}}}(\textbf{X})=(\text{log}^{\mathbb{SO}(d)}(\textbf{R}\hat{\textbf{R}}^\top),(\textbf{p}^r-\hat{\textbf{p}}^r)^\top,(\textbf{f}-\hat{\textbf{f}})^\top)\in \mathbb{R}^{2d+q}
\end{equation}
centered at $\hat{\textbf{X}}=(\hat{\textbf{R}},\hat{\textbf{p}}^r,\hat{\textbf{f}})$ for ${\textbf{X}}=({\textbf{R}},{\textbf{p}}^r,{\textbf{f}})\in \mathcal U_{\hat{\textbf{X}}}\subset\mathcal M$. In this paper, we call this atlas $\mathcal A^\eta$ the standard atlas for manifold $\mathcal M$.
It is worth noticing that there are countless atlases on $\mathcal M$. For example, based on the standard atlas $\mathcal A^\eta$, we can generate a series of atlases $\mathcal A^\xi=\{\psi_{\hat{\textbf{X}}}\}$ through affine transformations as follows
\begin{equation}
\psi_{\hat{\textbf{X}}}=\textbf{A}_{\hat{\textbf{X}}}\cdot \phi_{\hat{\textbf{X}}},
\end{equation}
where $\textbf{A}_{\hat{\textbf{X}}}$ is an invertible matrix of dimension $(2d+q)\times(2d+q)$. The atlas $\mathcal A^\xi$ is also called an affine atlas in this paper.

\subsection{Extended Kalman Filter on Manifolds}\label{Sec_EKFmanifold}
The considered discrete dynamic system, including the process model (\ref{gen_mot}) and observation model (\ref{gen_obs}), is of the form
\begin{equation}\label{gen_mot}
\textbf{X}_{n}=f(\textbf{X}_{n-1},\textbf{u}_{n-1})\triangleq f_{\textbf{u}_{n-1}}(\textbf{X}_{n-1}),
\end{equation}
\begin{equation}\label{gen_obs}
    \textbf{z}_{n}=h(\textbf{X}_{n}),
\end{equation}
where $\textbf{X}_{n-1},\textbf{X}_{n}\in \mathcal M$ represent the state vectors at the time step $n-1$ and $n$, respectively, $\textbf{u}_{n-1}\in \mathcal W$ is the noise-free control input measurement, and $\textbf{z}_{n}\in \mathcal V$ represents the noise-free observation measurement at the state $\textbf{X}_{n}$. To concisely present our idea, in this section, we simply assume that both the control space $\mathcal W$ and the observation space $\mathcal V$ are Euclidean spaces\footnote{If the control and the observation spaces are general manifolds, we can get the same conclusions of this paper through similar derivations.}. 

\begin{algorithm}[t] 
 \hspace*{0.02in} {$\hat{\textbf{F}}^\epsilon_{n-1}$, $\hat{\textbf{G}}^\epsilon_{n-1}$, and $\hat{\textbf{H}}^\epsilon_{n}$ are defined in (\ref{Jacobians}).}\\[3 bp] 
	 \hspace*{0.01in} {\bf Input:} $\textbf{X}_{n-1|n-1}$, ${\textbf{P}}^\epsilon_{n-1}$, $\hat{\textbf{u}}_{n-1}$, $\hat{\textbf{z}}_{n}$\\
	\hspace*{0.01in} {\bf Output:} $\textbf{X}_{n|n}$, ${\textbf{P}}^\epsilon_{n}$\\
	\hspace*{0.01in} {\bf Propagation:} \\
	\hspace*{0.1in} $ \textbf{X}_{n|n-1}  \leftarrow  {f(\textbf{X}_{n-1|n-1},\hat{\textbf{u}}_{n-1})}$ \\ 
	\hspace*{0.1in} ${\textbf{P}}^\epsilon_{n|n-1} \leftarrow \hat{\textbf{{F}}}^\epsilon_{n-1}{\textbf{P}}^\epsilon_{n-1}  (\hat{\textbf{F}}^\epsilon_{n-1})^{\top} + \hat{\textbf{G}}^\epsilon_{n-1}{\bm{\Sigma}}_{n-1} (\hat{\textbf{G}}^\epsilon_{n-1})^{\top}$\\
	\hspace*{0.01in} {\bf Update:} \\	
	\hspace*{0.1in} $ {\textbf{S}}^\epsilon_{n}\leftarrow \hat{{\textbf{H}}}^\epsilon_{n}{\textbf{P}}^\epsilon_{n|n-1}(\hat{{\textbf{H}}}^\epsilon_{n})^{\top}+{\bm{\Omega}}_{n}$\\  
	\hspace*{0.1in} ${{\textbf{K}}}^\epsilon_{n}\leftarrow {\textbf{P}}^\epsilon_{n|n-1} (\hat{{\textbf{H}}}^\epsilon_{n})^{\top} ({\textbf{S}}^\epsilon_{n})^{-1} $\\ 
	\hspace*{0.1in} ${\textbf{y}}_{n} \leftarrow \hat{\textbf{z}}_{n}-h(\textbf{X}_{n|n-1})$ \\
	\hspace*{0.1in} $\textbf{X}_{n|n} \leftarrow  \rho_{\textbf{X}_{n|n-1}}^{-1}({{\textbf{K}}}^\epsilon_{n}{\textbf{y}}_{n})$\\ 
	\hspace*{0.1in} ${{\textbf{P}}}^\epsilon_{n} \leftarrow ({\textbf{I}}-{{\textbf{K}}}^\epsilon_{n}\hat{{\textbf{H}}}^\epsilon_{n}){\textbf{P}}^\epsilon_{n|n-1}$\\
	\caption{General EKF on a Manifold for a given Atlas}
 \label{EKF_Framework}
\end{algorithm}

However, in practice, only the noisy measurements are available for state estimation. We assume all the noises are ``Gaussian-like". 
Then, for a control input $\textbf{u}_n$ and an observation $\textbf{z}_{n}$ lying in Euclidean spaces, we have
\begin{equation}
    \begin{array}{lll}
&\hat{\textbf{u}}_{n-1}&=\textbf{u}_{n-1}+\textbf{w}_{n-1},\\
&\hat{\textbf{z}}_{n}&=\textbf{z}_{n}+\textbf{v}_{n},
    \end{array}
\end{equation}
where $\hat{\textbf{u}}_{n-1}$ and $\hat{\textbf{z}}_{n}$ are respectively the (noisy) sensor measurements in practical applications. $\textbf{w}_{n-1}\sim N(\textbf{0},\bm{\Sigma}_{n-1})$ and $\textbf{v}_{n}\sim N(\textbf{0},\bm{\Omega}_{n})$ are the corresponding Gaussian noises. 

To apply EKF for estimation, we need to select an atlas $\mathcal A^\epsilon=\{\rho_{\textbf{X}}|\textbf{X}\in \mathcal M\}$ for linearization. Correspondingly, the error vector $\bm{\epsilon}$ of state estimate $\hat{\textbf{X}}$ with respect to the ground-truth $\textbf{X}$ can be defined by 
\begin{equation}\label{B_linear2}
    \bm{\epsilon}\triangleq\rho_{\hat{\textbf{X}}}(\textbf{X})\in \mathbb{R}^m.
\end{equation}
An example is shown by (\ref{example_std}).

Then, based on the given atlas $\mathcal A^\epsilon$, we can implement the EKF algorithm (summarized in Alg. \ref{EKF_Framework}) for the state estimation of system (\ref{gen_mot})(\ref{gen_obs}), where Jacobian matrices based on the atlas $\mathcal A^\epsilon$ are computed by
\begin{equation}\label{Jacobians}
    \begin{array}{rlll}
         \frac{\partial f}{\partial \textbf{X}}|_{(\hat{\textbf{X}},\hat{\textbf{u}})}:& \hat{\textbf{F}}_{n-1}^\epsilon= \frac{\partial \rho_{\textbf{X}_{n|n-1}}(f( \rho^{-1}_{\textbf{X}_{n-1|n-1}}(\bm{\epsilon}),{\textbf{u}}))}{\partial \bm{\epsilon}}|_{(\textbf{0},\hat{\textbf{u}}_{n-1}))}, 
         \\[3bp]
         \frac{\partial f}{\partial \textbf{u}}|_{(\hat{\textbf{X}},\hat{\textbf{u}})}:& \hat{\textbf{G}}_{n-1}^\epsilon= \frac{\partial {\rho_{\textbf{X}_{n|n-1}}}(f( \rho^{-1}_{\textbf{X}_{n-1|n-1}}(\bm{\epsilon}),\textbf{u}))}{\partial \textbf{u}}|_{(\textbf{0},\hat{\textbf{u}}_{n-1})},  \\[3bp]
         \frac{\partial h}{\partial \textbf{X}}|_{\hat{\textbf{X}}}:& \hat{\textbf{H}}_{n}^\epsilon= \frac{\partial h(\rho^{-1}_{\textbf{X}_{n|n-1}}(\bm{\epsilon}))}{\partial \bm{\epsilon}}|_{\textbf{0}}. 
    \end{array}
\end{equation}

\subsection{Observability Analysis}
The observability analysis, which is based on the dimension of unobservable subspace, can be used to identify degenerate situations causing the inconsistency issue \cite{con2,PointLinePlane}. 

We first introduce the concepts about observability of the underlying dynamic systems. Considering the discrete dynamic system described by (\ref{gen_mot}) and (\ref{gen_obs}), the $k$-order observable subspace of this underlying dynamic system can be defined by \cite{ObsSys,ObsSys2,ObsSys3}
\begin{equation}
    \begin{array}{rl}
         \mathcal O_0(\textbf{X}_0)&= \mathop{\text{span}} \{\frac{\partial h}{\partial \textbf{X}}|_{\textbf{X}_0}\}, \\ \\
         \mathcal O_k(\textbf{X}_0)&= \mathop{\text{span}} \{\frac{\partial h(f_{\textbf{u}_{k-1}}\circ \cdots 
 \circ f_{\textbf{u}_0})}{\partial \textbf{X}}|_{\textbf{X}_0}\}\cup \mathcal O_{k-1}(\textbf{X}_0),
    \end{array}
\end{equation}
where $\textbf{X}_0$ is the starting state (the $0$-th step state), and $\{\textbf{u}_i|i=0,1,\cdots\}$ is the sequence of (ground-truth) control inputs.

 Given an atlas $\mathcal A^\epsilon$, $\mathcal O_k(\textbf{X}_0)$ can be  represented by forms of matrix, namely 
 the $k$-order observability matrices, 
 \begin{equation}\label{Ob_Matrix}
\mathcal{{\mathbf{O}}}^\epsilon_{k}(\textbf{X}_0)=\left[
		\begin{array}{cccccc}
			{\textbf{H}}^\epsilon_0\\
		{\textbf{H}}^\epsilon_1{\textbf{F}}^\epsilon_{0,0}\\
			\vdots\\
			{\textbf{H}}_{k}^\epsilon{\textbf{F}}^\epsilon_{k-1,0}
		\end{array}
		\right],
 \end{equation}
 where ${\textbf{F}}^\epsilon_{i,0}={\textbf{F}}^\epsilon_i{\textbf{F}}^\epsilon_{i-1}\cdots {\textbf{F}}^\epsilon_0$ is the transition matrix, ${\textbf{F}}^\epsilon_j$ and ${\textbf{H}}^\epsilon_{i}$ are the Jacobian matrices evaluated at ground-truth values.

The $k$-order unobservable subspace, $\mathcal O_k^\perp$, is the orthogonal complement of observable subspace. Through the given atlas $\mathcal A^\epsilon$, it can be represented by the right nullspace of the $k$-order observability matrix, $\text{null}(\textbf{O}^\epsilon_k)$. We use the symbol $\textbf{N}^\epsilon_k$ to denote $\text{null}(\textbf{O}^\epsilon_k)$,  which is also called the $k$-order unobservable subspace in this paper.

 
The dimensions of $\mathcal O_k$ and $\mathcal O^\perp_k$ can be defined by
 \begin{equation}\label{def_dimO}
 \begin{array}{rl}
     \text{dim}(\mathcal O_k(\textbf{X}_0))\triangleq\text{rank}(\textbf{O}^\epsilon_k(\textbf{X}_0)), \ \forall k\geq 0,\\
     \text{dim}(\mathcal O^\perp_k(\textbf{X}_0))\triangleq\text{dim}(\textbf{N}^\epsilon_k(\textbf{X}_0)), \ \forall k\geq 0.
     \end{array}
 \end{equation} 
These dimensions of subspaces as well as the rank of matrices are the intrinsic properties of the underlying dynamic system, which are independent of the atlases (see Lemma \ref{equRank}). 

\begin{myLemma}\label{equRank}
    Suppose $\mathcal M$ is a manifold, and $\mathcal A^\eta=\{\phi_{\hat{\textbf{X}}}\}$, $\mathcal A^\xi=\{\psi_{\hat{\textbf{X}}}\}$ are two arbitrary atlases on it, then
    \begin{equation}\label{eq_equRank}
        \begin{array}{rl}
     \text{rank}(\textbf{O}^\eta_k)=\text{rank}(\textbf{O}^\xi_k), \ \forall k\geq 0,\\
     \text{dim}(\textbf{N}^\eta_k)=\text{dim}(\textbf{N}^\xi_k), \ \forall k\geq 0,
     \end{array}
    \end{equation}
    where the matrices with superscript of $\eta$ and $\xi$ are computed based on $\mathcal A^\eta=\{\phi_{\hat{\textbf{X}}}\}$ and $\mathcal A^\xi=\{\psi_{\hat{\textbf{X}}}\}$, respectively.
\end{myLemma}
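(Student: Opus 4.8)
The plan is to show that passing from one atlas to the other multiplies the entire observability matrix on the right by a single fixed invertible matrix, after which both claims in (\ref{eq_equRank}) follow immediately. Concretely, for each state $\textbf{X}$ I would introduce the transition map $g_{\textbf{X}}\triangleq\phi_{\textbf{X}}\circ\psi_{\textbf{X}}^{-1}$, a local diffeomorphism of $\mathbb{R}^m$ that fixes the origin, since both charts are centered at $\textbf{X}$, i.e. $\phi_{\textbf{X}}(\textbf{X})=\psi_{\textbf{X}}(\textbf{X})=\textbf{0}$. Let $\textbf{J}_{\textbf{X}}\triangleq\frac{\partial g_{\textbf{X}}}{\partial\bm{\epsilon}}|_{\textbf{0}}$ be its Jacobian at the origin; because the two atlases are smoothly compatible, $g_{\textbf{X}}$ and $g_{\textbf{X}}^{-1}$ are both differentiable, so $\textbf{J}_{\textbf{X}}$ is an invertible $m\times m$ matrix with $(\textbf{J}_{\textbf{X}})^{-1}=\frac{\partial g_{\textbf{X}}^{-1}}{\partial\bm{\epsilon}}|_{\textbf{0}}$.

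The key step is to derive, by the chain rule, how the Jacobians in (\ref{Jacobians}), evaluated at the ground-truth states $\textbf{X}_i$, transform under this change of atlas. Writing $\psi_{\textbf{X}}^{-1}=\phi_{\textbf{X}}^{-1}\circ g_{\textbf{X}}$ and $\psi_{\textbf{X}}=g_{\textbf{X}}^{-1}\circ\phi_{\textbf{X}}$, and using that every intermediate map sends $\textbf{0}$ to $\textbf{0}$, I would obtain the measurement relation $\textbf{H}_k^\xi=\textbf{H}_k^\eta\textbf{J}_{\textbf{X}_k}$ and the transition relation $\textbf{F}_j^\xi=\textbf{J}_{\textbf{X}_{j+1}}^{-1}\textbf{F}_j^\eta\textbf{J}_{\textbf{X}_j}$. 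Multiplying the latter along a trajectory makes the interior factors $\textbf{J}_{\textbf{X}_i}\textbf{J}_{\textbf{X}_i}^{-1}$ telescope, yielding $\textbf{F}_{k-1,0}^\xi=\textbf{J}_{\textbf{X}_k}^{-1}\textbf{F}_{k-1,0}^\eta\textbf{J}_{\textbf{X}_0}$. Combining the two relations, the $k$-th block row satisfies $\textbf{H}_k^\xi\textbf{F}_{k-1,0}^\xi=\textbf{H}_k^\eta\textbf{F}_{k-1,0}^\eta\textbf{J}_{\textbf{X}_0}$, the $\textbf{J}_{\textbf{X}_k}$ factors cancelling. Since the same invertible matrix $\textbf{J}_{\textbf{X}_0}$ appears on the right of every block row, stacking the rows gives
\begin{equation}
\textbf{O}_k^\xi(\textbf{X}_0)=\textbf{O}_k^\eta(\textbf{X}_0)\,\textbf{J}_{\textbf{X}_0}.
\end{equation}

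With this identity the conclusion is routine: right-multiplication by the invertible $\textbf{J}_{\textbf{X}_0}$ preserves the rank, so $\text{rank}(\textbf{O}_k^\eta)=\text{rank}(\textbf{O}_k^\xi)$ for all $k\geq 0$, and since both observability matrices have $m$ columns, the rank-nullity theorem gives $\text{dim}(\textbf{N}_k^\eta)=m-\text{rank}(\textbf{O}_k^\eta)=m-\text{rank}(\textbf{O}_k^\xi)=\text{dim}(\textbf{N}_k^\xi)$. I expect the main obstacle to be not the algebra but the justification that $\textbf{J}_{\textbf{X}}$ is genuinely invertible: the definition of an atlas given above only guarantees differentiability of transitions \emph{within} a single atlas, so to apply the chain rule to the cross-atlas maps $g_{\textbf{X}}$ I must invoke (or assume) that $\mathcal A^\eta$ and $\mathcal A^\xi$ are smoothly compatible, i.e. that their union is again an atlas on $\mathcal M$. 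Care is also needed to confirm that each composite is evaluated at the origin of the correct chart, which is exactly what the centering condition $\phi_{\textbf{X}}(\textbf{X})=\textbf{0}$ secures and what makes the telescoping clean.
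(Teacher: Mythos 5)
Your proposal is correct and follows essentially the same route as the paper's proof in Appendix A: both establish the Jacobian transformation rules for $\textbf{H}$ and $\textbf{F}$ under the change of atlas, telescope them to get $\textbf{O}_k^\xi(\textbf{X}_0)=\textbf{O}_k^\eta(\textbf{X}_0)\,\textbf{J}_{\textbf{X}_0}$ (the paper writes the equivalent identity $\textbf{O}^\eta_k=\textbf{O}^\xi_k\textbf{A}_{\textbf{X}_0}$ with $\textbf{A}_{\textbf{X}_0}=\textbf{J}_{\textbf{X}_0}^{-1}$), and conclude by rank invariance and rank--nullity. Your closing caveat about cross-atlas smooth compatibility is exactly the point the paper disposes of by citing \cite[Proposition 1.17]{Lee2012}.
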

\begin{proof}
    See Appendix \ref{equRank_proof}.
\end{proof}


Next, we will introduce the observability of EKF systems. Based on an atlas $\mathcal A^\epsilon$, we can implement the corresponding EKF algorithm for state estimation (see Sec. \ref{Sec_EKFmanifold}). By computing the Jacobian matrices (\ref{Jacobians}), the $k$-order observability matrix of this EKF system is \cite{con2}
 \begin{equation}\label{EKF_Ob_Matrix}
\hat{\mathcal{{\mathbf{O}}}}^\epsilon_{k}(\textbf{X}_0)=\left[
		\begin{array}{cccccc}
			\hat{\textbf{H}}^\epsilon_0\\
		\hat{\textbf{H}}^\epsilon_1\hat{\textbf{F}}^\epsilon_{0}\\
			\vdots\\
			\hat{\textbf{H}}_{k}^\epsilon\hat{\textbf{F}}^\epsilon_{k-1}\cdots\hat{\textbf{F}}^\epsilon_{0}
		\end{array}
		\right],
 \end{equation}
where all the Jacobian matrices are evaluated at the state estimates. The unobservable subspace of EKF system is denoted by $\hat{\textbf{N}}^\epsilon$ which is the right nullspace of $\hat{\textbf{O}}^\epsilon$\footnote{In the following, the representations of  observability matrix and unobservable subspace of EKF are denoted by the symbol `` $\hat{ }$ ".}.


According to \cite{con4}, in order to achieve consistent estimation, an EKF is required to satisfy the observability constraint described by the following definition.



 \begin{myDef}\label{OC_def}
The EKF algorithm based on an atlas $\mathcal A^\epsilon$ preserves the correct observability property if it has the same unobservable dimension as underlying dynamic system (\ref{gen_mot})(\ref{gen_obs}), i.e. 
\begin{equation}\label{EKF_OC}
    \text{dim}({\hat{\textbf{N}}}_k^\epsilon(\textbf{X}_{0|0}))=\text{dim}({\textbf{N}}_k^\epsilon(\textbf{X}_{0})), 
\end{equation}
$\forall \textbf{X}_{0|0},\textbf{X}_{0}\in \mathcal M,\ \forall k\geq0.$
\end{myDef}



\section{Sufficient and Necessary Conditions for Observability Maintenance}\label{Sec_SNC4OM}

In this section, we start with an example to introduce our findings that the observability discrepancy may be caused by the dependence of unobservable subspace on the state values. Then, from the perspective of the dependence of unobservable subspace on the state values, we theoretically prove some sufficient and necessary conditions for an EKF to preserve the correct observability, confirming our findings.

\subsection{Motivating Example}\label{example4SN}

We hope the EKF model for state estimation could share the same dimension of unobservable subspace as the underlying dynamic system. However, due to the inconsistent linearization points for Jacobians, the unobservable subspace of EKF model may have less dimensions than that of the underlying dynamic system. And the previous studies \cite{con2,con4,con1,con3} have shown that inconsistency issue will occur in such case.

Taking the point feature based SLAM as an example \cite{con4,con3}, the state space $\mathcal M$ containing all the states $\textbf{X}$ with $K$ point features is defined by
\begin{equation}
    \mathcal M=\{\textbf{X}=(\textbf{R},\textbf{p}^r,\textbf{p}^{f_1},\cdots,\textbf{p}^{f_K})\}\cong\mathbb{SO}(d)\times \mathbb{R}^d\times\mathbb{R}^{dK},
\end{equation}
where $d=2,3$ is the dimension of scenarios, $\textbf{R} \in \mathbb{SO}(d)$ and $\textbf{p}^r \in \mathbb{R}^d$ represent the rotation and the position of the robot, respectively, and $\textbf{p}^{f_j} \in \mathbb{R}^d$ is the position of the $j$-th point feature, all described in the global coordinate system\footnote{In the remaining of this paper, without losing generality, we sometimes assume that there is only one feature, i.e. $K=1$, to simplify the equations.}.

The process model (\ref{gen_mot}) is 
\begin{equation}\label{mot_PointSLAM}
\begin{array}{lll}
(\textbf{R}_{n},\textbf{p}^r_{n},\textbf{p}^{f}_{n})=(\textbf{R}_{n-1}{\textbf{R}}^u_{n-1},\textbf{p}^r_{n-1}+\textbf{R}_{n-1}{\textbf{p}}_{n-1}^u,\textbf{p}^{f}_{n-1}),
\end{array}    
\end{equation}
where ${\textbf{U}}_{n-1}=({\textbf{R}}^u_{n-1},{\textbf{p}}^u_{n-1})\in \mathbb{SO}(d)\times \mathbb{R}^d$ is the odometry. 

The observation model (\ref{gen_obs}) is 
\begin{equation}\label{obs_PointSLAM}
\begin{array}{lll}
    \textbf{z}_{n}
    &=\textbf{R}_{n}^{\top}(\textbf{p}^{f}_{n}-\textbf{p}^{r}_{n}).
\end{array}    
\end{equation}
Based on the standard atlas $\mathcal A^\eta=\{\phi_{\hat{\textbf{X}}}\}$, where the chart $\phi_{\hat{\textbf{X}}}$ is shown in (\ref{example_std}), the Std-EKF  ($\mathbb{SO}(d)$-EKF) can be implemented for state estimation. 

We can find that the unobservable subspaces of Std-EKF and underlying dynamic system are respectively
\begin{equation}\label{nullspace_2Dpoint}
    \hat{\textbf{N}}^\eta(\textbf{X}_{0|0})=\mathop{\text{span}} _{col.}\left[
		\begin{array}{cccccc}
			\textbf{0}_{1\times 2}\\
			 \textbf{I}_{2}\\
			 \textbf{I}_{2}
		\end{array}
		\right],\ \textbf{N}^\eta(\textbf{X}_0)=\mathop{\text{span}} _{col.}\left[
		\begin{array}{cccccc}
			1& \textbf{0}_{1\times 2}\\
			\textbf{J}\textbf{p}^r_0& \textbf{I}_{2}\\
			\textbf{J}\textbf{p}^f_0& \textbf{I}_{2}
		\end{array}
		\right],
\end{equation}
if $d=2$, where $\textbf{J}=\left[
	\begin{array}{cccccc}
		0& -1\\ 
		1& 0
	\end{array}
	\right]$;
and 
\begin{equation}\label{nullspace_3Dpoint}
\begin{array}{ll}
    \hat{\textbf{N}}^\eta(\textbf{X}_{0|0})&=\mathop{\text{span}} _{col.}\left[
		\begin{array}{cccccc}
			 \textbf{0}_{3\times3}\\
			 \textbf{I}_{3}\\
			 \textbf{I}_{3}
		\end{array}
		\right],\\
        \textbf{N}^\eta(\textbf{X}_0)&=\mathop{\text{span}} _{col.}\left[
		\begin{array}{cccccc}
			\textbf{I}_3& \textbf{0}_{3\times3}\\
			-(\textbf{p}^r_0)^\land& \textbf{I}_{3}\\
			-(\textbf{p}^f_0)^\land& \textbf{I}_{3}
		\end{array}
		\right],
        \end{array}
\end{equation}
if $d=3$, where $(\cdot)^\land$ is the skew operation.

As revealed and analyzed by previous studies \cite{con2,con4,con1,ic_SLAM1,con3}, the reason for the inconsistency issue in Std-EKFs is the difference in unobservable subspaces (see (\ref{nullspace_2Dpoint}) and (\ref{nullspace_3Dpoint})) between the EKF models and the underlying dynamic systems. 

In contrast, the state-of-the-art method, right invariant EKF (RI-EKF) \cite{RIEKF_2DSLAM,con3}, can naturally preserve the correct observability property and thus improve consistency. The atlas, $\mathcal A^\gamma=\{\pi_{\hat{\textbf{X}}}\}$, applied for RI-EKF is constructed based on a special Lie group $\mathbb{SE}_{K+1}(d)$\footnote{The definition of $\mathbb{SE}_{K+1}(d)$ can be found in \cite{RIEKF_2DSLAM,con3}.} as 
\begin{equation}\label{chart_RI_pointSLAM}
    \pi_{\hat{\textbf{X}}}(\textbf{X})=\text{log}^{\mathbb{SE}_{K+1}(d)}(\textbf{X}\hat{\textbf{X}}^{-1}),
\end{equation}
where $\textbf{X},\hat{\textbf{X}}^{-1}\in \mathbb{SE}_{K+1}(d)$, and the function $\text{log}^{\mathbb{SE}_{K+1}(d)}$ is the inverse of exponential map of Lie group $\mathbb{SE}_{K+1}(d)$. Based on this atlas $\mathcal A^\gamma$, the unobservable subspaces of RI-EKF and underlying dynamic system are respectively
\begin{equation}\label{nullspaceRI_2Dpoint}
    \hat{\textbf{N}}^\gamma(\textbf{X}_{0|0})=\textbf{N}^\gamma(\textbf{X}_0)=\mathop{\text{span}} _{col.}\left[
		\begin{array}{cccccc}
			1& \textbf{0}_{1\times 2}\\
			\textbf{0}_{2\times 1}& \textbf{I}_{2}\\
			\textbf{0}_{2\times 1}& \textbf{I}_{2}
		\end{array}
		\right],
\end{equation}
if $d=2$, and 
\begin{equation}\label{nullspaceRI_3Dpoint}
    \hat{\textbf{N}}^\gamma(\textbf{X}_{0|0})=\textbf{N}^\gamma(\textbf{X}_0)=\mathop{\text{span}} _{col.}\left[
		\begin{array}{cccccc}
			\textbf{I}_3& \textbf{0}_{3\times3}\\
			\textbf{0}_{3\times3}& \textbf{I}_{3}\\
			\textbf{0}_{3\times3}& \textbf{I}_{3}
		\end{array}
		\right],
\end{equation}
if $d=3$.

The EKF, as introduced in Sec. \ref{Sec_EKFmanifold}, in general linearizes at the different values (i.e. $\textbf{X}_{n|n}\neq \textbf{X}_{n|n-1}$) of state $\textbf{X}_n$ for update and the next step prediction. As a result, we can find that the unobservable directions (the first column of $\textbf{N}^\eta$ in (\ref{nullspace_2Dpoint}) and the first three columns of $\textbf{N}^\eta$ in (\ref{nullspace_3Dpoint})) that depend on the values of state $\textbf{X}_0$ are dismissed in unobservable subspaces of Std-EKF, leading to the observability discrepancy. In contrast, all the unobservable directions in (\ref{nullspaceRI_2Dpoint}) and (\ref{nullspaceRI_3Dpoint}) are independent of the values of state $\textbf{X}_0$, and they are preserved in the unobservable subspace of RI-EKF model.
Although this finding has not been rigorously proven in general, it helps some researchers to explain the observability discrepancy between Std-EKF and underlying dynamic system for other specific problems (such as \cite{Zhuqing_IEKFlocalization}). 

\subsection{Sufficient and Necessary Conditions}

As shown in the example in Sec. \ref{example4SN}, it seems that the observability discrepancy can be caused by the dependence of unobservable subspace on the state values. Is this conjecture valid in general? It also raises a further question: will the observability discrepancy be avoided if the unobservable subspace is independent of the state values?


To answer these questions, some sufficient and necessary conditions for observability maintenance are proved. 

Before presenting the theorems, there are a few assumptions for the considered systems in this paper.
\begin{assumption}\label{asmp1}
The system (\ref{gen_mot})(\ref{gen_obs}) is invertible, i.e. $f_{\textbf{u}}$ is invertible for any control input $\textbf{u}$.
\end{assumption}

\begin{assumption}\label{asmp2}
  For each integer $k \geq 0$, the $k$-order observability is independent of $\textbf{X}_0$ and the trajectory, i.e. $$\text{dim}(\mathcal O_k(\textbf{X}_0))=l_k \ \ (\text{constant}),$$
  $\forall \textbf{X}_0\in \mathcal M,\  \forall \{\textbf{u}_i|i=0,1,\cdots\}.$
\end{assumption}

\begin{assumption}\label{asmp3}
For all $k\geq1$, there is
$$\text{dim}(\mathcal O^\perp_k)=\text{dim}(\mathcal O^\perp_1).$$
\end{assumption}



\begin{remark}
In many practical dynamic systems, such as SLAM \cite{Obj1,PointLinePlane}, map based localization \cite{Zhuqing_IEKFlocalization}, localization and tracking \cite{ic_VILTT}, and legged robot \cite{ic_LegRob}, Assumptions \ref{asmp1}-\ref{asmp3} usually hold.
\end{remark}



\begin{theorem}\label{MT1}
Given an atlas $\mathcal A^\epsilon$, and 
assume that one of the following two conditions holds:

(i) $\text{null}(\textbf{H}^\epsilon(\textbf{X}_0))= \bar{\textbf{N}}^\epsilon_0, \forall \textbf{X}_0\in \mathcal M$, where $\bar{\textbf{N}}^\epsilon_0$ is independent \hspace*{0.23in} of the values of state $\textbf{X}_0$;

(ii) $\text{null}(\textbf{H}^\epsilon(\textbf{X}_1)\textbf{F}^\epsilon(\textbf{X}_1,\textbf{X}_0))\subset \text{null}(\textbf{H}^\epsilon(\textbf{X}_0)), \forall \textbf{X}_0,\textbf{X}_1\in \mathcal M.$

\noindent Then, if the corresponding EKF  preserves the correct observability property, 
the unobservable subspaces of this EKF algorithm and underlying dynamic system will be the same constant space, i.e. $$\hat{\textbf{N}}_k^\epsilon(\textbf{X}_{0|0})=\textbf{N}^\epsilon_k(\textbf{X}_0)=\bar{\textbf{N}}^\epsilon, \forall k \geq 1,$$ where $\bar{\textbf{N}}^\epsilon$ is a constant space unrelated to $\textbf{X}_{0|0}$ and $\textbf{X}_0$.

\end{theorem}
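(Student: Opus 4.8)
The plan is to collapse the infinite family of observability conditions to a single one-step statement, and then to let the correct-observability hypothesis pin down the resulting subspaces. First I would record two structural facts valid for any atlas. By Assumption~\ref{asmp1} each transition Jacobian $\textbf{F}^\epsilon$ is invertible. Appending rows to an observability matrix can only shrink its nullspace, so $\textbf{N}^\epsilon_{k+1}(\textbf{X}_0)\subseteq\textbf{N}^\epsilon_k(\textbf{X}_0)$ and likewise $\hat{\textbf{N}}^\epsilon_{k+1}\subseteq\hat{\textbf{N}}^\epsilon_k$. Assumption~\ref{asmp2} makes $\dim\textbf{N}^\epsilon_k$ a constant $m-l_k$, and Assumption~\ref{asmp3} forces these constants to stabilise for $k\ge1$; combined with the nesting this upgrades to the subspace identity $\textbf{N}^\epsilon_k(\textbf{X}_0)=\textbf{N}^\epsilon_1(\textbf{X}_0)$ for all $k\ge1$. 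The correct-observability hypothesis gives $\dim\hat{\textbf{N}}^\epsilon_k(\textbf{X}_{0|0})=\dim\textbf{N}^\epsilon_k(\textbf{X}_0)$, so the same dimension count together with the EKF nesting yields $\hat{\textbf{N}}^\epsilon_k(\textbf{X}_{0|0})=\hat{\textbf{N}}^\epsilon_1(\textbf{X}_{0|0})$ for $k\ge1$. Hence it suffices to analyse the first-order subspaces $\textbf{N}^\epsilon_1$ and $\hat{\textbf{N}}^\epsilon_1$.

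Next I would exploit the two alternative conditions to describe these one-step subspaces. Under~(i) the subspace $\mathrm{null}(\textbf{H}^\epsilon(\textbf{X}))=\bar{\textbf{N}}^\epsilon_0$ is the same at every state, whether true or estimated, so the zeroth observability block is $\bar{\textbf{N}}^\epsilon_0$ throughout and membership in $\textbf{N}^\epsilon_k$ (and in $\hat{\textbf{N}}^\epsilon_k$) reduces to requiring each transition product $\textbf{F}^\epsilon_{j-1,0}\textbf{v}$ (respectively $\hat{\textbf{F}}^\epsilon_{j-1,0}\textbf{v}$) to lie in $\bar{\textbf{N}}^\epsilon_0$. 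Under~(ii) I would first promote the stated inclusion to an equality: since $\textbf{F}^\epsilon$ is invertible and $\dim\,\mathrm{null}(\textbf{H}^\epsilon(\cdot))$ is constant by Assumption~\ref{asmp2}, both sides of the inclusion have equal dimension, whence $\mathrm{null}(\textbf{H}^\epsilon(\textbf{X}_1)\textbf{F}^\epsilon(\textbf{X}_1,\textbf{X}_0))=\mathrm{null}(\textbf{H}^\epsilon(\textbf{X}_0))$. Because the true trajectory evaluates $\textbf{H}^\epsilon$ and $\textbf{F}^\epsilon$ consistently at the same states, this equality peels off one transition at a time, makes every later observation block redundant, and collapses the system subspace to $\textbf{N}^\epsilon_k(\textbf{X}_0)=\mathrm{null}(\textbf{H}^\epsilon(\textbf{X}_0))$ for all $k\ge1$.

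The crux is to obtain the \emph{constancy} in the state and the EKF--system coincidence, and here the correct-observability hypothesis must do the real work because it is assumed for \emph{every} initial estimate $\textbf{X}_{0|0}$ and \emph{every} noisy measurement and control realisation. The essential difficulty, which is absent on the true trajectory, is that the EKF observability matrix evaluates $\hat{\textbf{H}}^\epsilon_j$ at the propagated estimate $\textbf{X}_{j|j-1}$ but $\hat{\textbf{F}}^\epsilon_j$ at the updated estimate $\textbf{X}_{j|j}$, and this mismatch is precisely what can shrink $\hat{\textbf{N}}^\epsilon_k$. I would argue that forbidding this shrinkage---which is exactly what $\dim\hat{\textbf{N}}^\epsilon_2=\dim\hat{\textbf{N}}^\epsilon_1$ enforces---together with the freedom of the update to move $\textbf{X}_{j|j}$ away from $\textbf{X}_{j|j-1}$ (so that the hypothesis constrains $\mathrm{null}(\textbf{H}^\epsilon)$ at nearby distinct states), forces the one-step nullspace to be invariant under the transition maps and ultimately forces $\mathrm{null}(\textbf{H}^\epsilon(\textbf{X}))$ to be independent of $\textbf{X}$. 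Once that nullspace is state-independent, the reductions of the previous paragraph identify $\hat{\textbf{N}}^\epsilon_1=\textbf{N}^\epsilon_1=\bar{\textbf{N}}^\epsilon$ as a single constant subspace, which propagates to all $k\ge1$ by the stabilisation already established.

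I expect this last step to be the main obstacle: converting a scalar dimension equality that must hold across all initial conditions and trajectories into genuine subspace identities and full state-independence, while carefully tracking the propagation-versus-update evaluation points. This is likely to require characterising how freely the estimate trajectory---and hence the products $\hat{\textbf{F}}^\epsilon_{j-1,0}$ and the evaluation points of $\hat{\textbf{H}}^\epsilon_j$---can be chosen through the initial estimate, the controls and the measurements, and then combining a dimension/nesting argument with the invertibility of the transitions. By contrast, the stabilisation reductions of the first paragraph and the inclusion-to-equality promotion under condition~(ii) are comparatively routine.
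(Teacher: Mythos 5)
Your preliminary reductions are sound and coincide with the paper's: Assumptions \ref{asmp2}--\ref{asmp3} plus the nesting of nullspaces give $\textbf{N}^\epsilon_k(\textbf{X}_0)=\textbf{N}^\epsilon_1(\textbf{X}_0)$, the correct-observability hypothesis transfers this to $\hat{\textbf{N}}^\epsilon_k(\textbf{X}_{0|0})=\hat{\textbf{N}}^\epsilon_1(\textbf{X}_{0|0})$ for $k\ge1$, and specialising to the noise-free trajectory $\textbf{X}_0=\textbf{X}_{0|0}$, $\textbf{X}_1=\textbf{X}_{1|0}$ identifies $\hat{\textbf{N}}^\epsilon_1$ with $\textbf{N}^\epsilon_1$. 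The genuine gap is that the central claim --- constancy of $\hat{\textbf{N}}^\epsilon_1$ in the state --- is only announced, never derived: you write that the hypothesis ``forces'' state-independence and then concede that converting the dimension equality into subspace identities ``is likely to require'' a characterisation you do not supply. Since that is precisely the content of the theorem, the proposal as written does not close. A secondary caution: your assertion under condition~(ii) that $\text{null}(\textbf{H}^\epsilon(\textbf{X}))$ must end up state-independent is a restatement of a consequence of the conclusion, not a lemma established on the way to it.

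The missing argument is short and is where conditions (i)/(ii) actually enter. From $\dim\hat{\textbf{N}}^\epsilon_2(\textbf{X}_{0|0})=\dim\hat{\textbf{N}}^\epsilon_1(\textbf{X}_{0|0})$ and nesting, $\hat{\textbf{N}}^\epsilon_2=\hat{\textbf{N}}^\epsilon_1$, so the $2$-order EKF observability matrix yields
\begin{equation*}
\textbf{F}^\epsilon(\textbf{X}_{1|0},\textbf{X}_{0|0})\,\hat{\textbf{N}}^\epsilon_1(\textbf{X}_{0|0})\subset \text{null}\left(\left[\begin{array}{c}\textbf{H}^\epsilon(\textbf{X}_{1|0})\\ \textbf{H}^\epsilon(\textbf{X}_{2|1})\textbf{F}^\epsilon(\textbf{X}_{2|1},\textbf{X}_{1|1})\end{array}\right]\right).
\end{equation*}
Condition (i) or (ii) is used exactly here, to bridge the mismatch between the propagated evaluation point $\textbf{X}_{1|0}$ in the first block and the updated point $\textbf{X}_{1|1}$ appearing in $\hat{\textbf{N}}^\epsilon_1(\textbf{X}_{1|1})$, giving the inclusion of the right-hand side in $\hat{\textbf{N}}^\epsilon_1(\textbf{X}_{1|1})$; invertibility of $\textbf{F}^\epsilon$ (Assumption \ref{asmp1}) and the constant dimension then upgrade this to
\begin{equation*}
\textbf{F}^\epsilon(\textbf{X}_{1|0},\textbf{X}_{0|0})\,\hat{\textbf{N}}^\epsilon_1(\textbf{X}_{0|0})=\hat{\textbf{N}}^\epsilon_1(\textbf{X}_{1|1}).
\end{equation*}
The left side depends only on $(\textbf{X}_{0|0},\textbf{X}_{1|0})$ while the right side depends only on $\textbf{X}_{1|1}$, and these can be varied independently of one another (the update can move $\textbf{X}_{1|1}$ freely while the pair $(\textbf{X}_{0|0},\textbf{X}_{1|0})$ is held fixed); hence both sides equal a single constant space $\bar{\textbf{N}}^\epsilon$, and the stabilisation from your first paragraph propagates this to all $k\ge1$. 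Your intuition about the propagation-versus-update mismatch is exactly right, but without this separation-of-variables identity the proof is incomplete.
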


\begin{proof}
See Appendix \ref{MT1_proof}.
\end{proof}
\vspace{2mm}

Through the example shown in Sec. \ref{example4SN},
we have found that the dependence of unobservable subspace on the state values may lead to the observability discrepancy. According to Theorem \ref{MT1}, we can affirm this conjecture. For the general systems holding certain conditions, an atlas cannot make its EKF satisfy the observability constraint (\ref{EKF_OC}) if its unobservable subspace of underlying dynamic system are dependent on the state values. Further, the next Theorem \ref{MT2} states that the inverse proposition also holds true.




\begin{theorem} \label{MT2}
    Given an atlas $\mathcal A^\epsilon$, if $\textbf{N}^\epsilon_k(\textbf{X}_0)\equiv\bar{\textbf{N}}^\epsilon, \forall k\geq1,$ is a constant space independent of the values of state $\textbf{X}_0$, then the corresponding EKF based on such atlas satisfies the observability constraint (\ref{EKF_OC}).
\end{theorem}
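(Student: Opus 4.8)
The plan is to distill two state-independent structural properties of the constant nullspace $\bar{\textbf{N}}^\epsilon$ out of the hypothesis, and then combine them with a monotonicity (squeeze) argument to pin down $\hat{\textbf{N}}^\epsilon_k(\textbf{X}_{0|0})$ exactly. Concretely, I want to prove (A) that $\textbf{H}^\epsilon(\textbf{X})\bar{\textbf{N}}^\epsilon=\textbf{0}$ for every $\textbf{X}\in\mathcal M$, and (B) that every one-step transition Jacobian preserves $\bar{\textbf{N}}^\epsilon$, i.e. $\textbf{F}^\epsilon(\textbf{X}_1,\textbf{X}_0)\bar{\textbf{N}}^\epsilon\subseteq\bar{\textbf{N}}^\epsilon$ whenever $\textbf{X}_1=f_{\textbf{u}_0}(\textbf{X}_0)$. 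The whole point of (A)--(B) is that they are completely decoupled from where the Jacobians are evaluated, so they apply verbatim to the EKF Jacobians $\hat{\textbf{H}}^\epsilon_i=\textbf{H}^\epsilon(\textbf{X}_{i|i-1})$ and $\hat{\textbf{F}}^\epsilon_j=\textbf{F}^\epsilon(\textbf{X}_{j+1|j},\textbf{X}_{j|j})$, despite the update-induced mismatch between the charts at $\textbf{X}_{j|j-1}$ and $\textbf{X}_{j|j}$ that is the usual source of inconsistency.

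First I would establish (A) and (B) from the block recursion $\textbf{O}^\epsilon_k(\textbf{X}_0)=[\,\textbf{H}^\epsilon(\textbf{X}_0);\ \textbf{O}^\epsilon_{k-1}(\textbf{X}_1)\textbf{F}^\epsilon(\textbf{X}_1,\textbf{X}_0)\,]$ with $\textbf{X}_1=f_{\textbf{u}_0}(\textbf{X}_0)$. Property (A) is immediate, since the top block row of $\textbf{O}^\epsilon_k(\textbf{X}_0)$ is $\textbf{H}^\epsilon(\textbf{X}_0)$, giving $\bar{\textbf{N}}^\epsilon=\textbf{N}^\epsilon_k(\textbf{X}_0)\subseteq\text{null}(\textbf{H}^\epsilon(\textbf{X}_0))$ for arbitrary $\textbf{X}_0$. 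For (B) I read the recursion at $k=2$: any $\textbf{a}\in\bar{\textbf{N}}^\epsilon=\textbf{N}^\epsilon_2(\textbf{X}_0)$ satisfies $\textbf{O}^\epsilon_1(\textbf{X}_1)\textbf{F}^\epsilon(\textbf{X}_1,\textbf{X}_0)\textbf{a}=\textbf{0}$, hence $\textbf{F}^\epsilon(\textbf{X}_1,\textbf{X}_0)\textbf{a}\in\textbf{N}^\epsilon_1(\textbf{X}_1)=\bar{\textbf{N}}^\epsilon$; letting $\textbf{X}_0$ and $\textbf{u}_0$ range freely yields (B) for every one-step pair.

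Next I would run the same reasoning on the EKF matrix $\hat{\textbf{O}}^\epsilon_k(\textbf{X}_{0|0})$. Composing (B) along the estimate trajectory gives $\hat{\textbf{F}}^\epsilon_{i-1}\cdots\hat{\textbf{F}}^\epsilon_0\,\bar{\textbf{N}}^\epsilon\subseteq\bar{\textbf{N}}^\epsilon$ for every $i$ — here it is essential that $\bar{\textbf{N}}^\epsilon$ is one and the same constant subspace in every chart, which is precisely what lets the composition survive the $\textbf{X}_{j|j-1}$ versus $\textbf{X}_{j|j}$ chart switch — and then (A) applied at $\textbf{X}_{i|i-1}$ annihilates the result, so each block row kills $\bar{\textbf{N}}^\epsilon$ and therefore $\bar{\textbf{N}}^\epsilon\subseteq\hat{\textbf{N}}^\epsilon_k(\textbf{X}_{0|0})$ for all $k$. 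To upgrade this inclusion to the dimensional equality demanded by (\ref{EKF_OC}), I would observe that for $k=1$ no update has yet intervened, so $\hat{\textbf{O}}^\epsilon_1(\textbf{X}_{0|0})$ is literally a ground-truth first-order observability matrix for initial state $\textbf{X}_{0|0}$ and control $\hat{\textbf{u}}_0$; by Assumption \ref{asmp2} its rank equals the constant $l_1$, so $\text{dim}(\hat{\textbf{N}}^\epsilon_1(\textbf{X}_{0|0}))=m-l_1=\text{dim}(\bar{\textbf{N}}^\epsilon)$, and together with $\bar{\textbf{N}}^\epsilon\subseteq\hat{\textbf{N}}^\epsilon_1$ this forces $\hat{\textbf{N}}^\epsilon_1(\textbf{X}_{0|0})=\bar{\textbf{N}}^\epsilon$. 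Since the EKF nullspaces are nested, $\hat{\textbf{N}}^\epsilon_k\subseteq\hat{\textbf{N}}^\epsilon_1$ for $k\geq1$, the squeeze $\bar{\textbf{N}}^\epsilon\subseteq\hat{\textbf{N}}^\epsilon_k\subseteq\hat{\textbf{N}}^\epsilon_1=\bar{\textbf{N}}^\epsilon$ yields $\hat{\textbf{N}}^\epsilon_k(\textbf{X}_{0|0})=\bar{\textbf{N}}^\epsilon=\textbf{N}^\epsilon_k(\textbf{X}_0)$ for all $k\geq1$, while the case $k=0$ follows at once from the constant rank of $\textbf{H}^\epsilon$ granted by Assumption \ref{asmp2}.

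I expect the main obstacle to be the reverse dimension bound rather than the inclusion $\bar{\textbf{N}}^\epsilon\subseteq\hat{\textbf{N}}^\epsilon_k$: a priori the EKF could shed rank and make its nullspace strictly larger, and bounding $\text{rank}(\hat{\textbf{O}}^\epsilon_k)$ from below directly for general $k$ is awkward because the inconsistent evaluation points destroy the clean single-trajectory structure. The resolution I rely on is to sidestep that computation — secure the exact dimension only at $k=1$, where the EKF and true observability matrices coincide, and then propagate it upward using the monotonicity of the nullspace filtration. The one delicate point demanding care is justifying that (B) may be composed across the prediction/update chart mismatch, and this is exactly where the state-independence of $\bar{\textbf{N}}^\epsilon$ is indispensable.
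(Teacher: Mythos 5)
Your proposal is correct and follows essentially the same route as the paper's own proof: both extract the state-independent invariances $\textbf{H}^\epsilon(\textbf{X})\bar{\textbf{N}}^\epsilon=\textbf{0}$ and $\textbf{F}^\epsilon(\textbf{X}_1,\textbf{X}_0)\bar{\textbf{N}}^\epsilon=\bar{\textbf{N}}^\epsilon$ from the hypothesis, compose them across the prediction/update chart mismatch along the estimate trajectory to get $\bar{\textbf{N}}^\epsilon\subseteq\hat{\textbf{N}}^\epsilon_k$, and pin down the $1$-order case by noting that $\hat{\textbf{O}}^\epsilon_1(\textbf{X}_{0|0})$ is a genuine ground-truth observability matrix before squeezing via the nested nullspace filtration. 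The only cosmetic difference is that you obtain $\hat{\textbf{N}}^\epsilon_1=\bar{\textbf{N}}^\epsilon$ by a dimension count through Assumption \ref{asmp2}, whereas the paper identifies the nullspace directly; both are valid.
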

\begin{proof}
    See Appendix \ref{MT2_proof}.
\end{proof}

Through Theorem \ref{MT1} and Theorem \ref{MT2}, we can directly obtain the following sufficient and necessary condition for observability maintenance of an EKF.

\begin{theorem}\label{MT_SN}
Given an atlas $\mathcal A^\epsilon$, and 
assume that one of the following two conditions holds:

(i) $\text{null}(\textbf{H}^\epsilon(\textbf{X}_0))= \bar{\textbf{N}}^\epsilon_0, \forall \textbf{X}_0\in \mathcal M$, where $\bar{\textbf{N}}^\epsilon_0$ is independent \hspace*{0.23in} of the values of state $\textbf{X}_0$;

(ii) $\text{null}(\textbf{H}^\epsilon(\textbf{X}_1)\textbf{F}^\epsilon(\textbf{X}_1,\textbf{X}_0))\subset \text{null}(\textbf{H}^\epsilon(\textbf{X}_0)), \forall \textbf{X}_0,\textbf{X}_1\in \mathcal M.$

\noindent Then, the corresponding EKF  preserves the correct observability property, 
if and only if $\textbf{N}^\epsilon_k(\textbf{X}_0)\equiv\bar{\textbf{N}}^\epsilon, \forall k\geq1,$ is a constant space unrelated to the values of state $\textbf{X}_0$.

\end{theorem}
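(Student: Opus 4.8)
The plan is to recognize that Theorem \ref{MT_SN} is nothing more than the logical conjunction of the two preceding results, so the proof decomposes cleanly along the two directions of the biconditional and invokes Theorem \ref{MT1} and Theorem \ref{MT2} respectively. No new computation is required; the substantive work has already been carried out in establishing those two theorems, so the task here is purely one of assembly and careful bookkeeping of hypotheses.

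First I would prove the necessity direction, namely ``preserves correct observability $\Rightarrow$ constant unobservable subspace''. Assuming one of conditions (i) or (ii) holds and that the corresponding EKF preserves the correct observability property in the sense of Definition \ref{OC_def} (i.e.\ satisfies (\ref{EKF_OC})), I would apply Theorem \ref{MT1} verbatim. Its conclusion yields $\hat{\textbf{N}}_k^\epsilon(\textbf{X}_{0|0})=\textbf{N}^\epsilon_k(\textbf{X}_0)=\bar{\textbf{N}}^\epsilon$ for all $k\geq 1$, with $\bar{\textbf{N}}^\epsilon$ a constant space unrelated to $\textbf{X}_0$ and $\textbf{X}_{0|0}$. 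In particular $\textbf{N}^\epsilon_k(\textbf{X}_0)\equiv\bar{\textbf{N}}^\epsilon$ is independent of the state values, which is exactly the asserted consequent.

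Next I would prove the sufficiency direction, ``constant unobservable subspace $\Rightarrow$ preserves correct observability''. Assuming $\textbf{N}^\epsilon_k(\textbf{X}_0)\equiv\bar{\textbf{N}}^\epsilon$ is a constant space for all $k\geq 1$, I would invoke Theorem \ref{MT2}, whose hypothesis is precisely this constancy and whose conclusion is that the EKF satisfies the observability constraint (\ref{EKF_OC}). It is worth noting that Theorem \ref{MT2} requires neither condition (i) nor (ii), so this implication holds unconditionally; the extra structural assumptions (i)/(ii) are consumed only by the necessity direction through Theorem \ref{MT1}. Combining the two directions establishes the biconditional claimed in Theorem \ref{MT_SN}.

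Since both building blocks are already available, there is no genuine analytic obstacle. The one point deserving care---what I would flag as the main subtlety rather than a difficulty---is making the asymmetry in the hypotheses explicit: conditions (i)/(ii) are genuinely needed for necessity (absent them, Theorem \ref{MT1} may fail and a state-dependent nullspace could in principle coexist with correct observability), whereas they are inert for sufficiency. Stating this clearly ensures the combined characterization is correctly scoped to systems satisfying (i) or (ii) and is not silently over-claimed.
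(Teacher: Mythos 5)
Your proposal is correct and matches the paper exactly: the paper derives Theorem \ref{MT_SN} directly by combining Theorem \ref{MT1} (for the necessity direction) and Theorem \ref{MT2} (for sufficiency), which is precisely your decomposition. Your added remark about the asymmetry of hypotheses (i)/(ii) being needed only for necessity is a valid and accurate observation, not a deviation.
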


Based on the above theorems, we can discover that if we could find an atlas that makes the unobservable subspace independent of the state values, the corresponding EKF will be able to satisfy the observability constraint.
In the next section, we exploit these conditions to propose a method to construct such atlases for the general state estimation problems.

\section{Affine EKF}
In this section, we develop the framework of consistent Aff-EKF which can be regarded as a modification of the inconsistent EKF (considered as Std-EKF) by proper affine transformations. At the beginning, the theoretical foundations about its feasibility are first derived. Then, the procedure to find such proper affine transformations is proposed. Finally, we analyzed the modification of Std-EKF by Aff-EKF from the perspective of covariance manipulation.   


\subsection{Theoretical Foundations}

For a dynamic system (\ref{gen_mot})(\ref{gen_obs}), suppose we have an atlas $\mathcal A^\eta=\{ \phi_{\hat{\textbf{X}}}\}$ (such as (\ref{example_std})) for the state space $\mathcal M$ regarded as the standard atlas that is easy to come up with. However, its corresponding EKF (Std-EKF) is typically unable to preserve the correct observability property, for example, SLAM \cite{con4,Obj1,con3}. 

To derive a consistent EKF, we focus on a series of affine atlases, $$\mathcal A^\xi=\{\psi_{\hat{\textbf{X}}}=\textbf{A}_{\hat{\textbf{X}}}\cdot \phi_{\hat{\textbf{X}}}|\phi_{\hat{\textbf{X}}}\in\mathcal A^\eta\},$$
where $\textbf{A}_{\hat{\textbf{X}}}$ is an invertible matrix. Utilizing the sufficient and necessary conditions for observability maintenance proved in Sec. \ref{Sec_SNC4OM}, we hope to find an affine atlas among them, such that the corresponding EKF satisfies the observability constraint.

At first, the relationship between the representations of the unobservable subspace via standard atlas and affine atlas is shown in Lemma \ref{Lemma1}.
\begin{myLemma} \label{Lemma1}
Suppose $\textbf{N}_k^{{\eta}}(\textbf{X}_0)$ and $\textbf{N}_k^{{\xi}}(\textbf{X}_0)$  are the $k$-order unobservable subspace of ideal system (\ref{gen_mot})(\ref{gen_obs}) represented by the basic atlas $\mathcal A^\eta=\{\phi_\textbf{X}\}$ and the affine atlas $\mathcal A^\xi=\{\textbf{A}_\textbf{X} \phi_\textbf{X}\}$, respectively. Then, the relationship between $\textbf{N}_k^{{\eta}}(\textbf{X}_0)$ and $\textbf{N}_k^{{\xi}}(\textbf{X}_0)$ is given by $$\textbf{N}_k^{{\xi}}(\textbf{X}_0)=\textbf{A}_{\textbf{X}_0}\textbf{N}_k^{{\eta}}(\textbf{X}_0),$$ where $\textbf{X}_0$ is the state at $0$-th step. 
\end{myLemma}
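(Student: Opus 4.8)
The plan is to reduce the statement to a single algebraic identity relating the two observability matrices $\textbf{O}^\xi_k(\textbf{X}_0)$ and $\textbf{O}^\eta_k(\textbf{X}_0)$, and then read off the nullspace relation. Since $\textbf{N}^\eta_k$ and $\textbf{N}^\xi_k$ are by definition the right nullspaces of these matrices, which are built from the Jacobians in (\ref{Jacobians}) evaluated along the ground-truth trajectory $\textbf{X}_0,\textbf{X}_1,\dots$, everything reduces to tracking how the per-step Jacobians $\textbf{H}^\epsilon_i$ and $\textbf{F}^\epsilon_j$ change when the atlas is replaced by $\psi_\textbf{X}=\textbf{A}_\textbf{X}\phi_\textbf{X}$.

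First I would establish the single-step transformation laws. Using $\psi^{-1}_\textbf{X}(\bm{\xi})=\phi^{-1}_\textbf{X}(\textbf{A}_\textbf{X}^{-1}\bm{\xi})$ and applying the chain rule to the definitions in (\ref{Jacobians}) at ground truth, I expect
$$\textbf{H}^\xi_i=\textbf{H}^\eta_i\,\textbf{A}_{\textbf{X}_i}^{-1},\qquad \textbf{F}^\xi_j=\textbf{A}_{\textbf{X}_{j+1}}\,\textbf{F}^\eta_j\,\textbf{A}_{\textbf{X}_j}^{-1},$$
where $\textbf{X}_{j+1}=f(\textbf{X}_j,\textbf{u}_j)$ is the ground-truth next state, i.e. the centre of the output chart. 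The measurement Jacobian picks up only a right factor $\textbf{A}_{\textbf{X}_i}^{-1}$ from differentiating the inverse chart, whereas the process Jacobian is additionally left-multiplied by $\textbf{A}_{\textbf{X}_{j+1}}$ coming from the change of the output chart.

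Next I would form the transition matrix $\textbf{F}^\xi_{i,0}=\textbf{F}^\xi_i\textbf{F}^\xi_{i-1}\cdots\textbf{F}^\xi_0$ and observe that it telescopes: each interior pair $\textbf{A}_{\textbf{X}_j}^{-1}\textbf{A}_{\textbf{X}_j}$ cancels, leaving only the boundary factors
$$\textbf{F}^\xi_{i,0}=\textbf{A}_{\textbf{X}_{i+1}}\,\textbf{F}^\eta_{i,0}\,\textbf{A}_{\textbf{X}_0}^{-1}.$$
Combining this with the measurement law, the $i$-th block row of the $\xi$-observability matrix becomes $\textbf{H}^\xi_i\textbf{F}^\xi_{i-1,0}=\textbf{H}^\eta_i\textbf{A}_{\textbf{X}_i}^{-1}\cdot\textbf{A}_{\textbf{X}_i}\textbf{F}^\eta_{i-1,0}\textbf{A}_{\textbf{X}_0}^{-1}=\textbf{H}^\eta_i\textbf{F}^\eta_{i-1,0}\textbf{A}_{\textbf{X}_0}^{-1}$, where the left factor $\textbf{A}_{\textbf{X}_i}$ produced by the transition matrix cancels the $\textbf{A}_{\textbf{X}_i}^{-1}$ of the measurement Jacobian (the case $i=0$ is the boundary row $\textbf{H}^\xi_0=\textbf{H}^\eta_0\textbf{A}_{\textbf{X}_0}^{-1}$). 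Since every block row acquires the same right factor $\textbf{A}_{\textbf{X}_0}^{-1}$, stacking them yields the clean identity $\textbf{O}^\xi_k(\textbf{X}_0)=\textbf{O}^\eta_k(\textbf{X}_0)\,\textbf{A}_{\textbf{X}_0}^{-1}$.

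Finally, because $\textbf{A}_{\textbf{X}_0}$ is invertible, $\textbf{O}^\eta_k\textbf{A}_{\textbf{X}_0}^{-1}\textbf{v}=\textbf{0}$ holds if and only if $\textbf{A}_{\textbf{X}_0}^{-1}\textbf{v}\in\text{null}(\textbf{O}^\eta_k)=\textbf{N}^\eta_k(\textbf{X}_0)$, i.e. if and only if $\textbf{v}\in\textbf{A}_{\textbf{X}_0}\textbf{N}^\eta_k(\textbf{X}_0)$, giving $\textbf{N}^\xi_k(\textbf{X}_0)=\textbf{A}_{\textbf{X}_0}\textbf{N}^\eta_k(\textbf{X}_0)$ as claimed. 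I expect the main obstacle to be purely bookkeeping: getting the chart-centre indices right in the process Jacobian so that the telescoping cancellation is exact, in particular confirming that the output chart at step $j$ is centred at $\textbf{X}_{j+1}$, which is precisely what supplies the left factor $\textbf{A}_{\textbf{X}_{j+1}}$ that later cancels against the next measurement Jacobian.
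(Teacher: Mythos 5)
Your proposal is correct and follows essentially the same route as the paper's proof: it establishes the per-step transformation laws $\textbf{H}^\xi_i=\textbf{H}^\eta_i\textbf{A}_{\textbf{X}_i}^{-1}$ and $\textbf{F}^\xi_{i-1}=\textbf{A}_{\textbf{X}_i}\textbf{F}^\eta_{i-1}\textbf{A}_{\textbf{X}_{i-1}}^{-1}$, telescopes them to obtain $\textbf{H}^\xi_n\textbf{F}^\xi_{n-1,0}=\textbf{H}^\eta_n\textbf{F}^\eta_{n-1,0}\textbf{A}_{\textbf{X}_0}^{-1}$, and reads off the nullspace relation via the invertibility of $\textbf{A}_{\textbf{X}_0}$. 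Your version simply makes explicit the chain-rule and telescoping bookkeeping that the paper leaves implicit.
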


\begin{proof}
    See Appendix \ref{Lemma1_proof}.
\end{proof}
\vspace{2mm}

Then, by Theorem \ref{MT2} and Lemma \ref{Lemma1}, we can directly obtain the following theorem (Theorem \ref{MT3}) which provides a clue for designing an observability preserved Aff-EKF.


\begin{theorem}\label{MT3}
Given an atlas $\mathcal A^\eta=\{\phi_{\textbf{X}}\}$, and assuming that ${\textbf{N}}^{{\eta}}_1(\textbf{X}_0)={\textbf{N}}_k^{{\eta}}(\textbf{X}_0)$, $\forall k\geq 1$, $\forall \textbf{X}_0\in \mathcal M$.
Then, if there is an invertible matrix $\textbf{A}_{\textbf{X}_0}$ for each state $\textbf{X}_0\in \mathcal M$ making $\textbf{A}_{\textbf{X}_0}{\textbf{N}}_1^{{\eta}}(\textbf{X}_0)$ be a constant space for any $\textbf{X}_0\in \mathcal M$, 
the corresponding EKF based on the atlas $\mathcal A^\xi=\{\textbf{A}_\textbf{X}\phi_{\textbf{X}}\}$ will naturally maintain the correct observability property.
\end{theorem}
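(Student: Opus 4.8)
The plan is to derive Theorem~\ref{MT3} as an immediate corollary of the two ingredients that precede it, namely Lemma~\ref{Lemma1} (how the affine transformation acts on the unobservable subspace) and Theorem~\ref{MT2} (the sufficient condition for observability maintenance). The strategy is to verify that the atlas $\mathcal A^\xi=\{\textbf{A}_{\textbf{X}}\phi_{\textbf{X}}\}$ satisfies the hypothesis of Theorem~\ref{MT2}, i.e. that $\textbf{N}^\xi_k(\textbf{X}_0)$ is a constant space independent of $\textbf{X}_0$ for all $k\geq 1$. Once this is shown, Theorem~\ref{MT2} directly yields that the corresponding EKF satisfies the observability constraint~(\ref{EKF_OC}), which is exactly the claim.

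First I would apply Lemma~\ref{Lemma1} to rewrite the $k$-order unobservable subspace in the affine atlas as $\textbf{N}^\xi_k(\textbf{X}_0)=\textbf{A}_{\textbf{X}_0}\textbf{N}^\eta_k(\textbf{X}_0)$. Next I would invoke the hypothesis ${\textbf{N}}^{\eta}_k(\textbf{X}_0)={\textbf{N}}^{\eta}_1(\textbf{X}_0)$ for all $k\geq1$ to collapse the $k$-dependence on the right-hand side, giving $\textbf{N}^\xi_k(\textbf{X}_0)=\textbf{A}_{\textbf{X}_0}\textbf{N}^\eta_1(\textbf{X}_0)$ uniformly in $k$. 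Finally, the assumed property that $\textbf{A}_{\textbf{X}_0}\textbf{N}^\eta_1(\textbf{X}_0)$ is a constant space $\bar{\textbf{N}}^\xi$ for every $\textbf{X}_0\in\mathcal M$ shows that $\textbf{N}^\xi_k(\textbf{X}_0)\equiv\bar{\textbf{N}}^\xi$ is independent of both $k$ and the state value. This is precisely the premise of Theorem~\ref{MT2}, so its conclusion gives the result.

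The one subtlety worth checking carefully, rather than any hard obstacle, is that Lemma~\ref{Lemma1} transforms the \emph{true-system} unobservable subspace $\textbf{N}^\eta_k$ and not the EKF nullspace $\hat{\textbf{N}}^\eta_k$; I would confirm that Theorem~\ref{MT2} indeed takes the true-system nullspace $\textbf{N}^\epsilon_k(\textbf{X}_0)$ (here with $\epsilon=\xi$) as its hypothesis, so the two results compose cleanly. I would also note, for correctness of the $k$-independence step, that the constancy of $\textbf{A}_{\textbf{X}_0}\textbf{N}^\eta_1(\textbf{X}_0)$ is assumed only for the $1$-order subspace, which is why the hypothesis ${\textbf{N}}^{\eta}_1={\textbf{N}}^{\eta}_k$ is needed to extend constancy to all orders. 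Since every step is a direct substitution, I expect no genuine difficulty; the proof is essentially a chaining of Lemma~\ref{Lemma1} and Theorem~\ref{MT2}, and the main care is simply bookkeeping the superscripts $\eta$ versus $\xi$ and confirming that the hypotheses line up.
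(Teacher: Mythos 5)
Your proposal is correct and follows exactly the route the paper intends: the paper states that Theorem \ref{MT3} is obtained directly by combining Lemma \ref{Lemma1} with Theorem \ref{MT2}, and your chaining of $\textbf{N}^\xi_k(\textbf{X}_0)=\textbf{A}_{\textbf{X}_0}\textbf{N}^\eta_k(\textbf{X}_0)=\textbf{A}_{\textbf{X}_0}\textbf{N}^\eta_1(\textbf{X}_0)=\bar{\textbf{N}}^\xi$ followed by an appeal to Theorem \ref{MT2} is precisely that argument, with the bookkeeping made explicit. Your noted subtlety (that both Lemma \ref{Lemma1} and the hypothesis of Theorem \ref{MT2} concern the true-system nullspace, so they compose cleanly) is resolved exactly as you expect.
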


\subsection{Affine EKF through Affine Transformations}


According to Theorem \ref{MT3}, we can change the Std-EKF through proper affine transformations to maintain the correct observability. The key point is to find a proper affine transformation $\textbf{A}_{\textbf{X}_0}$ for each state $\textbf{X}_0\in \mathcal M$ which eliminates the dependence of unobservable subspace on state. According to the basic property of invertible matrices (Lemma \ref{Lemma_elementM}) \cite[Theorem 1.1]{Intro_LA}, we can obtain such $\textbf{A}_{\textbf{X}_0}$ by the composition of a sequence of elementary row operations. 

\begin{myLemma}\label{Lemma_elementM}
    (\cite[Theorem 1.1]{Intro_LA}) The matrix $\textbf{A}$ is invertible if and only if it can be expanded into a product of elementary matrices , i.e.
$$\textbf{A}=\textbf{E}_a\textbf{E}_{a-1}\cdots\textbf{E}_1,$$
where $\textbf{E}_i$ is an elementary matrix left multiplication by which represents a row switching, row multiplication, or row addition transformation \cite{Intro_LA}.
\end{myLemma}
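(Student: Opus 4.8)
The plan is to prove the two implications of the biconditional separately, treating the reverse implication as essentially immediate and devoting the main effort to the forward one. For the ``if'' direction, I would first record the elementary fact that every elementary matrix $\textbf{E}_i$ is invertible and that its inverse is again an elementary matrix of the same type: a row switch is its own inverse, a row scaling by $c\neq 0$ inverts to scaling by $1/c$, and a row addition inverts to the opposite addition. Since a finite product of invertible matrices is invertible, any $\textbf{A}=\textbf{E}_a\textbf{E}_{a-1}\cdots\textbf{E}_1$ is immediately invertible, with $\textbf{A}^{-1}=\textbf{E}_1^{-1}\cdots\textbf{E}_a^{-1}$. This direction requires no further work.

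For the ``only if'' direction, suppose $\textbf{A}\in\mathbb{R}^{n\times n}$ is invertible. I would apply Gauss--Jordan elimination to $\textbf{A}$: each of the three admissible row operations is realized by left-multiplication by a corresponding elementary matrix, so carrying $\textbf{A}$ to its reduced row echelon form $\textbf{R}$ produces elementary matrices satisfying $\textbf{E}_a\cdots\textbf{E}_1\textbf{A}=\textbf{R}$. The key claim is then that $\textbf{R}=\textbf{I}$. To see this, note that $\textbf{R}$ is a product of invertible matrices (the $\textbf{E}_i$ together with $\textbf{A}$) and hence invertible; but a square matrix in reduced row echelon form is either the identity or has at least one zero row, and a zero row forces the matrix to be singular. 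Thus $\textbf{R}=\textbf{I}$, which gives $\textbf{E}_a\cdots\textbf{E}_1\textbf{A}=\textbf{I}$ and therefore $\textbf{A}=\textbf{E}_1^{-1}\cdots\textbf{E}_a^{-1}$, a product of elementary matrices by the inversion property already noted.

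The main obstacle is the step identifying $\textbf{R}$ with $\textbf{I}$, since this is where the invertibility hypothesis is genuinely used; the remainder is bookkeeping about elementary matrices and their inverses. I would justify the dichotomy ``$\textbf{R}=\textbf{I}$ or $\textbf{R}$ has a zero row'' directly from the definition of reduced row echelon form for a square matrix, as $n$ leading ones distributed among $n$ columns of an $n\times n$ matrix in this form force the identity, and I would justify ``a zero row implies singular'' by observing that such an $\textbf{R}$ maps into a proper coordinate hyperplane and hence cannot be surjective. Since this lemma is a classical result, I would alternatively simply cite \cite[Theorem 1.1]{Intro_LA} and omit the elimination argument altogether, but the self-contained sketch above is the route I would take if a proof were required.
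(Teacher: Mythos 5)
Your proof is correct: both directions are handled properly, and the key step (a square reduced row echelon form with no zero row must be the identity, while a zero row forces singularity) is exactly the standard Gauss--Jordan argument for this classical fact. The paper itself offers no proof of this lemma --- it is stated purely as a citation to \cite[Theorem 1.1]{Intro_LA} --- so your self-contained sketch goes beyond what the paper does, and your closing remark that one could simply cite the reference matches the paper's actual treatment.
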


Let's take 3D point feature based SLAM as an example. The unobservable subspace and its basis of underlying dynamic system under the standard atlas have been shown in (\ref{nullspace_3Dpoint}). We notice that the second and third block elements in the first block column are related to the state values. To eliminate these state values, we can respectively add to the second and third block row a multiple of the first block row as follows,
\begin{equation}\label{3Dpoint_eleOP1}
    \left[
		\begin{array}{cccccc}
			\textbf{I}_3& \textbf{0}_{3\times3}\\
			-(\textbf{p}^r)^\land& \textbf{I}_{3}\\
			-(\textbf{p}^f)^\land& \textbf{I}_{3}
		\end{array}
		\right]\underrightarrow{\text{L}_2+(\textbf{p}^r)^\land\cdot\text{L}_1} 
  \left[
		\begin{array}{cccccc}
			\textbf{I}_3& \textbf{0}_{3\times3}\\
			\textbf{0}_{3\times3}& \textbf{I}_{3}\\
			-(\textbf{p}^f)^\land& \textbf{I}_{3}
		\end{array}
		\right],
\end{equation}
\begin{equation}\label{3Dpoint_eleOP2}
    \left[
		\begin{array}{cccccc}
			\textbf{I}_3& \textbf{0}_{3\times3}\\
			\textbf{0}_{3\times3}& \textbf{I}_{3}\\
			-(\textbf{p}^f)^\land& \textbf{I}_{3}
		\end{array}
		\right]\underrightarrow{\text{L}_3+(\textbf{p}^f)^\land\cdot\text{L}_1} 
  \left[
		\begin{array}{cccccc}
			\textbf{I}_3& \textbf{0}_{3\times3}\\
			\textbf{0}_{3\times3}& \textbf{I}_{3}\\
			\textbf{0}_{3\times3}& \textbf{I}_{3}
		\end{array}
		\right], \hspace{9mm}
\end{equation}
where the $i$-th block row is labeled as $\textbf{L}_i$. The corresponding elementary matrices of (\ref{3Dpoint_eleOP1}) and (\ref{3Dpoint_eleOP2}) are, respectively,
\begin{equation}
 \begin{array}{rl}
    \textbf{E}_{\textbf{X},1}=\left[
		\begin{array}{cccc}
			\textbf{I}_3& \textbf{0}_{3\times3}& \textbf{0}_{3\times3}\\
			(\textbf{p}^r)^\land& \textbf{I}_3& \textbf{0}_{3\times3}\\
			\textbf{0}_{3\times3}&\textbf{0}_{3\times3}& \textbf{I}_3
		\end{array}
		\right],\\
      \textbf{E}_{\textbf{X},2}=\left[
		\begin{array}{cccc}
			\textbf{I}_3& \textbf{0}_{3\times3}& \textbf{0}_{3\times3}\\
			\textbf{0}_{3\times3}& \textbf{I}_3& \textbf{0}_{3\times3}\\
(\textbf{p}^f)^\land&\textbf{0}_{3\times3}& \textbf{I}_3
		\end{array}
		\right].
 \end{array}
 \end{equation}
 Therefore, the proper affine transformations can be obtained by
 \begin{equation}
\textbf{A}_{\textbf{X},1}=\textbf{E}_{\textbf{X},2}\textbf{E}_{\textbf{X},1}=\left[
		\begin{array}{cccc}
			\textbf{I}_3& \textbf{0}_{3\times3}& \textbf{0}_{3\times3}\\
			(\textbf{p}^r)^\land& \textbf{I}_3& \textbf{0}_{3\times3}\\
		(\textbf{p}^f)^\land&\textbf{0}_{3\times3}& \textbf{I}_3
		\end{array}
		\right].
 \end{equation}
 According to Theorem \ref{MT3}, the EKF based on the atlas, 
\begin{equation}\label{3Dpoint_AffAtlas1}
    \mathcal A^\xi_1=\{\psi_{\hat{\textbf{X}},1}=\textbf{A}_{\hat{\textbf{X}},1}\cdot \phi_{\hat{\textbf{X}}}| \phi_{\hat{\textbf{X}}}\in\mathcal A^\eta\},
\end{equation}
preserves the correct observability property for 3D point feature based SLAM.

We can find that the proper affine matrix $\textbf{A}_\textbf{X}$ is not unique for each state. If we perform the eliminations from another basis of (\ref{nullspace_3Dpoint}), such as 
\begin{equation}
\mathop{\text{span}} _{col.}\left[
		\begin{array}{cccccc}
			\textbf{I}_{3}& \textbf{0}_{3\times3}\\
			-(\textbf{p}^r)^\land & \textbf{R}\\
			-(\textbf{p}^f)^\land & \textbf{R}
		\end{array}
		\right](=\mathop{\text{span}} _{col.}\left[
		\begin{array}{cccccc}
			\textbf{I}_3& \textbf{0}_{3\times3}\\
			-(\textbf{p}^r)^\land& \textbf{I}_{3}\\
			-(\textbf{p}^f)^\land& \textbf{I}_{3}
		\end{array}
		\right]),
\end{equation}
the state values can be eliminated by
\begin{equation}
    \left[
		\begin{array}{cccccc}
			\textbf{I}_{3}& \textbf{0}_{3\times3}\\
			-(\textbf{p}^r)^\land & \textbf{R}\\
			-(\textbf{p}^f)^\land & \textbf{R}
		\end{array}
		\right]\underrightarrow{\substack{ \text{L}_2+(\textbf{p}^r)^\land\cdot\text{L}_1\\\text{L}_3+(\textbf{p}^f)^\land\cdot\text{L}_1\\\textbf{R}^\top\cdot\text{L}_2\\\textbf{R}^\top\cdot\text{L}_3\\}} 
  \left[
		\begin{array}{cccccc}
			\textbf{I}_3& \textbf{0}_{3\times3}\\
			\textbf{0}_{3\times3}& \textbf{I}_{3}\\
			\textbf{0}_{3\times3}& \textbf{I}_{3}
		\end{array}
		\right].
\end{equation}
Then, we obtain another proper affine atlas, 
\begin{equation}\label{3Dpoint_AffAtlas2}
    \mathcal A^\xi_2=\{\psi_{\hat{\textbf{X}},2}=\textbf{A}_{\hat{\textbf{X}},2}\cdot \phi_{\hat{\textbf{X}}}| \phi_{\hat{\textbf{X}}}\in\mathcal A^\eta\},
\end{equation}
where
$$\textbf{A}_{\textbf{X},2}=\left[
		\begin{array}{cccc}
			\textbf{I}_3& \textbf{0}_{3\times3}& \textbf{0}_{3\times3}\\
			\textbf{R}^\top(\textbf{p}^r)^\land& \textbf{R}^\top& \textbf{0}_{3\times3}\\
\textbf{R}^\top(\textbf{p}^f)^\land&\textbf{0}_{3\times3}& \textbf{R}^\top
		\end{array}
		\right].$$
We can easily check that both EKFs implemented by these two atlas (\ref{3Dpoint_AffAtlas1}) and (\ref{3Dpoint_AffAtlas2}) preserve the correct observability property, confirming our Theorem \ref{MT3}.

As shown in the above example, similar to Gaussian elimination method, the proposed procedure uses a sequence of elementary row operations to change the basis of unobservable subspace until it is independent of state values. The general procedure to find proper affine transformations is summarized in Alg. \ref{Alg3:FindA}. 


\begin{algorithm}[t]
	\hspace*{0.02in} {\bf Input:} Observability matrix $\textbf{O}^\eta(\textbf{X})$ of underlying system\\
	\hspace*{0.02in} {\bf Output:} Matrix $\textbf{A}_\textbf{X}$ for observability preserved Aff-EKF\\
	\hspace*{0.02in} {\bf Process:} \\	
\hspace*{0.1in}  1. $\textbf{N}^\eta(\textbf{X})\leftarrow \text{null}(\textbf{O}(\textbf{X}))$\\
\hspace*{0.1in}  2. Use elementary row operations to transform $\textbf{N}^\eta(\textbf{X})$ into \hspace*{0.25in} a  constant space\\
\hspace*{0.1in}  3.  Find corresponding elementary matrices  \\ \hspace*{0.25in} $\{\textbf{E}_i|i=1,\cdots,a\}$ for the elementary row operations \\
\hspace*{0.1in}  4.  $\textbf{A}_\textbf{X} \leftarrow \textbf{E}_a\textbf{E}_{a-1}\cdots\textbf{E}_1$
 \caption{Procedure for Finding $\textbf{A}_\textbf{X}$}
	\label{Alg3:FindA}
\end{algorithm}

\subsection{Affine EKF through Manipulating Covariance}

\begin{figure}[t]
	\centering
	\includegraphics[width=.46\textwidth]{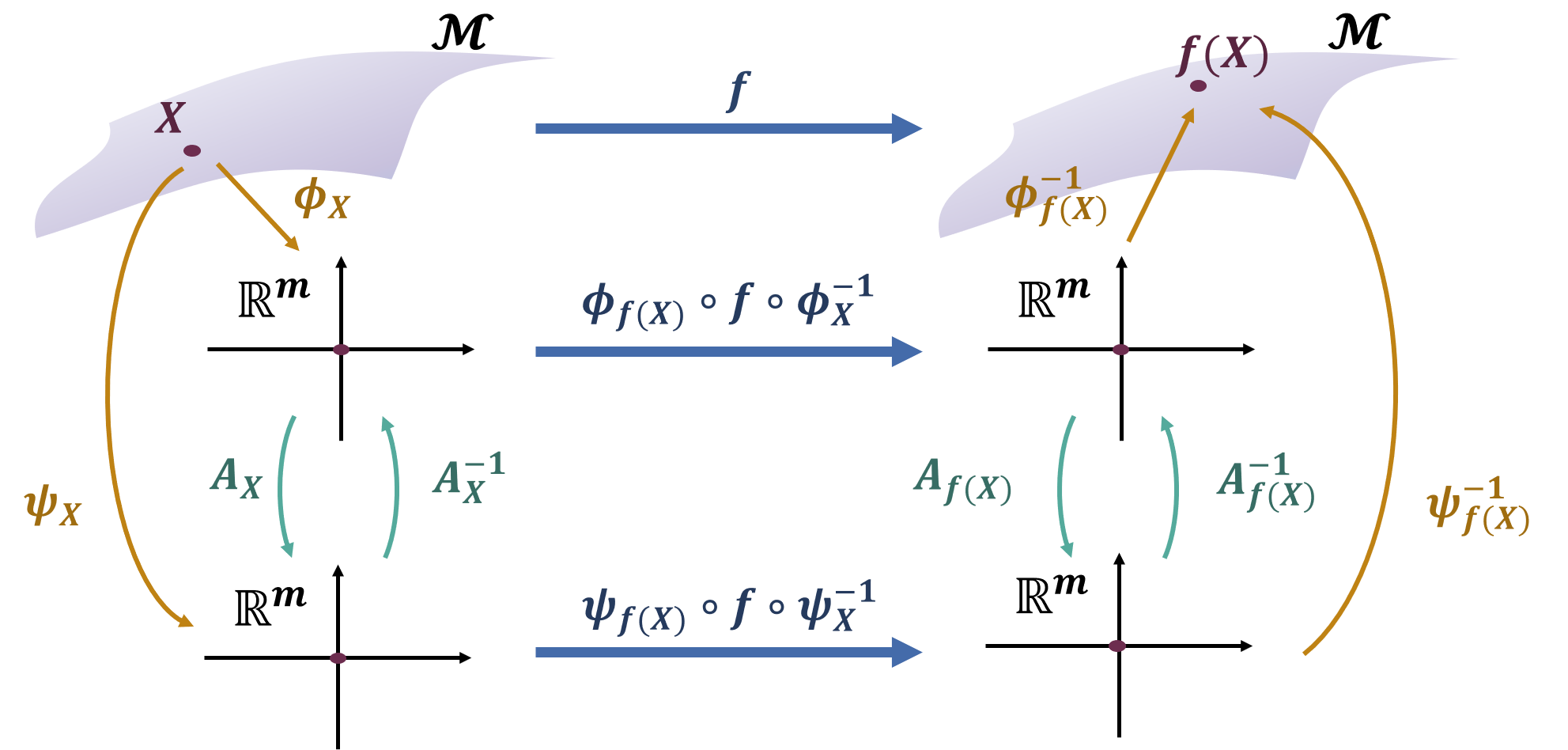}
	\caption{Representations of a Mapping on Manifold.}
	\label{AffRep}
\end{figure}

Suppose there are the standard atlas $\mathcal A^\eta=\{\phi_{\textbf{X}}\}$ and an affine atlas $\mathcal A^\xi=\{\psi_{\textbf{X}}=\textbf{A}_\textbf{X}\cdot\phi_{\textbf{X}}\}$. Their corresponding EKF are denoted as Std-EKF and Aff-EKF, respectively. Fig. \ref{AffRep} shows the relationship between different representations of a mapping by standard atlas and affine atlas \cite{Lee2012}. 

By (\ref{B_linear2}), the error states for Std-EKF and Aff-EKF are, respectively,
\begin{equation}
     \bm{\eta}=\phi_{\hat{\textbf{X}}}(\textbf{X}),\   \bm{\xi}=\psi_{\hat{\textbf{X}}}(\textbf{X}).
\end{equation}
The transformation between them are
\begin{equation}\label{aff_linear2}
\bm{\xi}=\textbf{A}_{\hat{\textbf{X}}}\bm{\eta},
\end{equation}
where \begin{equation}\label{A2}
\textbf{A}_{\hat{\textbf{X}}}=\frac{\partial \psi_{\hat{\textbf{X}}}\circ \phi_{\hat{\textbf{X}}}^{-1}}{\partial \bm{\eta}},
\end{equation}
is also the Jacobian of coordinate transformation from $\phi_{\hat{\textbf{X}}}$ to $\psi_{\hat{\textbf{X}}}$.

Then, by  (\ref{aff_linear2}), when describing the same uncertainty of $\hat{\textbf{X}}$, the equivalent covariance relationship between the representation in Std-EKF and the representation in Aff-EKF is given as follows,
\begin{equation}\label{equ_Cov}
    \textbf{P}^\xi=\textbf{A}_{\hat{\textbf{X}}}\textbf{P}^\eta(\textbf{A}_{\hat{\textbf{X}}})^{\top},
\end{equation}
where $\textbf{P}^\eta$ and $\textbf{P}^{\xi}$ are the covariance matrices represented by the standard atlas $\mathcal A^\eta$ and the affine atlas $\mathcal A^\xi$, respectively.

Further, for the Jacobians in the EKF algorithms, the transformations between the different representations on the standard atlas and the affine atlas are obtained by
\begin{equation}\label{Jaco_trans}
    \begin{array}{llll}
\hat{\textbf{F}}_{n-1}^{\xi}&=\textbf{A}_{\textbf{X}_{n|n-1}}\hat{\textbf{F}}_{n-1}^{\eta}\textbf{A}_{\textbf{X}_{n-1|n-1}}^{-1},\\
\hat{\textbf{G}}_{n-1}^{\xi}&=\textbf{A}_{\textbf{X}_{n|n-1}}\hat{\textbf{G}}_{n-1}^{\eta},\\
\hat{\textbf{H}}_{n}^{\xi}&=\hat{\textbf{H}}_{n}^{\eta}\textbf{A}_{\textbf{X}_{n|n-1}}^{-1}.\\
    \end{array}
\end{equation}

Next, we will explain how Aff-EKF revamps Std-EKF from a perspective of covariance update. Now considering one step EKF process. If we input the same measurements $\hat{\textbf{u}}_{n-1}$ and $\hat{\textbf{z}}_{n}$, the same initial state $\textbf{X}_{n-1|n-1}$ and its equivalent covariance $$\textbf{P}^\xi_{n-1}=\textbf{A}_{\textbf{X}_{n-1|n-1}}\textbf{P}^\eta_{n-1}(\textbf{A}_{\textbf{X}_{n-1|n-1}})^{\top}$$
into Std-EKF and Aff-EKF for one step estimation,
we can easily deduce that: (a) the updated states of this two EKFs are the same, i.e. $\textbf{X}^\xi_{n|n}=\textbf{X}^\eta_{n|n}$; and (b) the updated covariance matrices $\textbf{P}_{n}^{\eta}$ and $\textbf{P}_{n}^{\xi}$, which respectively describe the estimated uncertainty of $\textbf{X}^\eta_{n|n}$ and $\textbf{X}^\xi_{n|n}$, have the following relation
\begin{equation}
    \textbf{P}_{n}^{\xi}=\textbf{A}_{\textbf{X}_{n|n-1}}\textbf{P}_{n}^{\eta}(\textbf{A}_{\textbf{X}_{n|n-1}})^{\top}.
\end{equation}
However, the updated covariances $\textbf{P}_{n}^{\eta}$ and $\textbf{P}_{n}^{\xi}$ are not equivalent as shown in (\ref{equ_Cov}). This is because they are describing the uncertainties of state $\textbf{X}_{n|n}$ instead of $\textbf{X}_{n|n-1}$. Actually, if they are represented by a same atlas (such as standard atlas), we have \begin{equation}\label{alter_affEKF}
    \begin{array}{rl}
         \widetilde{\textbf{P}}_{n}^{\eta}&=\textbf{A}_{\textbf{X}_{n|n}}^{-1}\textbf{P}_{n}^{\xi}\textbf{A}_{\textbf{X}_{n|n}}^{-\top}  \\
         &=(\textbf{A}_{\textbf{X}_{n|n}}^{-1}\textbf{A}_{\textbf{X}_{n|n-1}}) \textbf{P}_{n}^{\eta} (\textbf{A}_{\textbf{X}_{n|n}}^{-1}\textbf{A}_{\textbf{X}_{n|n-1}})^{\top}.
    \end{array}
\end{equation}

From an intuitive viewpoint, the phenomenon of inconsistent issue is that the estimated covariance does not match the uncertainty of state estimate.
The Aff-EKF adjusts this covariance to be consistent with the uncertainty of state estimate through (\ref{alter_affEKF}). Therefore, without considering computational complexity, there is an alternative implementation of Aff-EKF, which is summarized in Alg. \ref{Alg2:Aff-EKF}.
It directly rectifies covariance matrix calculation of Std-EKF, and only requires changing a few lines of code from Std-EKF (the two lines in \textit{Covariance Modification} in Alg. \ref{Alg2:Aff-EKF}).

\begin{algorithm}[t]
	\hspace*{0.02in} {\bf Input:} $\textbf{X}_{n-1|n-1}$, $\widetilde{\textbf{P}}_{n-1}^{\eta}$, $\hat{\textbf{u}}_{n-1}$, $\hat{\textbf{z}}_{n}$\\
	\hspace*{0.02in} {\bf Output:} $\textbf{X}_{n|n}$, $\widetilde{\textbf{P}}^{\eta}_{n}$\\
	\hspace*{0.02in} {\bf Process:} \\
 \hspace*{0.1in} {1. Implementation of Std-EKF:}\\
	\hspace*{0.1in} $ \textbf{X}_{n|n},\ {{\textbf{P}}}_{n}^{\eta}  \leftarrow \textit{Std-EKF}(\textbf{X}_{n-1|n-1},{\textbf{P}}_{n-1}^{\eta},\hat{\textbf{u}}_{n-1},\hat{\textbf{z}}_{n})$\\
 \hspace*{0.1in} {2. Covariance Modification:}\\
\hspace*{0.1in} {$\textbf{L}_{n} \leftarrow \textbf{A}_{\textbf{X}_{n|n}}^{-1}\textbf{A}_{\textbf{X}_{n|n-1}}$\\
\hspace*{0.1in} $\widetilde{\textbf{P}}_{n}^{\eta} \leftarrow \textbf{L}_{n} \textbf{P}_{n}^{\eta} (\textbf{L}_{n})^{\top}$}
	\caption{The Alternative of Aff-EKF Algorithm}
	\label{Alg2:Aff-EKF}
\end{algorithm}

\section{Applications}\label{Sec_Applications}
We showcase our Aff-EKF method in three SLAM applications with different types of features: typical point features (Sec. \ref{Sec_PointSLAM}), point features on a horizontal plane (Sec. \ref{Sec_ConPointSLAM}), and plane features (Sec. \ref{Sec_PlaneSLAM}). For each application, the following subsections will sequentially present the corresponding problem formulations, observability analysis, implementation of Aff-EKF and computational complexity.

\subsection{Point Feature based SLAM}\label{Sec_PointSLAM}

We have taken typical 3D point feature based SLAM as an example in the previous sections. For such problem, the process model (\ref{gen_mot}) and the observation model (\ref{gen_obs}) are given by (\ref{mot_PointSLAM}) and (\ref{obs_PointSLAM}), respectively.


The standard atlas $\mathcal A^\eta$ can be obtained by (\ref{example_std}). As shown in (\ref{nullspace_2Dpoint}) and (\ref{nullspace_3Dpoint}), the corresponding Std-EKF are unable to preserve the correct observability property, leading to inconsistent issue. The observability preserved Aff-EKFs can be obtained by our proposed Alg. \ref{Alg3:FindA}. Derived from different bases of unobservable subspace, two Aff-EKFs, Aff-EKF v1 and Aff-EKF v2, have been given by (\ref{3Dpoint_AffAtlas1}) and (\ref{3Dpoint_AffAtlas2}) for this case, respectively. 

\textbf{Computational Complexity.} Suppose there are $K$ features in the state. Due to the different methods of implementation, the computational complexities of an Aff-EKF using Alg. \ref{EKF_Framework} and Alg. \ref{Alg2:Aff-EKF} may be different by stages.

For the propagation, the cost of our Aff-EKF v1 (Alg. \ref{EKF_Framework}) is $O(K^2)$, the cost of our Aff-EKF v2 (Alg. \ref{EKF_Framework}) is $O(K^3)$, and the costs of our Aff-EKF v1 (Alg. \ref{Alg2:Aff-EKF}) and our Aff-EKF v2 (Alg. \ref{Alg2:Aff-EKF}) are $O(K)$. For the update, the costs of these two different implementations of our Aff-EKF v1 and Aff-EKF v2 are $O(K^2)$. Besides, there are extra $O(K^2)$ and $O(K^3)$ costs for the modifications on covariance in Aff-EKF v1 (Alg. \ref{Alg2:Aff-EKF})\footnote{Here we utilize the unit matrices on the diagonal of the partitioned lower triangular matrix to reduce complexity.} and Aff-EKF v2 (Alg. \ref{Alg2:Aff-EKF}), respectively.

As a comparison, the costs of Std-EKF, FEJ-EKF, OC-EKF, DRI-EKF are $O(K)$ for the propagation and $O(K^2)$ for the update. The costs of RI-EKF are $O(K^2)$ for the propagation and $O(K^2)$ for the update. In addition, there is an extra $O(K^3)$ cost for computing the evaluation points in OC-EKF.

\subsection{SLAM with Plane Constraints on Point Features}\label{Sec_ConPointSLAM}

In this part, the 3D point features are considered to be on a horizontal plane, such as the ground or ceiling. 
Specifically, the z-coordinate values of these point features are the same. To the best of our knowledge, there is no consistent filter, especially RI-EKF, for the SLAM problem with such features. It is not easy to find RI-EKF for the considered problems, and it may not even exist. However, by our proposed Aff-EKF framework, we can easily derive a consistent EKF which naturally preserves the correct observability.

In the considered SLAM problem, the 3D point features, $\textbf{p}^f\in \mathbb{R}^3$, are supposed to be on the ground/ceiling, i.e.
\begin{equation} \label{feature_ground}
    \textbf{p}^f=(p^f_x,p^f_y,c)^\top,
\end{equation}
where the height $c$ is shared for all the features.
The dynamic systems can be given by (\ref{mot_PointSLAM})(\ref{obs_PointSLAM}), which is the same as the typical point feature cases.

\subsubsection{Scenario 1. (The Height c is Known)} 
If the horizontal plane is known in advance, the z-coordinate value of point features, $c$, will be given and thus does not need to be estimated. Hence, the states with $K$ 3D point features on the horizontal plane can be defined by
\begin{equation}
    \textbf{X}=(\textbf{R},\textbf{p}^r,{p}^{f_1}_x,{p}^{f_1}_y,\cdots,{p}^{f_K}_x,{p}^{f_K}_y)\in\mathbb{SO}(3)\times \mathbb{R}^3\times\mathbb{R}^{2K},
\end{equation}
where $\textbf{R}$ and $\textbf{p}^r$ represent the rotation and the position of the robot, respectively, and $({p}^{f_j}_x,{p}^{f_j}_y) \in \mathbb{R}^2$ are the x-coordinate and y-coordinate values of the $j$-th feature, all described in the global coordinate system. 

The standard atlas $\mathcal A^\eta=\{\phi_{\hat{\textbf{X}}}\}$ can be designed by
\begin{equation}
    \phi_{\hat{\textbf{X}}}(\textbf{X})\triangleq(\log^{\mathbb{SO}(3)}(\textbf{R}\hat{\textbf{R}}^\top),{\textbf{p}}^r-\hat{\textbf{p}}^r,{{p}}_x^{f}-\hat{{p}}_x^{f},{{p}}_y^{f}-\hat{{p}}_y^{f})^\top.
\end{equation}

Accordingly, the Jacobian matrices of by the atlas $\mathcal A^\eta$ are shown below.
\begin{equation}
\begin{array}{llll}
     {\textbf{F}}_{n-1}^{\eta}=\left[
		\begin{array}{cccc}
			\textbf{I}_3& \textbf{0}_{3\times 3}& \textbf{0}_{3\times 2}\\
		-(\textbf{p}^r_{n}-\textbf{p}^r_{n-1})^{\land}&  \textbf{I}_3& \textbf{0}_{3\times 2}\\
			\textbf{0}_{2\times 3}&  \textbf{0}_{2\times 3}& \textbf{I}_2
		\end{array}
		\right],\vspace{1mm}\\
     {\textbf{G}}_{n-1}^{\eta}=\left[
		\begin{array}{cccc}
			\textbf{R}_{n-1}&  \textbf{0}_{3\times 3}\\
			\textbf{0}_{3\times 3}&  \textbf{R}_{n-1}\\
			\textbf{0}_{2\times 3}&  \textbf{0}_{2\times 3}
		\end{array}
		\right],\\
     {\textbf{H}}_{n}^{\eta}=\textbf{R}_{n}^{{\top}}\left[\begin{array}{cccc}
     (\textbf{p}^f_{n}-\textbf{p}^r_{n})^{\land} & -\textbf{I}_3 & \left[
		\begin{array}{cccc}
			\textbf{I}_2 \\ \textbf{0}_{1\times 2}
		\end{array}
		\right]\end{array}
		\right].\\
     \end{array}
\end{equation}

Based on the standard atlas $\mathcal A^\eta$, the unobservable subspace of the underlying dynamic system at the state $\textbf{X}$ is obtained as
\begin{equation}
		{\textbf{N}}^{\eta}(\textbf{X})=\mathop{\text{span}} _{col.}\left[
		\begin{array}{cccccc}
			\textbf{0}_{2\times 2}& 
			\textbf{0}_{2\times 1} \\ \textbf{0}_{1\times 2}&1\\
			\textbf{I}_2 
			& \left[
		\begin{array}{cccc}
			-\textbf{p}^r_{y}\\ \textbf{p}^r_{x}
		\end{array}
		\right]\\\textbf{0}_{1\times 2}&0\\
			\textbf{I}_{2}& \left[
		\begin{array}{cccc}
			-\textbf{p}^f_{y}\\ \textbf{p}^f_{x}
		\end{array}
		\right]
		\end{array}
		\right],
	\end{equation}
	where $\textbf{p}^r=({p}^r_{x},{p}^r_{y},{p}^r_{z})$ and $\textbf{p}^f=({p}^f_{x},{p}^f_{y},c)$ are respectively the (true) positions of robot pose and features on the plane. 
We can find that this unobservable subspace $\textbf{N}^\eta$ represented by the standard atlas is related to the state values. Therefore, according to Theorem \ref{MT1}, Std-EKF is unable to preserve the correct observability property.

Through the procedure (Alg. \ref{Alg3:FindA}),
\begin{equation}
    \left[
		\begin{array}{cccccc}
			\textbf{0}_{2\times 2}& 
			\textbf{0}_{2\times 1} \\ \textbf{0}_{1\times 2}&1\\
			\textbf{I}_2 
			& \left[
		\begin{array}{cccc}
			-\textbf{p}^r_{y}\\ \textbf{p}^r_{x}
		\end{array}
		\right]\\\textbf{0}_{1\times 2}&0\\
			\textbf{I}_{2}& \left[
		\begin{array}{cccc}
			-\textbf{p}^f_{y}\\ \textbf{p}^f_{x}
		\end{array}
		\right]
		\end{array}
		\right]\underrightarrow{\substack{\text{L}_3-\left[
		\begin{array}{cccc}
			-\textbf{p}^r_{y}\\ \textbf{p}^r_{x}
		\end{array}
		\right]\cdot\text{L}_2\\\text{L}_5-\left[
		\begin{array}{cccc}
			-\textbf{p}^f_{y}\\ \textbf{p}^f_{x}
		\end{array}
		\right]\cdot\text{L}_2}}\left[
		\begin{array}{cccccc}
			\textbf{0}_{2\times 2}& 
			\textbf{0}_{2\times 1} \\ \textbf{0}_{1\times 2}&1\\
			\textbf{I}_2 
			& \textbf{0}_{2\times 1}\\\textbf{0}_{1\times 2}&0\\
			\textbf{I}_{2}& \textbf{0}_{2\times 1}
		\end{array}
		\right], 
\end{equation}
we can obtain an affine atlas,
\begin{equation}\label{S1_conpoint_AffAtlas}
    \mathcal A^\xi=\{\psi_{\hat{\textbf{X}}}=\textbf{A}_{\hat{\textbf{X}}}\cdot \phi_{\hat{\textbf{X}}}| \phi_{\hat{\textbf{X}}}\in\mathcal A^\eta\},
\end{equation}
where
$$\textbf{A}_{\textbf{X}}=\left[
		\begin{array}{ccccccc}
			\textbf{I}_2& \textbf{0}_{2\times 1}& \textbf{0}_{2\times 2}&\textbf{0}_{2\times 1}&\textbf{0}_{2\times 2}\\
   \textbf{0}_{1\times 2}& 1& \textbf{0}_{1\times 2}&0&\textbf{0}_{1\times 2}\\
   \textbf{0}_{2\times 2}&
			\left[
		\begin{array}{ccccccc}
			\textbf{p}^r_{y}\\ -\textbf{p}^r_{x}
		\end{array}
		\right]& \textbf{I}_2& \textbf{0}_{2\times 1}& \textbf{0}_{2\times 2}\\
  \textbf{0}_{1\times 2}& 0& \textbf{0}_{1\times 2}&1&\textbf{0}_{1\times 2}\\
			\textbf{0}_{2\times 2}&
			\left[
		\begin{array}{cccccccc}
			\textbf{p}^f_{y}\\ -\textbf{p}^f_{x}
		\end{array}
		\right]& \textbf{0}_{2\times 2}&\textbf{0}_{2\times 1}&\textbf{I}_2
		\end{array}
		\right].$$
Then, by Theorem \ref{MT3}, the corresponding Aff-EKF of (\ref{S1_conpoint_AffAtlas}) naturally preserves the correct observability property for the considered problem.

\subsubsection{Scenario 2. (The Height c is Unknown)}
If the horizontal plane is unknown, the z-coordinate value of point features, $c$, is required to be estimated. Hence, the states with $K$ 3D point features on the horizontal plane can be defined by
\begin{equation}
    \textbf{X}=(\textbf{R},\textbf{p}^r, c, {p}^{f_1}_x,{p}^{f_1}_y\cdots, {p}^{f_K}_x,{p}^{f_K}_y)\in\mathbb{SO}(3)\times \mathbb{R}^3\times\mathbb{R}^{1+2K}.
\end{equation}
The standard atlas $\mathcal A^\eta=\{\phi_{\hat{\textbf{X}}}\}$ can be designed by
\begin{equation}
    \phi_{\hat{\textbf{X}}}(\textbf{X})\triangleq(\log^{\mathbb{SO}(3)}(\textbf{R}\hat{\textbf{R}}^\top),{\textbf{p}}^r-\hat{\textbf{p}}^r, c-\hat{c}, {{p}}_x^{f}-\hat{{p}}_x^{f},{{p}}_y^{f}-\hat{{p}}_y^{f})^\top.
\end{equation}

Accordingly, the Jacobian matrices by the atlas $\mathcal A^\eta$ are shown below.
\begin{equation}
\begin{array}{llll}
     {\textbf{F}}^{\eta}_{n-1}=\left[
		\begin{array}{cccc}
			\textbf{I}_3& \textbf{0}_{3\times 3}&\textbf{0}_{3\times 1}& \textbf{0}_{3\times 2}\\
		-(\textbf{p}^r_{n}-\textbf{p}^r_{n-1})^{\land}&  \textbf{I}_3& \textbf{0}_{3\times 1}& \textbf{0}_{3\times 2}\\
  \textbf{0}_{1\times 3}&\textbf{0}_{1\times 3}&1&\textbf{0}_{1\times 2}\\
			\textbf{0}_{2\times 3}&  \textbf{0}_{2\times 3}&\textbf{0}_{2\times 1}& \textbf{I}_2
		\end{array}
		\right],\vspace{1mm}\\ 
     {\textbf{G}}^{\eta}_{n-1}=\left[
		\begin{array}{cccc}
			\textbf{R}_{n-1}&  \textbf{0}_{3\times 3}\\
			\textbf{0}_{3\times 3}&  \textbf{R}_{n-1}\\
   \textbf{0}_{1\times 3}&\textbf{0}_{1\times 3}\\
			\textbf{0}_{2\times 3}&  \textbf{0}_{2\times 3}
		\end{array}
		\right],\vspace{1mm}\\
     {\textbf{H}}^{\eta}_{n}=\textbf{R}_{n}^{{\top}}
     \left[\begin{array}{cccc}
     (\textbf{p}_{n}^f-\textbf{p}_{n}^r)^{\land} & -\textbf{I}_3 & \left[
		\begin{array}{cccc}
			\textbf{0}_{2\times 1} \\ 1
		\end{array}
		\right]&\left[
		\begin{array}{cccc}
			\textbf{I}_2 \\ \textbf{0}_{1\times 2}
		\end{array}
		\right]\end{array}
		\right].\\
     \end{array}
\end{equation}
Based on the standard atlas $\mathcal A^\eta$, the unobservable subspace for the underlying dynamic system at the state $\textbf{X}$ is obtained as
\begin{equation}
		{\textbf{N}}^{\eta}(\textbf{X})=\mathop{\text{span}} _{col.}\left[
		\begin{array}{cccccc}
			\textbf{0}_{2\times 3}& 
			\textbf{0}_{2\times 1} \\ \textbf{0}_{1\times 3}&1\\
			\textbf{I}_3 
			& \left[
		\begin{array}{cccc}
			-\textbf{p}^r_{y}\\ \textbf{p}^r_{x}\\0
		\end{array}
		\right]\\ \left[\begin{array}{cccccc}\textbf{0}_{1\times 2} &1\end{array}\right]&0\\
			\left[\begin{array}{cccccc}\textbf{I}_{2} &\textbf{0}_{2\times 1}\end{array}\right]& \left[
		\begin{array}{cccc}
			-\textbf{p}^f_{y}\\ \textbf{p}^f_{x}
		\end{array}
		\right]
		\end{array}
		\right].
	\end{equation}
Similarly, this representation of unobservable subspace $\textbf{N}^\eta$ by the standard atlas is also related to the state values. Therefore, according to Theorem \ref{MT1}, the Std-EKF is unable to satisfy the observability constraint.
Through the procedure (Alg. \ref{Alg3:FindA}), 
\begin{equation}
\begin{array}{cc}
    \left[
		\begin{array}{cccccc}
			\textbf{0}_{2\times 3}& 
			\textbf{0}_{2\times 1} \\ \textbf{0}_{1\times 3}&1\\
			\textbf{I}_3 
			& \left[
		\begin{array}{cccc}
			-\textbf{p}^r_{y}\\ \textbf{p}^r_{x}\\0
		\end{array}
		\right]\\ \left[\begin{array}{cccccc}\textbf{0}_{1\times 2} &1\end{array}\right]&0\\
			\left[\begin{array}{cccccc}\textbf{I}_{2} &\textbf{0}_{2\times 1}\end{array}\right]& \left[
		\begin{array}{cccc}
			-\textbf{p}^f_{y}\\ \textbf{p}^f_{x}
		\end{array}
		\right]
		\end{array}
		\right]\\
  \underrightarrow{{\substack{\text{L}_3-\left[
		\begin{array}{cccc}
			-\textbf{p}^r_{y}\\ \textbf{p}^r_{x}\\0
		\end{array}
		\right]\cdot\text{L}_2\\\text{L}_5-\left[
		\begin{array}{cccc}
			-\textbf{p}^f_{y}\\ \textbf{p}^f_{x}
		\end{array}
		\right]\cdot\text{L}_2}}}
  \left[
		\begin{array}{cccccc}
			\textbf{0}_{2\times 3}& 
			\textbf{0}_{2\times 1} \\ \textbf{0}_{1\times 3}&1\\
			\textbf{I}_3 
			& \textbf{0}_{3\times 1}\\ \left[\begin{array}{cccccc}\textbf{0}_{1\times 2} &1\end{array}\right]&0\vspace{1mm}\\
			\left[\begin{array}{cccccc}\textbf{I}_{2} &\textbf{0}_{2\times 1}\end{array}\right]& \textbf{0}_{2\times 1}
		\end{array}
		\right],
  \end{array}
\end{equation}
we can obtain an affine atlas,
\begin{equation}\label{S2_conpoint_AffAtlas}
    \mathcal A^\xi=\{\psi_{\hat{\textbf{X}}}=\textbf{A}_{\hat{\textbf{X}}}\cdot \phi_{\hat{\textbf{X}}}| \phi_{\hat{\textbf{X}}}\in\mathcal A^\eta\},
\end{equation}
where
$$\textbf{A}_{\textbf{X}}=\left[
		\begin{array}{ccccccc}
			\textbf{I}_2& \textbf{0}_{2\times 1}& \textbf{0}_{2\times 3}&\textbf{0}_{2\times 1}&\textbf{0}_{2\times 2}\\
   \textbf{0}_{1\times 2}& 1& \textbf{0}_{1\times 3}&0&\textbf{0}_{1\times 2}\\
   \textbf{0}_{3\times 2}&
			\left[
		\begin{array}{ccccccc}
			\textbf{p}^r_{y}\\ -\textbf{p}^r_{x}\\0
		\end{array}
		\right]& \textbf{I}_3& \textbf{0}_{3\times 1}& \textbf{0}_{3\times 2}\\
  \textbf{0}_{1\times 2}& 0& \textbf{0}_{1\times 3}&1&\textbf{0}_{1\times 2}\\
			\textbf{0}_{2\times 2}&
			\left[
		\begin{array}{cccccccc}
			\textbf{p}^f_{y}\\ -\textbf{p}^f_{x}
		\end{array}
		\right]& \textbf{0}_{2\times 3}&\textbf{0}_{2\times 1}&\textbf{I}_2
		\end{array}
		\right].$$
Then, by Theorem \ref{MT3}, the corresponding Aff-EKF of (\ref{S2_conpoint_AffAtlas}) satisfies the observability constraint for the considered problem.

In summary, the observability discrepancy appears in Std-EKFs for both scenarios of the considered problems. As a consequence, theoretically there exists inconsistency issue in Std-EKFs.
 
 As far as we know, currently, there is no invariant EKF applied to these constrained feature problems. In the RI-EKF for the typical 3D point features, the group operation of $\mathbb{SE}_{K+1}(3)$ cannot remain the feature constraints (the height of features are the same). As a result, the RI-EKF (\ref{chart_RI_pointSLAM}) for typical point features is not compatible for those constrained features.

 In contrast, based on the proposed Alg. \ref{Alg3:FindA}, we can easily obtain
the observability preserved Aff-EKFs with (\ref{S1_conpoint_AffAtlas}) if the height is known or with (\ref{S2_conpoint_AffAtlas}) if the height is unknown.

\textbf{Computational Complexity.} Suppose there are $K$ features in the state. Then, for the propagation, the cost of our Aff-EKF (Alg. \ref{EKF_Framework}) is $O(K^2)$, and the cost of our Aff-EKF (Alg. \ref{Alg2:Aff-EKF}) is $O(K)$. For the update, the costs of these two different implementations of our Aff-EKF are $O(K^2)$. In addition, there is an extra $O(K^2)$ for the modifications on covariance in Aff-EKF (Alg. \ref{Alg2:Aff-EKF}). As a comparison, the costs of Std-EKF are $O(K)$ for the propagation and $O(K^2)$ for the update.



\subsection{Plane Feature based SLAM}\label{Sec_PlaneSLAM}


In the following considered SLAM problem, the plane features, $\textbf{p}^f\in \mathbb{R}^3$, are represented by the following closest point form \cite{PointLinePlane}:
\begin{equation} \label{feature_plane}
    \textbf{p}^f=d^f\textbf{n}^f,
\end{equation}
where $d^f\geq 0$ is the shortest distance from origin to the plane and $\textbf{n}^f\in \mathbb{R}^3$ is a norm vector representing the direction of plane.
The states with $K$ plane features can be defined by
\begin{equation}
    \textbf{X}=(\textbf{R},\textbf{p}^r,\textbf{p}^{f_1},\cdots,\textbf{p}^{f_K})\in\mathbb{SO}(3)\times \mathbb{R}^3\times\mathbb{R}^{3K},
\end{equation}
where $\textbf{R}$ and $\textbf{p}^r$ represent the rotation and the position of the robot, respectively, and $\textbf{p}^{f_j} \in \mathbb{R}^3$ is the parameter of the $j$-th plane feature of the form (\ref{feature_plane}), all described in the global coordinate system. 

The process model (\ref{gen_mot}) is 
\begin{equation}\label{mot_PlaneSLAM}
\begin{array}{lll}
(\textbf{R}_{n},\textbf{p}^r_{n},\textbf{p}^{f}_{n})=(\textbf{R}_{n-1}{\textbf{R}}^u_{n-1},\textbf{p}^r_{n-1}+\textbf{R}_{n-1}{\textbf{p}}_{n-1}^u,\textbf{p}^{f}_{n-1}),
\end{array}    
\end{equation}
where ${\textbf{U}}_{n-1}=({\textbf{R}}^u_{n-1},{\textbf{p}}^u_{n-1})\in \mathbb{SO}(3)\times \mathbb{R}^3$ is the odometry. 
And according to \cite{PointLinePlane}, the observation model (\ref{gen_obs}) is given by
\begin{equation}\label{obs_PlaneSLAM}
\begin{array}{lll}
    \textbf{z}_{n}
    &=(d^f_{n}-(\textbf{p}^r_{n})^\top\textbf{n}^f_{n})\textbf{R}^\top_{n}\textbf{n}^f_{n}.
\end{array}    
\end{equation}

We can easily find an atlas of the form (\ref{example_std}), which is regarded as the standard atlas $\mathcal A^\eta$. Then, the corresponding Jacobian matrices by the atlas $\mathcal A^\eta$ are shown below.
\begin{equation}
\begin{array}{llll}
     \textbf{F}_{n-1}^{\eta}&=\left[
		\begin{array}{cccc}
			\textbf{I}_3& \textbf{0}_{3\times 3}& \textbf{0}_{3\times 3}\\
		-(\textbf{p}_{n}^{r}-\textbf{p}_{n-1}^{r})^{\land}&  \textbf{I}_3& \textbf{0}_{3\times 3}\\
			\textbf{0}_{3\times 3}&  \textbf{0}_{3\times 3}& \textbf{I}_3
		\end{array}
		\right],\vspace{1mm}\\
     \textbf{G}_{n-1}^{\eta}&=\left[
		\begin{array}{cccc}
			\textbf{R}_{n-1}&  \textbf{0}_{3\times 3}\\
			\textbf{0}_{3\times 3}&  \textbf{R}_{n-1}\\
			\textbf{0}_{3\times 3}&  \textbf{0}_{3\times 3}
		\end{array}
		\right],\vspace{1mm}\\
     \textbf{H}_n^{\eta}&=\textbf{R}_{n}^{{\top}}\left[\begin{array}{cccc}
     \textbf{H}_{n,1} & \textbf{H}_{n,2} & \textbf{H}_{n,3}\end{array}
		\right],\\
     \end{array}
\end{equation}
where
$$\textbf{H}_{n,1}=(d^f-(\textbf{p}_{n}^r)^\top\textbf{n}^f)(\textbf{n}^f)^\land,$$
$$\textbf{H}_{n,2}=-\textbf{n}^f(\textbf{n}^f)^\top,$$
$$\textbf{H}_{n,3}=\frac{(d^f-(\textbf{p}_{n}^r)^\top\textbf{n}^f)\textbf{I}_3-\textbf{n}^f(\textbf{p}_{n}^r)^\top+2(\textbf{n}^f)^\top\textbf{p}_{n}^r\textbf{n}^f(\textbf{n}^f)^\top}{d^f}.$$

Based on the standard atlas $\mathcal A^\eta$, the unobservable subspace of the underlying dynamic system at the state $\textbf{X}$ is obtained as
\begin{equation}
		{\textbf{N}}^{\eta}(\textbf{X})=\mathop{\text{span}} _{col.}\left[
		\begin{array}{cccccc}
			\textbf{0}_{3\times3}& \textbf{I}_3\\\textbf{I}_3
			& -(\textbf{p}^r)^\land\\
			\textbf{n}^f(\textbf{n}^f)^\top& -d^f(\textbf{n}^f)^\land
		\end{array}
		\right],
	\end{equation}
which obviously depends on the state values. Therefore, by Theorem \ref{MT1}, the corresponding Std-EKF does not satisfy the observability constraint.

To the best of our knowledge, there is currently no RI-EKF or other EKFs that can naturally preserve the correct observability property for this considered problem. However, through the procedure (Alg. \ref{Alg3:FindA}),
\begin{equation}
    \left[
		\begin{array}{cccccc}
			\textbf{0}_{3\times3}& \textbf{I}_3\\\textbf{I}_3
			& -(\textbf{p}^r)^\land\\
			\textbf{n}^f(\textbf{n}^f)^\top& -d^f(\textbf{n}^f)^\land
		\end{array}
		\right]\underrightarrow{\substack{\text{L}_2+(\textbf{p}^r)^\land)\cdot\text{L}_1\\\text{L}_3+d^f(\textbf{n}^f)^\land)\cdot\text{L}_1\\
  \text{L}_3-\textbf{n}^f(\textbf{n}^f)^\top)\cdot\text{L}_2}}
  \left[
		\begin{array}{cccccc}
			\textbf{0}_{3\times3}& \textbf{I}_3\\\textbf{I}_3
			& \textbf{0}_{3\times3}\\
			\textbf{0}_{3\times3}& \textbf{0}_{3\times3}
		\end{array}
		\right],
\end{equation}
we can obtain an affine atlas,
\begin{equation}\label{plane_AffAtlas}
    \mathcal A^\xi=\{\psi_{\hat{\textbf{X}}}=\textbf{A}_{\hat{\textbf{X}}}\cdot \phi_{\hat{\textbf{X}}}| \phi_{\hat{\textbf{X}}}\in\mathcal A^\eta\},
\end{equation}
where
\begin{equation}
    \textbf{A}_{\textbf{X}}
  =\left[\begin{array}{cccccc}
		\textbf{I}_3 & \textbf{0}_{3\times3}&\textbf{0}_{3\times3}\\
            (\textbf{p}^r)^\land&\textbf{I}_3&\textbf{0}_{3\times3}\\
            d^f(\textbf{n}^f)^\land-\textbf{n}^f(\textbf{n}^f)^\top(\textbf{p}^r)^\land&-\textbf{n}^f(\textbf{n}^f)^\top&\textbf{I}_3
		\end{array}\right].
\end{equation}
Then, by Theorem \ref{MT3}, the corresponding Aff-EKF generated by (\ref{plane_AffAtlas}) naturally maintains the correct observability property for the considered plane feature SLAM problems.

\textbf{Computational Complexity.} Suppose there are $K$ features in the state. Then, for the propagation, the cost of our Aff-EKF (Alg. \ref{EKF_Framework}) is $O(K^2)$, and the cost of our Aff-EKF (Alg. \ref{Alg2:Aff-EKF}) is $O(K)$. For the update, the costs of these two different implementations of our Aff-EKF are $O(K^2)$. In addition, there is an extra $O(K^2)$ for the modifications on covariance in Aff-EKF (Alg. \ref{Alg2:Aff-EKF}). As a comparison, the costs of Std-EKF are $O(K)$ for the propagation and $O(K^2)$ for the update.


%


\section{Experiments}\label{Sec_Results}

\begin{table*}[tp]
	
	\centering
	\caption{Monte-Carlo Simulation Parameters}
	\begin{tabular}{|c|c|c|c|c|c|c|c|c|c|}
		\hline
Environment ID  & 1& 2& 3& 4& 5\\ \hline
Type of Features& Typical Point& Constrained Point& Constrained Point& Plane& Plane\\
Number of Features& 50& 40& 40& 14& 10\\
Trajectory Length& 473.32m& 942.44m& 473.31m& 707.60m& 471.76m\\
Average Odometry Orientation & 0.020rad& 0.031rad& 0.029rad& 0.027rad& 0.016rad\\
Average Odometry Position& 0.24m& 0.94m& 0.47m& 0.35m& 0.24m\\
Average Observation Measurement & 2.95m& 2.81m& 2.83m& 2.19m& 2.32m\\
		\hline
	\end{tabular}\label{Para_Sim}
	
\end{table*}

In this section, we evaluate the performance of the proposed Aff-EKF methods by Monte Carlo simulations for the 3 applications introduced in Sec. \ref{Sec_Applications}. All the implementations are in Matlab, and all computation is performed on the same laptop (AMD Ryzen 7 4800U with Radeon Graphics 1.80 GHz, 16 GB GB RAM).

In each experiment, the covariance matrices of odometry noise and observation noise in (\ref{gen_mot}) and (\ref{gen_obs}) are set to be
$\bm{\Sigma}_{n-1}=\text{diag}(\sigma_{w_1}^2 \textbf{I}_3, \sigma_{w_2}^2 \textbf{I}_3)$ and $\bm{\Omega}^{f_i}_n=\sigma_{v}^2 \textbf{I}_3$. Therefore, for concise presentation of the article, we use $(\sigma_{w_1}, \sigma_{w_2}, \sigma_{v})$ to represent such noise setting.

The detailed parameters of the simulation environments are shown in Tab. \ref{Para_Sim}. These environments are also shown in the first column of corresponding figures of the experiments below.

We adopt the root mean square error (RMSE) and normalized estimation
error squared (NEES) indicators to evaluate the accuracy and the consistency of the EKF algorithms, respectively. The definitions of these indicators can be found in \cite{Obj1,Zhuqing_IEKFlocalization}. Specifically, the value of NEES should be around 1 through the Monte Carlo simulation, if the estimator is consistent. And the NEES much greater than 1 means that the system underestimates the covariance of the state \cite{Obj1,Zhuqing_IEKFlocalization}.
In addition, similar to \cite{Obj1,Zhuqing_IEKFlocalization}, RMSE of all the estimators are computed in the standard atlas (using the same error form as Std-EKF) for a fair comparison.




\begin{figure*}[htp]
\centering 
 \includegraphics[width=1\textwidth]{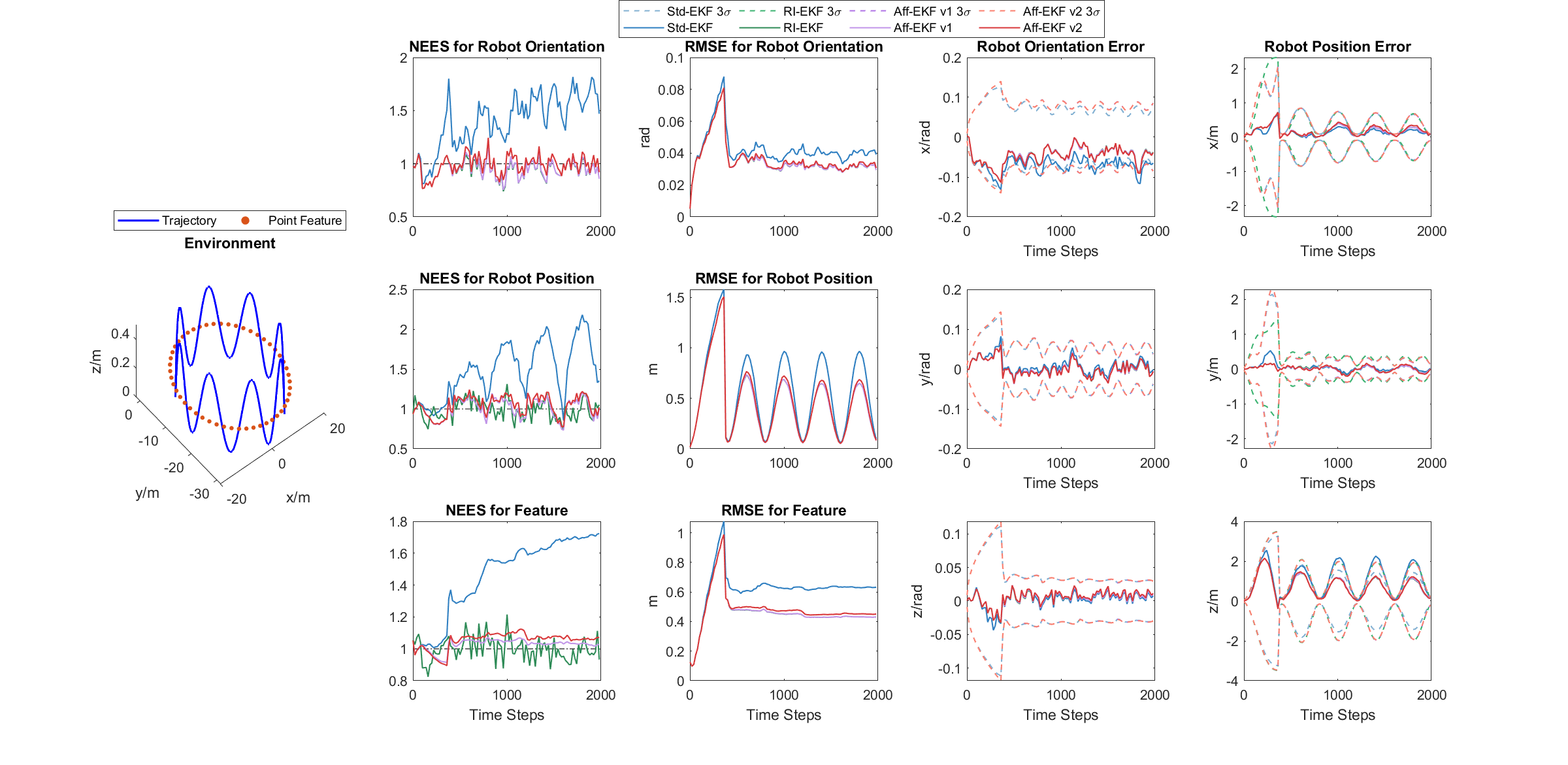} 
	\caption{Monte Carlo Results of SLAM with Point Features in Environment 1 with Noise Settings of $(0.003,0.01,0.1)$}
	\label{ResultsFig_Point}
\end{figure*}

\begin{table*}[htp]
	
	\centering
	\caption{Monte Carlo Results of SLAM with Point Features (50 runs).}
	\begin{tabular}{|c|c|c|c|c|c|c|c|c|c|}
		\hline
  
		\textbf{Noises}&	\multirow{2}*{\textbf{Algorithms}}&	\multicolumn{3}{|c|}{\textbf{RMSE}}  &\multicolumn{2}{|c|}{\textbf{NEES}}&\multirow{2}*{\textbf{Time (s)}}\\ 
        \cline{3-7}
		($\sigma_{w_1}$, $\sigma_{w_2}$, $\sigma_{v}$)&	~&	 \textbf{Rob. Ori. (rad)}&	 \textbf{Rob. Pos. (m)}&	 \textbf{Fea. (m)}&	 \textbf{Rob. Pose}&	 \textbf{Fea.} & ~\\
		\hline
  \multicolumn{9}{|c|}{\textbf{Environment 1 (with Point Features)}}\\
  \hline
\multirow{3}*{(0.003,0.01,0.1)}&Std-EKF&0.0427&0.5706&0.6159&1.294&1.464&16.957\\
~&FEJ-EKF&0.0377&0.4719&0.4891&1.075&1.119&17.262\\
~&OC-EKF&0.0396&0.5199&0.5521&1.126&1.216&23.880\\
~&FEJ2-EKF&0.0377&0.4723&0.4799&0.989&1.149&17.594\\
~&DRI-EKF&0.0377&0.4725&0.4898&1.076&1.117&11.638\\
~&RI-EKF&{\color{red}\textbf{0.0362}}$^*$&{\color{red}\textbf{0.4517}}&{\color{red}\textbf{0.4646}}&0.991&1.007&15.193\\
~&Aff-EKF v1&{\color{red}\textbf{0.0362}}&{\color{blue}\textbf{0.4520}}&{\color{blue}\textbf{0.4651}}&0.998&1.028&14.769\\
~&Aff-EKF v2&{\color{blue}\textbf{0.0368}}$^*$&0.4646&0.4800&1.025&1.054&23.605\\\hline
\multirow{3}*{(0.003,0.01,0.2)}&Std-EKF&0.0480&0.6215&0.6505&1.165&1.213&23.108\\
~&FEJ-EKF&0.0477&0.6098&0.6416&1.117&1.120&23.184\\
~&OC-EKF&0.0474&0.6113&0.6425&1.102&1.119&32.783\\
~&FEJ2-EKF&0.0478&0.6100&0.6350&1.033&1.103&23.381\\
~&DRI-EKF&0.0477&0.6098&0.6416&1.114&1.118&15.365\\
~&RI-EKF&{\color{blue}\textbf{0.0460}}&{\color{blue}\textbf{0.5830}}&{\color{blue}\textbf{0.6091}}&1.024&0.999&20.640\\
~&Aff-EKF v1&{\color{blue}\textbf{0.0460}}&0.5831&{\color{blue}\textbf{0.6091}}&1.027&1.020&20.237\\
~&Aff-EKF v2&{\color{red}\textbf{0.0459}}&{\color{red}\textbf{0.5822}}&{\color{red}\textbf{0.6077}}&1.036&1.032&31.569\\\hline
\multirow{3}*{(0.003,0.02,0.1)}&Std-EKF&0.0441&0.5947&0.6218&1.365&1.562&29.974\\
~&FEJ-EKF&0.0399&0.5108&0.5198&1.087&1.087&29.605\\
~&OC-EKF&0.0412&0.5413&0.5599&1.133&1.198&40.832\\
~&FEJ2-EKF&0.0403&0.5313&0.5258&1.014&1.134&29.814\\
~&DRI-EKF&0.0398&0.5090&0.5179&1.089&1.085&19.447\\
~&RI-EKF&{\color{red}\textbf{0.0389}}&{\color{blue}\textbf{0.5036}}&{\color{blue}\textbf{0.5115}}&1.015&1.000&25.554\\
~&Aff-EKF v1&{\color{red}\textbf{0.0389}}&{\color{red}\textbf{0.5032}}&{\color{red}\textbf{0.5110}}&1.021&1.035&25.042\\
~&Aff-EKF v2&{\color{blue}\textbf{0.0391}}&0.5053&0.5135&1.052&1.062&40.529\\\hline
	\end{tabular}
 \label{ResultsTab_Point}
\\ [3bp]
$ \begin{array}{l}
	    ^*\text{{\color{red}\textbf{Red}} and {\color{blue}\textbf{blue}} for RMSE indicate the best and second best results, respectively.}\\ 
     \text{NEES values are incomparable if they are around 1.}
	\end{array}$	
\end{table*}

\subsection{Point Feature based SLAM}

For the typical point feature based SLAM, we take various EKF frameworks into comparison. The corresponding EKF algorithms include Std-EKF, FEJ-EKF \cite{con4}, OC-EKF \cite{con4}, FEJ2-EKF \cite{FEJ2}, DRI-EKF \cite{DRI}, RI-EKF \cite{con3}, and our proposed two versions of Aff-EKFs generated by (\ref{3Dpoint_AffAtlas1}) and (\ref{3Dpoint_AffAtlas2}), respectively. 

The experiments are conducted in Environment 1 shown in Tab. \ref{Para_Sim} and Fig. \ref{ResultsFig_Point}. The average RMSE and NEES of the whole
trajectory for different noise levels are presented in Tab. \ref{ResultsTab_Point}. The results show that all the EKF methods except Std-EKF can produce consistent estimates. However, RI-EKF and our proposed two Aff-EKFs usually outperform other methods in terms of accuracy. 

The RMSE and NEES of Std-EKF, RI-EKF and two proposed Aff-EKFs are also displayed in Fig. \ref{ResultsFig_Point}. Both RI-EKF and two Aff-EKFs remain consistent during the whole trajectory, while Std EKF becomes apparently inconsistent after a period. In particular, Fig. \ref{ResultsFig_Point} also shows robot orientation errors and robot position errors with $3\sigma$ bounds from these four EKFs in one Monte Carlo simulation. We can find that the errors of Std-EKF are outside the corresponding $3\sigma$ bounds (99\% envelope), which clearly reveals the inconsistent issue of Std-EKF. 

In summary, the above experiments illustrate that our proposed framework greatly improves the performance of Std-EKF, and the corresponding two Aff-EKFs achieve the performance as good as RI-EKF.

\subsection{SLAM with Horizontal Plane Constrained Point Features}

In this part, we compare our Aff-EKF with Std-EKF and Ideal-EKF (a variant of the
Std-EKF where Jacobians are evaluated at the ground-truth) for these constrained point feature based SLAM problems on Environment 2 and 3 described in Tab. \ref{Para_Sim}. In addition, we also take RI-EKF of typical point SLAM into comparison, as there is no current I-EKF available for this constraint point situation. In order to execute, this RI-EKF ignores the plane constraints on the point features.

The average RMSE and NEES under different noise levels are listed in Tab. \ref{ResultsTab_ConPointKnown} and Tab. \ref{ResultsTab_ConPointUnKnown} for scenarios 1 and 2, respectively. In particular, under the noise level of $(0.005,0.01,0.1)$, Fig. \ref{ResultsFig_ConPointKnown} and Fig. \ref{ResultsFig_ConPointUnKnown} also display the RMSE and NEES of robot estimates as well as the $3\sigma$ bounds in one simulation for scenario 1 (height is known) and scenario 2 (height is unknown), respectively.

In both Tab. \ref{ResultsTab_ConPointKnown} and Tab. \ref{ResultsTab_ConPointUnKnown}, the average NEES values of RI-EKF, Ideal-EKF and our Aff-EKF are very close to 1, while that of Std-EKF is obviously larger than 1. Fig. \ref{ResultsFig_ConPointKnown} and Fig. \ref{ResultsFig_ConPointUnKnown} also show that for all of these methods, except for the standard EKF, the NEES values are approximately 1 throughout the entire trajectory. And we can also find that robotic errors of Std-EKF exceed the corresponding $3\sigma$ bounds, while those of RI-EKF, Ideal-EKF, and our Aff-EKF remain well within them. 
The above results indicate that Std-EKF is overconfident in its uncertainty estimates for the considered problems. In contrast, both RI-EKF and our Aff-EKF have good consistency property similar to Ideal-EKF. 

However, we can notice that in both Tab. \ref{ResultsTab_ConPointKnown} and Tab. \ref{ResultsTab_ConPointUnKnown}, RI-EKF has the maximum average RMSE for robot orientation and position, as well as feature positions. Our Aff-EKF has less estimation errors than RI-EKF and Std-EKF, and performs close to Ideal-EKF. The presentations of RMSE in Fig. \ref{ResultsFig_ConPointKnown} and Fig. \ref{ResultsFig_ConPointUnKnown} also show that the errors in robot pose estimation of RI-EKF that ignores the plane constraint are significantly greater than other three EKF methods that utilize such constraint. 

Therefore, due to the lack of plane constraint, typical RI-EKF cannot even compare to Std-EKF in terms of accuracy, although it is more consistent. In contrast, our proposed Aff-EKF outperforms Std-EKF in both accuracy and consistency, and achieves performance close to Ideal-EKF.





\begin{figure*}[ht]
\centering 
  \begin{subfigure}{0.95\linewidth} 
  \includegraphics[width=0.95\textwidth]{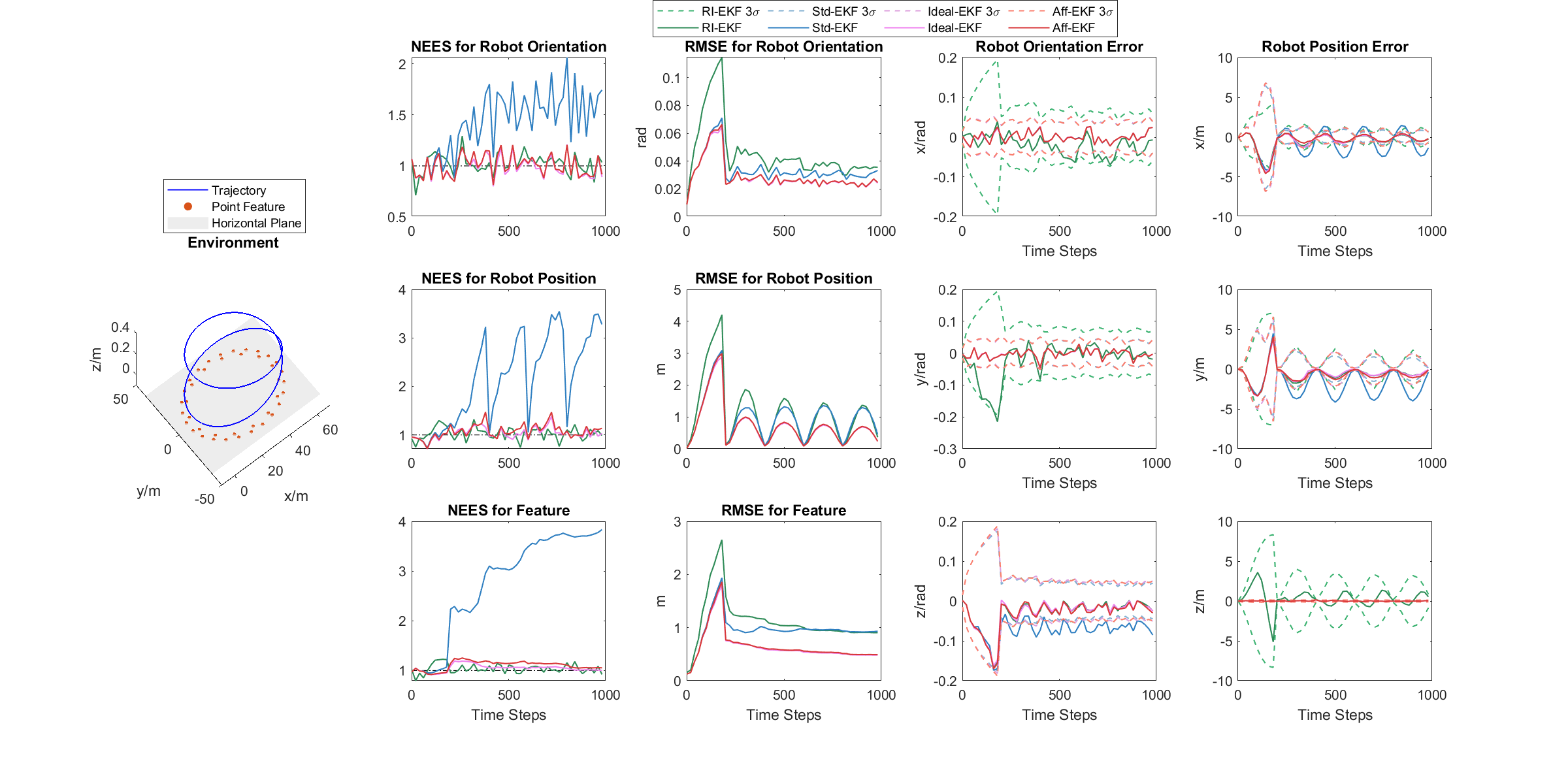}
  \caption{Environment 2 with Noise Settings of $(0.005, 0.01, 0.1)$}\vspace{5mm} \label{ResultsFig_ConP1_Data1}  \end{subfigure}
  
 \begin{subfigure}{0.95\linewidth} \includegraphics[width=0.95\textwidth]{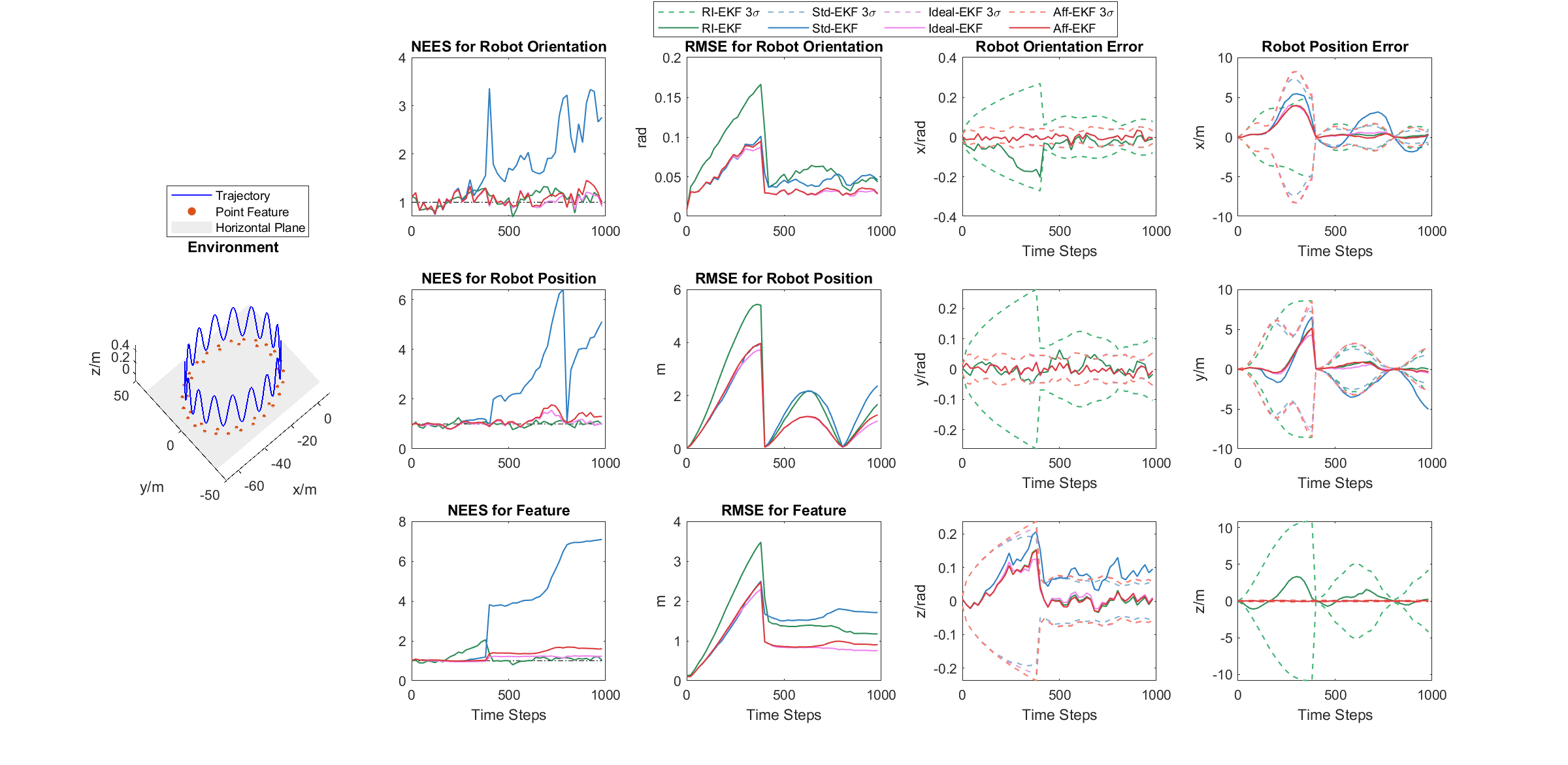} \caption{Environment 3 with Noise Settings of $(0.005, 0.01, 0.1)$}\label{ResultsFig_ConP1_Data2} \end{subfigure}
	\caption{Monte Carlo Results of SLAM with Known Height Horizontal Plane Constrained Point Features}
	\label{ResultsFig_ConPointKnown}
\end{figure*}

\begin{table*}[ht]
	
	\centering
	\caption{Monte Carlo Results of SLAM with Known Height Horizontal Plane Constrained Point Features (50 runs).}
	\begin{tabular}{|c|c|c|c|c|c|c|c|c|c|}
		\hline
  
		\textbf{Noises}&	\multirow{2}*{\textbf{Algorithms}}&	\multicolumn{3}{|c|}{\textbf{RMSE}}  &\multicolumn{2}{|c|}{\textbf{NEES}}&\multirow{2}*{\textbf{Time (s)}}\\ 
        \cline{3-7}
		($\sigma_{w_1}$, $\sigma_{w_2}$, $\sigma_{v}$)&	~&	 \textbf{Rob. Ori. (rad)}&	 \textbf{Rob. Pos. (m)}&	 \textbf{Fea. (m)}&	 \textbf{Rob. Pose}&	 \textbf{Fea.} & ~\\
		\hline
  \multicolumn{9}{|c|}{\textbf{Environment 2 (with Known Height Horizontal Plane Constrained Point Features)}}\\
  \hline
\multirow{3}*{(0.005,0.01,0.1)}&RI-EKF&0.0464&1.1715&1.1171&1.026&1.035&2.453\\
~&Std-EKF&0.0345&0.9945&0.9645&1.612&2.797&1.752\\
~&Ideal-EKF&{\color{red}\textbf{0.0295}}&{\color{red}\textbf{0.7178}}&{\color{red}\textbf{0.6419}}&1.014&1.049&1.774\\
~&Aff-EKF&{\color{blue}\textbf{0.0298}}&{\color{blue}\textbf{0.7329}}&{\color{blue}\textbf{0.6547}}&1.036&1.106&2.487\\\hline
\multirow{3}*{(0.005,0.01,0.15)}&RI-EKF&0.0502&1.2711&1.2101&1.031&1.027&2.455\\
~&Std-EKF&0.0385&1.0752&1.0243&1.453&2.302&1.885\\
~&Ideal-EKF&{\color{red}\textbf{0.0333}}&{\color{red}\textbf{0.7501}}&{\color{red}\textbf{0.6501}}&1.006&0.992&2.653\\
~&Aff-EKF&{\color{blue}\textbf{0.0338}}&{\color{blue}\textbf{0.7791}}&{\color{blue}\textbf{0.6812}}&1.034&1.078&2.653\\\hline
\multirow{3}*{(0.005,0.01,0.2)}&RI-EKF&0.0553&1.3731&1.3035&1.025&1.041&2.476\\
~&Std-EKF&0.0406&1.1028&1.0466&1.407&2.162&2.630\\
~&Ideal-EKF&{\color{red}\textbf{0.0354}}&{\color{red}\textbf{0.7600}}&{\color{red}\textbf{0.6567}}&0.974&0.908&1.886\\
~&Aff-EKF&{\color{blue}\textbf{0.0358}}&{\color{blue}\textbf{0.7882}}&{\color{blue}\textbf{0.6840}}&0.990&0.962&2.630\\\hline
\multicolumn{9}{|c|}{\textbf{Environment 3 (with Known Height Horizontal Plane Constrained Point Features)}}\\
  \hline
\multirow{3}*{(0.005,0.01,0.1)}&RI-EKF&0.0717&1.8512&1.4981&1.029&1.162&0.985\\
~&Std-EKF&0.0517&1.6490&1.4682&1.845&3.622&0.596\\
~&Ideal-EKF&{\color{red}\textbf{0.0421}}&{\color{red}\textbf{1.2300}}&{\color{red}\textbf{0.9588}}&1.044&1.130&0.584\\
~&Aff-EKF&{\color{blue}\textbf{0.0436}}&{\color{blue}\textbf{1.2908}}&{\color{blue}\textbf{1.0305}}&1.095&1.318&0.824\\\hline
\multirow{3}*{(0.005,0.01,0.15)}&RI-EKF&0.0790&2.1399&1.7385&1.070&1.128&0.794\\
~&Std-EKF&0.0613&2.0519&1.8091&2.033&4.142&0.389\\
~&Ideal-EKF&{\color{red}\textbf{0.0462}}&{\color{red}\textbf{1.3476}}&{\color{red}\textbf{1.0400}}&1.034&1.039&0.569\\
~&Aff-EKF&{\color{blue}\textbf{0.0476}}&{\color{blue}\textbf{1.4198}}&{\color{blue}\textbf{1.0993}}&1.043&1.085&0.569\\\hline
\multirow{3}*{(0.005,0.01,0.2)}&RI-EKF&0.0780&2.0107&1.6663&1.001&1.084&0.805\\
~&Std-EKF&0.0504&1.5265&1.3218&1.256&1.789&0.574\\
~&Ideal-EKF&{\color{red}\textbf{0.0445}}&{\color{red}\textbf{1.1912}}&{\color{red}\textbf{0.9619}}&0.969&0.901&0.388\\
~&Aff-EKF&{\color{blue}\textbf{0.0458}}&{\color{blue}\textbf{1.2587}}&{\color{blue}\textbf{1.0208}}&0.990&0.975&0.574\\\hline
	\end{tabular}\label{ResultsTab_ConPointKnown}
	
\end{table*}



\begin{figure*}[ht]
\centering 
  \begin{subfigure}{1\linewidth} 
  \includegraphics[width=0.95\textwidth]{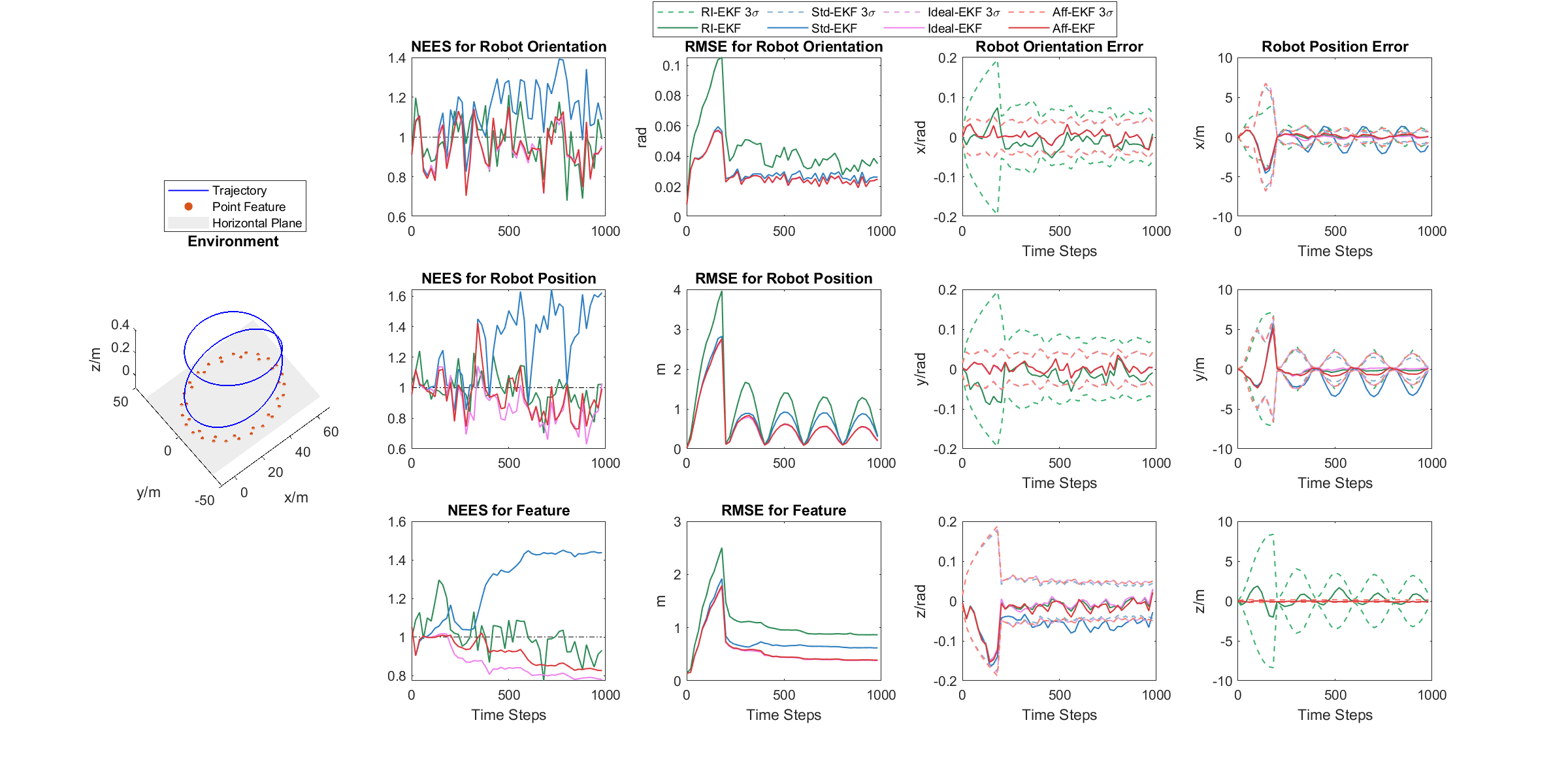}
  \caption{Environment 2 with Noise Settings of $(0.005, 0.01, 0.1)$}\vspace{5mm} \label{ResultsFig_ConP2_Data1}  \end{subfigure}
  
 \begin{subfigure}{1\linewidth} \includegraphics[width=0.95\textwidth]{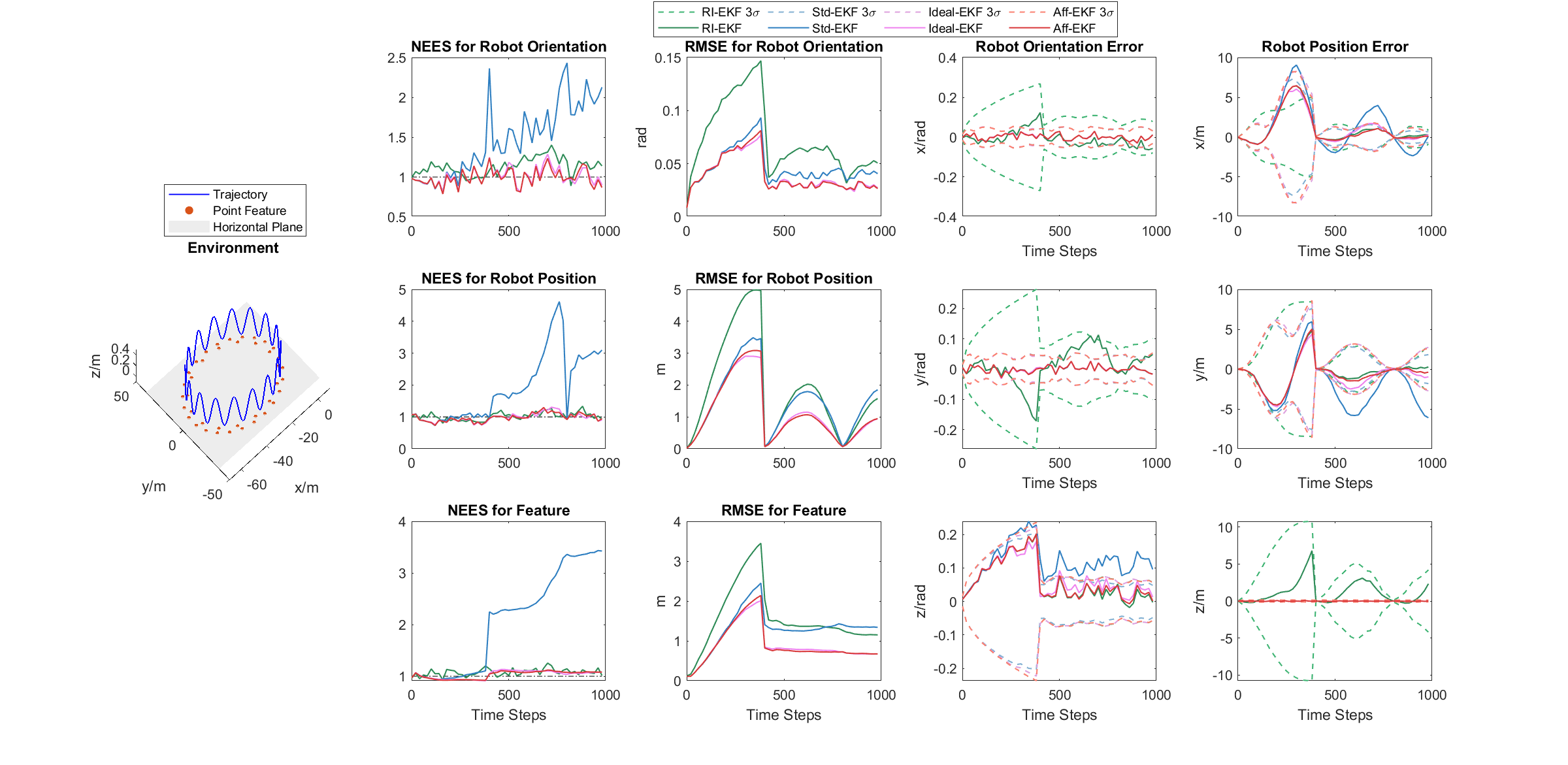} \caption{Environment 3 with Noise Settings of $(0.005, 0.01, 0.1)$}\label{ResultsFig_ConP2_Data2} \end{subfigure}
	\caption{Monte Carlo Results of SLAM with Unknown Height Horizontal Plane Constrained Point Features}
	\label{ResultsFig_ConPointUnKnown}
\end{figure*}

\begin{table*}[ht]
	
	\centering
	\caption{Monte Carlo Results of SLAM with Unknown Height Horizontal Plane Constrained Point Features (50 runs).}
	\begin{tabular}{|c|c|c|c|c|c|c|c|c|c|}
		\hline
  
		\textbf{Noises}&	\multirow{2}*{\textbf{Algorithms}}&	\multicolumn{3}{|c|}{\textbf{RMSE}}  &\multicolumn{2}{|c|}{\textbf{NEES}}&\multirow{2}*{\textbf{Time (s)}}\\ 
        \cline{3-7}
		($\sigma_{w_1}$, $\sigma_{w_2}$, $\sigma_{v}$)&	~&	 \textbf{Rob. Ori. (rad)}&	 \textbf{Rob. Pos. (m)}&	 \textbf{Fea. (m)}&	 \textbf{Rob. Pose}&	 \textbf{Fea.} & ~\\
		\hline
  \multicolumn{9}{|c|}{\textbf{Environment 2 (with Unknown Height Horizontal Plane Constrained Point Features)}}\\
  \hline
\multirow{3}*{(0.005,0.01,0.1)}&RI-EKF&0.0458&1.1087&1.0586&0.995&1.019&2.278\\
~&Std-EKF&0.0302&0.7922&0.7409&1.157&1.275&1.368\\
~&Ideal-EKF&{\color{red}\textbf{0.0281}}&{\color{red}\textbf{0.6418}}&{\color{red}\textbf{0.5658}}&0.947&0.860&1.378\\
~&Aff-EKF&{\color{blue}\textbf{0.0282}}&{\color{blue}\textbf{0.6519}}&{\color{blue}\textbf{0.5751}}&0.975&0.915&1.954\\\hline
\multirow{3}*{(0.005,0.01,0.15)}&RI-EKF&0.0501&1.1977&1.1504&0.963&1.001&2.246\\
~&Std-EKF&0.0362&0.9694&0.9330&1.300&1.544&1.634\\
~&Ideal-EKF&{\color{red}\textbf{0.0327}}&{\color{red}\textbf{0.7495}}&{\color{red}\textbf{0.6823}}&0.959&0.881&2.315\\
~&Aff-EKF&{\color{blue}\textbf{0.0331}}&{\color{blue}\textbf{0.7731}}&{\color{blue}\textbf{0.7052}}&0.979&0.912&2.315\\\hline
\multirow{3}*{(0.005,0.01,0.2)}&RI-EKF&0.0531&1.3238&1.2920&1.001&0.998&2.157\\
~&Std-EKF&0.0375&0.9290&0.8788&1.185&1.328&2.248\\
~&Ideal-EKF&{\color{red}\textbf{0.0346}}&{\color{red}\textbf{0.7455}}&{\color{red}\textbf{0.6760}}&0.982&0.943&1.576\\
~&Aff-EKF&{\color{blue}\textbf{0.0349}}&{\color{blue}\textbf{0.7565}}&{\color{blue}\textbf{0.6843}}&0.998&0.965&2.248\\\hline
\multicolumn{9}{|c|}{\textbf{Environment 3 (with Unknown Height Horizontal Plane Constrained Point Features)}}\\
  \hline
\multirow{3}*{(0.005,0.01,0.1)}&RI-EKF&0.0728&1.8767&1.5534&1.051&1.066&1.010\\
~&Std-EKF&0.0470&1.4779&1.2930&1.523&2.097&0.613\\
~&Ideal-EKF&{\color{red}\textbf{0.0394}}&{\color{red}\textbf{1.1225}}&{\color{blue}\textbf{0.8985}}&0.999&1.035&0.600\\
~&Aff-EKF&{\color{blue}\textbf{0.0395}}&{\color{blue}\textbf{1.1266}}&{\color{red}\textbf{0.8938}}&1.000&1.031&0.853\\\hline
\multirow{3}*{(0.005,0.01,0.15)}&RI-EKF&0.0736&1.9739&1.6016&1.010&1.154&0.833\\
~&Std-EKF&0.0594&1.9993&1.7604&2.109&3.267&0.402\\
~&Ideal-EKF&{\color{red}\textbf{0.0456}}&{\color{red}\textbf{1.3592}}&{\color{red}\textbf{1.0662}}&1.038&1.016&0.581\\
~&Aff-EKF&{\color{blue}\textbf{0.0478}}&{\color{blue}\textbf{1.4584}}&{\color{blue}\textbf{1.1614}}&1.051&1.043&0.581\\\hline
\multirow{3}*{(0.005,0.01,0.2)}&RI-EKF&0.0804&2.2279&1.7829&1.095&1.169&0.838\\
~&Std-EKF&0.0550&1.7491&1.5362&1.458&1.956&0.571\\
~&Ideal-EKF&{\color{red}\textbf{0.0474}}&{\color{red}\textbf{1.3850}}&{\color{red}\textbf{1.1334}}&1.028&1.039&0.399\\
~&Aff-EKF&{\color{blue}\textbf{0.0487}}&{\color{blue}\textbf{1.4523}}&{\color{blue}\textbf{1.1816}}&1.038&1.072&0.571\\\hline
	\end{tabular}\label{ResultsTab_ConPointUnKnown}
	
\end{table*}

\subsection{Plane Feature based SLAM}

In this part, our Aff-EKF is only compared with Std-EKF and Ideal-EKF for the plane feature based SLAM problems on Environment 4 and 5 described in Tab. \ref{Para_Sim}, as there is no current I-EKF for these problems. The results are listed in Fig. \ref{ResultsFig_Plane} and Tab. \ref{ResultsTab_Plane}. In Tab. \ref{ResultsTab_Plane}, we can observe that only the average NEES values of Std-EKF are significantly greater than 1. And Std-EKF has the maximum average RMSE values. In contrast, Ideal-EKF and our proposed Aff-EKF perform the best in terms of average RMSE and NEES evaluation. Fig. \ref{ResultsFig_Plane} further visually displays that the inconsistency issue of Std-EKF is also evidenced for this considered plane feature based SLAM problems. 
However, our proposed Aff-EKF can produce consistent and more accurate estimates than Std-EKF. In terms of RMSE and NEES evaluation, it also achieved performance close to Ideal-EKF.



\begin{figure*}[ht]
\centering 
  \begin{subfigure}{1\linewidth} 
  \includegraphics[width=0.97\textwidth]{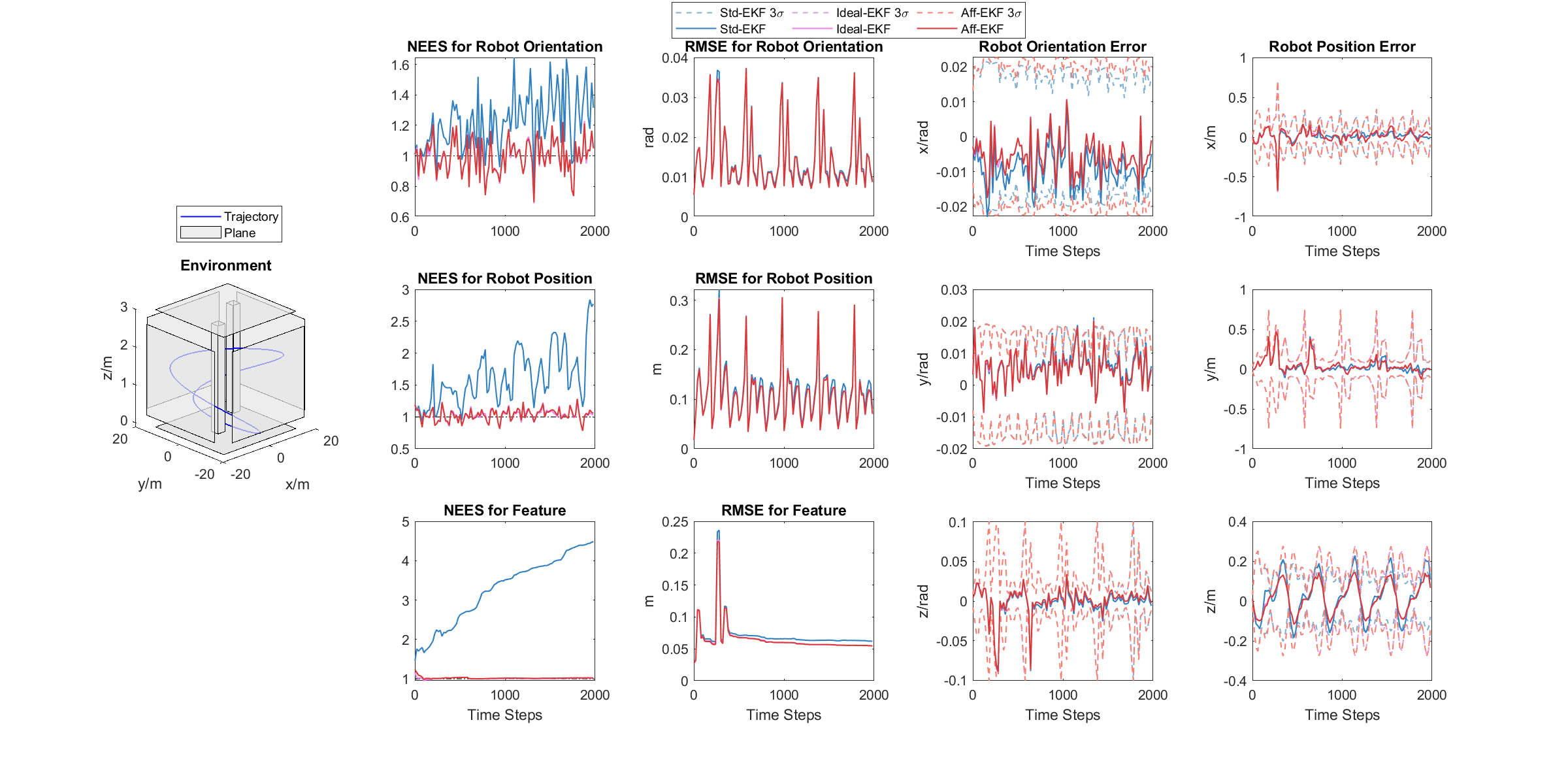}
  \caption{Environment 4 with Noise Settings of $(0.005, 0.01, 0.02)$}\vspace{5mm} \label{ResultsFig_Plane_Data1}  \end{subfigure}
  
 \begin{subfigure}{1\linewidth} \includegraphics[width=0.97\textwidth]{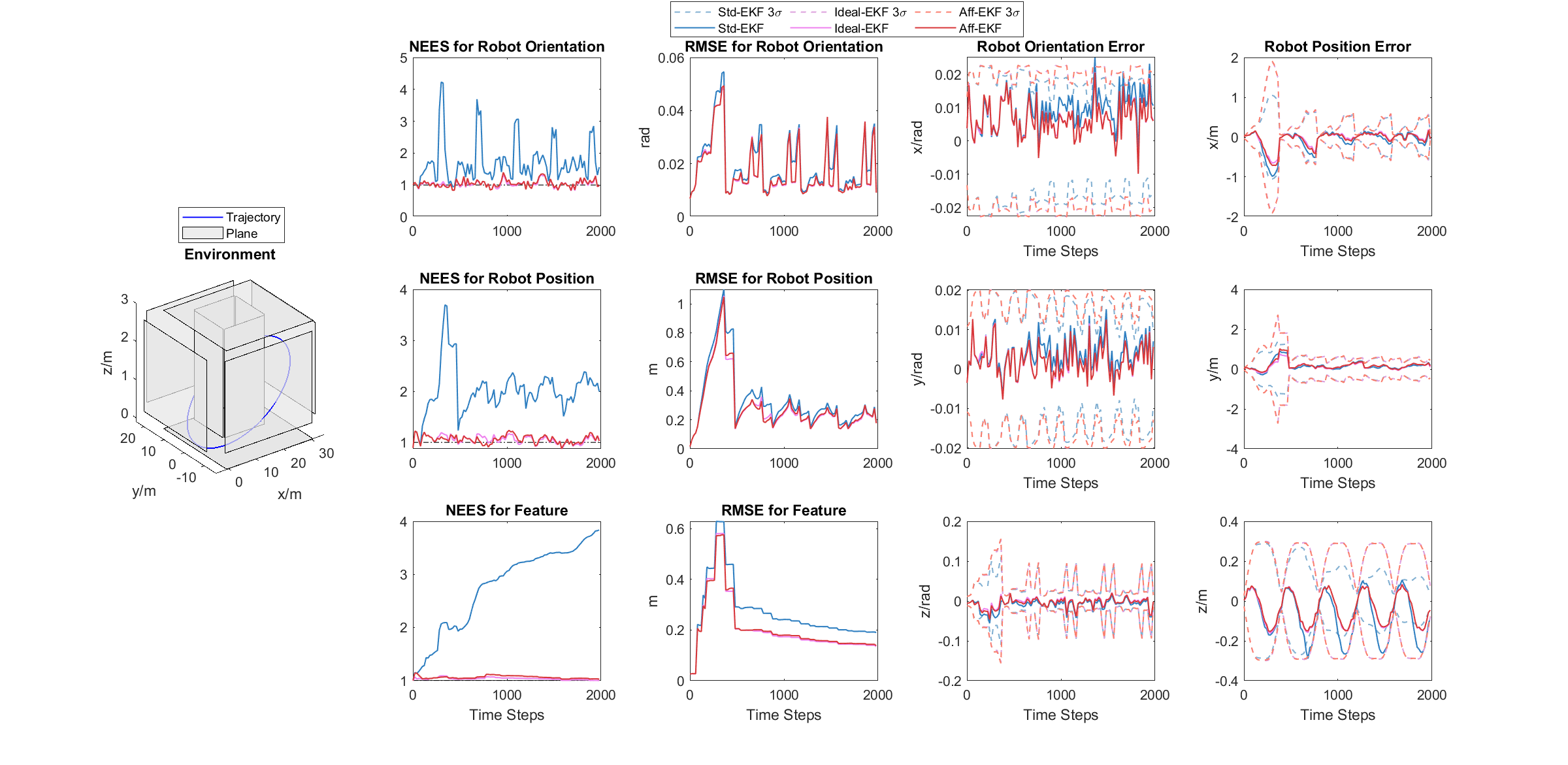} \caption{Environment 5 with Noise Settings of $(0.005, 0.01, 0.02)$}\label{ResultsFig_Plane_Data2} \end{subfigure}
	\caption{Monte Carlo Results of SLAM with Plane Features}
	\label{ResultsFig_Plane}
\end{figure*}





\begin{table*}[ht]
	
	\centering
	\caption{Monte Carlo Results of Environment 4 and 5 with Plane Features (50 runs).}
	\begin{tabular}{|c|c|c|c|c|c|c|c|c|c|}
		\hline
  
		\textbf{Noises}&	\multirow{2}*{\textbf{Algorithms}}&	\multicolumn{3}{|c|}{\textbf{RMSE}}  &\multicolumn{2}{|c|}{\textbf{NEES}}&\multirow{2}*{\textbf{Time (s)}}\\ 
        \cline{3-7}
		($\sigma_{w_1}$, $\sigma_{w_2}$, $\sigma_{v}$)&	~&	 \textbf{Rob. Ori. (rad)}&	 \textbf{Rob. Pos. (m)}&	 \textbf{Fea. (m)}&	 \textbf{Rob. Pose}&	 \textbf{Fea.} & ~\\
		\hline
  \multicolumn{9}{|c|}{\textbf{Environment 4 (with Plane Features)}}\\
  \hline
\multirow{3}*{(0.005,0.01,0.01)}&Std-EKF&0.0111&0.0875&0.0501&1.173&1.436&0.536\\
~&Ideal-EKF&{\color{red}\textbf{0.0110}}&{\color{red}\textbf{0.0859}}&{\color{red}\textbf{0.0486}}&1.027&1.081&0.542\\
~&Aff-EKF&{\color{red}\textbf{0.0110}}&{\color{red}\textbf{0.0859}}&{\color{blue}\textbf{0.0487}}&1.025&1.079&0.996\\\hline
\multirow{3}*{(0.005,0.01,0.02)}&Std-EKF&0.0145&0.1121&0.0712&1.371&3.281&0.570\\
~&Ideal-EKF&{\color{blue}\textbf{0.0141}}&{\color{blue}\textbf{0.1057}}&{\color{blue}\textbf{0.0658}}&1.016&1.011&0.593\\
~&Aff-EKF&{\color{red}\textbf{0.0140}}&{\color{red}\textbf{0.1056}}&{\color{red}\textbf{0.0654}}&1.018&1.020&1.077\\\hline
\multirow{3}*{(0.005,0.02,0.02)}&Std-EKF&0.0144&0.1460&0.0785&1.414&3.620&0.581\\
~&Ideal-EKF&{\color{red}\textbf{0.0140}}&{\color{red}\textbf{0.1401}}&{\color{red}\textbf{0.0732}}&1.008&0.993&0.601\\
~&Aff-EKF&{\color{blue}\textbf{0.0141}}&{\color{blue}\textbf{0.1408}}&{\color{blue}\textbf{0.0742}}&1.020&1.031&1.100\\\hline
\multicolumn{9}{|c|}{\textbf{Environment 5 (with Plane Features)}}\\
  \hline
\multirow{3}*{(0.005,0.01,0.02)}&Std-EKF&0.0206&0.3420&0.2677&1.746&2.778&0.334\\
~&Ideal-EKF&{\color{red}\textbf{0.0183}}&{\color{red}\textbf{0.2970}}&{\color{red}\textbf{0.2064}}&1.033&1.033&0.337\\
~&Aff-EKF&{\color{blue}\textbf{0.0184}}&{\color{blue}\textbf{0.2987}}&{\color{blue}\textbf{0.2091}}&1.041&1.060&0.557\\\hline
\multirow{3}*{(0.005,0.01,0.03)}&Std-EKF&0.0247&0.3999&0.3164&1.712&3.523&0.337\\
~&Ideal-EKF&{\color{red}\textbf{0.0210}}&{\color{red}\textbf{0.3261}}&{\color{red}\textbf{0.2258}}&1.010&0.999&0.345\\
~&Aff-EKF&{\color{blue}\textbf{0.0217}}&{\color{blue}\textbf{0.3293}}&{\color{blue}\textbf{0.2410}}&1.062&1.108&0.567\\\hline
\multirow{3}*{(0.005,0.02,0.03)}&Std-EKF&0.0239&0.4731&0.3397&1.758&6.707&0.337\\
~&Ideal-EKF&{\color{blue}\textbf{0.0220}}&{\color{red}\textbf{0.4183}}&{\color{blue}\textbf{0.2846}}&1.057&1.150&0.341\\
~&Aff-EKF&{\color{red}\textbf{0.0212}}&{\color{blue}\textbf{0.4217}}&{\color{red}\textbf{0.2689}}&1.076&1.114&0.563\\\hline
	\end{tabular}\label{ResultsTab_Plane}
	
\end{table*}

\section{Discussion}\label{Discussion}
In this section, we discuss the advantages of Aff-EKF by analyzing the limitations of some existing approaches, including FEJ-EKF (FEJ2-EKF) \cite{FEJ2,con2}, OC-EKF \cite{VINS1,con5,con4}, and I-EKF \cite{IEKF,con9}.


\subsection{Limitations of FEJ-EKF and OC-EKF}\label{Sec_Limit_FEJ}
To modify Std-EKF which is implemented on the atlas $\mathcal A^\eta$\footnote{The symbols, including states and Jacobians, related to Std-EKF follow the previous sections.} for the typical point feature based SLAM, FEJ-EKF \cite{con2} and OC-EKF \cite{VINS1,con5,con4} artificially select the proper evaluation points, $\textbf{X}^*_{n|n-1}$ and $\textbf{X}^*_{n-1|n-1}$, for  Jacobian matrices which are further denoted by the superscript `` $^*$ ".

In the following, we will prove that these artificial adjustments produce extra first-order linearization errors caused by the difference between such evaluation points ($\textbf{X}^*_{n|n-1}$ and $\textbf{X}^*_{n-1|n-1}$) and the linearization points ($\textbf{X}_{n|n-1}$ and $\textbf{X}_{n-1|n-1}$).

We first consider the linearization of the observation model (\ref{gen_obs}) by Taylor expansion,
\begin{equation}\label{disc_obsresid}
    \begin{array}{rl}
         \textbf{y}_{n}&=\textbf{z}_{n}-h(\textbf{X}_{n|n-1})  =\hat{\textbf{H}}_{n}^{\eta}\bm{\eta}+o(||\bm{\eta}||)\\
         &=\hat{\textbf{H}}^{\eta*}_{n}\bm{\eta}+(\hat{\textbf{H}}^\eta_{n}-\hat{\textbf{H}}^{\eta*}_{n})\bm{\eta}+o(||\bm{\eta}||),
    \end{array}
\end{equation}
where $\textbf{y}_{n+1}$ is the measurement residual, and $o(||\bm{\eta}||)$ represents the second and higher order residual with respect to $||\bm{\eta}||$.

In FEJ-EKF and OC-EKF, the quantity
\begin{equation}\label{omit_resid}
    (\hat{\textbf{H}}^{\eta}_{n}-\hat{\textbf{H}}^{\eta*}_{n})\bm{\eta}
\end{equation}
is assumed to be small enough, and is neglected in the algorithms. However, as demonstrated below, this quantity is generally a non-negligible first-order term. 

\subsubsection{FEJ-EKF} 
For FEJ-EKF, the evaluation point $\textbf{X}^*_{n|n-1}$ for $\hat{\textbf{H}}^{\eta*}_{n}$ is the first ever available estimate. 
Since the first estimates are in general different from 
the ground-truth $\textbf{X}_{n}$, we have ${\textbf{H}}^{\eta}_{n}\neq\hat{\textbf{H}}^{\eta*}_{n}$
for the general case, where ${\textbf{H}}^{\eta}_{n}$ is the Jacobian evaluated at the ground-truth.
It follows that 
$\hat{\textbf{H}}^{\eta}_{n}\nrightarrow \hat{\textbf{H}}^{\eta*}_{n},$
when $\textbf{X}_{n|n-1}\rightarrow \textbf{X}_{n}$ (i.e. $\bm{\eta}\rightarrow 0$), where $\hat{\textbf{H}}^{\eta}_{n}$ is evaluated at the linearization point. Therefore, we obtain that in general
\begin{equation}\label{lim_res}
    \lim_{\eta\rightarrow 0}\frac{(\hat{\textbf{H}}^{\eta}_{n}-\hat{\textbf{H}}^{\eta*}_{n})\bm{\eta}}{||\bm{\eta}||}\neq \textbf{0},
\end{equation}
meaning that the neglected quantity (\ref{omit_resid}) is a first-order term.

\subsubsection{OC-EKF} 
For OC-EKF, the evaluation points {$(\textbf{X}^*_{n-1|n-1},\textbf{X}^*_{n|n-1})$} for $\hat{\textbf{F}}^{\eta*}_{n-1}$ and $\hat{\textbf{H}}^{\eta*}_{n}$ are the optimal states closest to the linearization points {$(\textbf{X}_{n-1|n-1},\textbf{X}_{n|n-1})$} under the restriction
\begin{equation}\label{oc_eq1}
    \hat{\textbf{H}}^{\eta*}_{n}\hat{\textbf{F}}^{\eta*}_{n-1,0}{\textbf{N}}^{\eta}(\textbf{X}^*_{0|0})=\textbf{0},
\end{equation}
where $\hat{\textbf{F}}^{\eta*}_{n-1,0}$ is the transition matrix in (\ref{Ob_Matrix}) evaluated at $\{\textbf{X}^*_{0|0},\cdots,\textbf{X}^*_{n|n-1}\}$, and ${\textbf{N}}^{\eta}(\textbf{X}^*_{0|0})$ is the correct unobservable subspace at the $0$-th step state $\textbf{X}^*_{0|0}$. 

For the considered situations where Std-EKF does not fulfill the observability constraint, we have in general
\begin{equation}\label{oc_eq2}
    {\textbf{H}}^{\eta}_{n}\hat{\textbf{F}}^{\eta*}_{n-1,0}{\textbf{N}}^{\eta}(\textbf{X}^*_{0|0})\neq\textbf{0}.
\end{equation}
Subtract two formulas (\ref{oc_eq1})(\ref{oc_eq2}), and take the limit, we obtain
\begin{equation}
    (\lim_{\eta\rightarrow 0}({\textbf{H}}^{\eta}_{n}-\hat{\textbf{H}}^{\eta*}_{n}))(\lim_{\eta\rightarrow 0}(\hat{\textbf{F}}^{\eta*}_{n-1,0}{\textbf{N}}^{\eta}(\textbf{X}^*_{0|0})))\neq \textbf{0}.
\end{equation}
Also, due to $\hat{\textbf{H}}^{\eta}_{n}\rightarrow{\textbf{H}}^{\eta}_{n}$ ($\eta\rightarrow 0$),
we have
\begin{equation}
    \lim_{\eta\rightarrow 0}(\hat{\textbf{H}}^{\eta}_{n}-\hat{\textbf{H}}^{\eta*}_{n})\neq \textbf{0}.
\end{equation}
Therefore, we can readily obtain the same formula (\ref{lim_res}), proving that the omitted term (\ref{omit_resid}) in OC-EKF is first-order.

In addition, by a similar analysis, we can also find that FEJ-EKF and OC-EKF neglect a first-order term for the linearization of process model (\ref{gen_mot}),
\begin{equation}\label{omit_resid2}
    (\hat{\textbf{F}}^{\eta}_{n-1}-\hat{\textbf{F}}^{\eta*}_{n-1})\bm{\eta}.
\end{equation}
Similar to the linearization in observation model, these neglected terms can cause additional first-order linearization errors in FEJ-EKF and OC-EKF, which should not be ignored by the EKF framework and thus limit the performance of the estimators.

\subsubsection{FEJ2-EKF}\label{Sec_Dis_FEJ2}
FEJ2-EKF \cite{FEJ2} is an improvement of FEJ-EKF. By QR factorization, one can obtain the left nullspace $\textbf{B}_{n}$ of the error Jacobian $\hat{\textbf{H}}^{\eta}_{n}-\hat{\textbf{H}}^{\eta*}_{n}$. 
Then, FEJ2-EKF eliminates the problematic first-order term (\ref{omit_resid}) in (\ref{disc_obsresid}) by left multiplication of $\textbf{B}_{n}$ to the observation residual (\ref{disc_obsresid}). The modified observation residual adopted in FEJ2-EKF is
\begin{equation}\label{FEJ2_modify}
    \Tilde{\textbf{y}}_{n}\triangleq\textbf{B}_{n}{\textbf{y}}_{n}=\textbf{B}_{n}\hat{\textbf{H}}^{\eta*}_{n}\bm{\eta}+o(||\bm{\eta}||).
\end{equation}

However, due to the singularity of the left nullspace $\textbf{B}_{n}$, the nullspace operation (\ref{FEJ2_modify}) could inevitably give rise to some geometrical information loss. If the first estimates are good enough (very close to the ground-truth), this geometrical information loss may lead to negative effects. 
Also, as discussed in \cite{FEJ2}, numerical
instability may occur when employing QR factorization. 

In addition, FEJ2-EKF only modifies the linearization of observation functions, and still neglects some additional first-order linearization errors (\ref{omit_resid2}) in the propagation. 

Therefore, FEJ2-EKF overcomes part of the limitations in FEJ-EKF. But it produces other potential
issues.


\subsection{Advantages of Aff-EKF framework over I-EKF framework}
 Comparing to FEJ-EKF (FEJ2-EKF) and OC-EKF, I-EKF as well as Aff-EKF are able to naturally maintain the correct observability property without artificially adjusting the evaluation points of Jacobians. And thus the limitations of FEJ-EKF and OC-EKF described in Sec. \ref{Sec_Limit_FEJ} do not exist in I-EKF and our proposed Aff-EKF frameworks. Therefore, I-EKF and Aff-EKF frameworks are the natural solutions to overcome the inconsistency issue.
 
\subsubsection{Limitations of I-EKF}
As presented in \cite{IEKF,con9}, consistent I-EKF is based on a well-designed Lie group structure for a specific dynamic system. 
This Lie group should be compatible with the intrinsic structure of the considered system. 
However, these studies on I-EKF presuppose that we have found such a suitable Lie group. It is unclear about the general approach to construct these Lie group structures. 

In the applications of I-EKF \cite{RIEKF_2DSLAM,con9,InGVIO,Obj1,RIEKF_VINS,con3}, the proper Lie group structures are elaborately constructed based on experiences. Finding a proper Lie group structure for a dynamic system can be challenging and may limit the application of I-EKF framework. Because there could exist many Lie group structures on the state space manifold, while I-EKF only fulfills the observability constraint based on the specific Lie group structures. 

For example, \citet{RIEKF_2DSLAM} propose an I-EKF algorithm on a special Lie group $\mathbb{SE}_{K+1}(2)$ for 2D SLAM with point features. 
However, we should note that Std-EKF is also based on a product Lie group $\mathbb{SO}(2)\times \mathbb{R}^{2K+2}$, where $\mathbb{SO}(2)$ with matrix multiplication is the 2D rotation group and $\mathbb{R}^{2K+2}$ with vector plus. It shows that only specific Lie groups have the ability to address the inconsistency issue.


\subsubsection{Relation between Aff-EKF and I-EKF}\label{Dis_RI2}
In contrast, we provide a clear procedure (Alg. \ref{Alg3:FindA}) to design an observability preserved Aff-EKF without finding any specific Lie group structures. 

In the following, we will show that by selecting a proper atlas, Aff-EKF is able to share the same observability property as any given I-EKF.

Let's take the RI-EKF as an example\footnote{The same conclusion can be obtained by the following analysis for the left invariant EKF.}. Suppose there is an RI-EKF based on a Lie group structure, $\mathcal G=(\mathcal M, \boxplus)$, of the state space $\mathcal M$, where $\boxplus$ represents the group action. Then the atlas, $\mathcal A^\gamma=\{\pi_\textbf{X}\}$, adopted in RI-EKF is created by the retractions from the right invariant tangent space to the state space $\mathcal{M}$ \cite{Manifoldbook,IEKF,Hall,Lee2012}, i.e.
\begin{equation}
    \bm{\gamma}=\pi_{\hat{\textbf{X}}}(\textbf{X})=\log^{\mathcal G}(\textbf{X} \boxplus \hat{\textbf{X}}^{-1}),
\end{equation}
where $\bm{\gamma}$ is the error state, and $\log^{\mathcal G}$ is the inverse of exponential mapping which is unique for a given Lie group structure \cite{Hall,Lee2012}. 

Let the standard atlas be the atlas $\mathcal A^\eta=\{\phi_{\textbf{X}}\}$ where Std-EKF is implemented. Considering the transformation between error vectors in RI-EKF and Std-EKF, we have
\begin{equation}\label{err_trans_ri}
    \bm{\gamma}=\pi_{\hat{\textbf{X}}}\circ\phi^{-1}_{\hat{\textbf{X}}}(\bm{\eta}).
\end{equation}
By the Taylor expansion at the origin $\bm{\eta}=0$, the above transformation (\ref{err_trans_ri}) becomes
\begin{equation}
    \bm{\gamma}=(\frac{\partial \pi_{\hat{\textbf{X}}}\circ \phi_{\hat{\textbf{X}}}^{-1}}{\partial \bm{\eta}}|_{\bm{\eta}=\textbf{0}})\bm{\eta}+ o(||\bm{\eta}||).
\end{equation}
 Then, we can design an affine atlas $\mathcal A^\xi=\{\psi_{\textbf{X}}=\textbf{A}_{\textbf{X}}\cdot \phi_{\textbf{X}}|\textbf{X}\in \mathcal M\}$ by letting $\textbf{A}_{\textbf{X}}$ be 
\begin{equation}\label{A_ri2aff}
    \textbf{A}_{\textbf{X}}=\frac{\partial \pi_{{\textbf{X}}}\circ \phi_{{\textbf{X}}}^{-1}}{\partial \bm{\eta}}|_{\bm{\eta}=\textbf{0}}.
\end{equation}
And denote $\bm{\xi}$ as the error state in the corresponding Aff-EKF, we have
\begin{equation}\label{dis_ri_eq1}
    \bm{\gamma}=\bm{\xi}+ o(||\bm{\xi}||),
\end{equation}
where the infinitesimal $o(||\bm{\xi}||)$ will produce a slight difference for the state update in the EKF framework. By the manifold theory \cite{Manifoldbook} and the similar derivations of Eq. (\ref{Jaco_trans}), it is easy to verify that such an Aff-EKF has the same Jacobian matrices with RI-EKF. 

As a result, we can obtain an Aff-EKF through (\ref{A_ri2aff}) that has the same observability property and similar estimation as RI-EKF. And we cannot theoretically prove the superiority or inferiority in accuracy between them. Because the only differences between them in each step of estimation are the second and higher order small quantities, while both of them have neglected some other second and higher order small quantities during linearizations. The comparison of their performances may rely on the actual noise distributions.

In summary, our Aff-EKF framework shows that, in order to design a naturally consistent EKF, we do not have to struggle with specific Lie group structures that are compatible with the intrinsic structure of the system. Our proposed Aff-EKF framework provides a clear procedure to design naturally consistent EKF solvers for systems, including constrained point based SLAM and plane based SLAM, where I-EKF may not be readily available.

\section{Conclusion}

In this paper, we first prove some sufficient and
necessary conditions for an observability preserved EKF from the perspective of the dependence of unobservable subspace on the state values. Based on these newly proved theorems, the Aff-EKF framework is proposed to address the inconsistency issue in EKF for the state estimation problems, which is a theoretically sound methodology with a clear design procedure. As demonstrations, we presented our Aff-EKFs for three SLAM applications, with typical point features, plane constrained point features and plane features. 

As shown in the application of our framework in point feature based SLAM, we could have multiple consistent Aff-EKF estimators for one problem. In the further research, we will investigate the rules to select the best one in terms of accuracy and efficiency.
Moreover, we will also extend our framework to the state estimation of continuous dynamic systems, and apply it to other problems such as estimations of visual-inertial navigation systems and SLAM in dynamic/deformable environments.

\begin{appendices}
\section{Proof of Lemma \ref{equRank}}\label{equRank_proof}
\begin{proof}
Since $\mathcal A^\eta=\{\phi_{\hat{\textbf{X}}}\}$, $\mathcal A^\xi=\{\psi_{\hat{\textbf{X}}}\}$ are both the atlases of m-dimensional manifold $\mathcal M$, they are compatible due to the properties of manifold \cite [Proposition 1.17] {Lee2012}. Therefore, the Jacobian matrix of coordinate transformation 
$$\textbf{A}_{\hat{\textbf{X}}}\triangleq\frac{\partial \psi_{\hat{\textbf{X}}}\circ\phi^{-1}_{\hat{\textbf{X}}}}{\partial \bm{\eta}}|_{\bm{\eta}=\textbf{0}}$$
is an invertible matrix of dimension $m\times m$.
Then, due to 
$$\begin{array}{rlll}
{\textbf{F}}_{i-1}^{\xi}&=\textbf{A}_{\textbf{X}_{i}}{\textbf{F}}_{i-1}^{\eta}\textbf{A}^{-1}_{\textbf{X}_{i-1}},\\
{\textbf{H}}_{i}^{\xi}&={\textbf{H}}_{i}^{\eta}\textbf{A}_{\textbf{X}_{i}}^{-1},\\
    \end{array}$$
we have 
\begin{equation}
\textbf{O}^\eta_k=\textbf{O}^\xi_k\cdot \textbf{A}_{\textbf{X}_{0}}, \ \forall k\geq 0.
\end{equation}
Since $\textbf{A}_{\textbf{X}_{0}}$ is invertible, we can obtain that
\begin{equation}
    \text{rank}(\textbf{O}^\xi_k)=\text{rank}(\textbf{O}^\xi_k\textbf{A}_{\textbf{X}_{0}})=\text{rank}(\textbf{O}^\eta_k), \ \forall k\geq 0,
\end{equation}
and then,
\begin{equation}
    \text{dim}(\textbf{N}^\xi_k)=m-\text{rank}(\textbf{O}^\xi_k)=\text{dim}(\textbf{N}^\eta_k), \ \forall k\geq 0.
\end{equation}
\end{proof}



\section{Proof of Theorem \ref{MT1}}\label{MT1_proof}

\begin{proof}
Due to Eq. (\ref{EKF_OC}) and Assumption \ref{asmp2} and \ref{asmp3}, we have 
\begin{equation}\label{pe1}
    \begin{array}{rl}
        \text{dim}(\hat{\textbf{N}}^\epsilon_k(\textbf{X}_{0|0})) &=\text{dim}(\textbf{N}^\epsilon_k(\textbf{X}_{0}))  \\
         & =\text{dim}(\textbf{N}^\epsilon_1(\textbf{X}_{0}))\\
         &=\text{dim}(\hat{\textbf{N}}^\epsilon_1(\textbf{X}_{0|0})),
    \end{array}
\end{equation}
$\forall k\geq 1$ and $\forall \textbf{X}_{0},\textbf{X}_{0|0}\in \mathcal M$. If $\textbf{X}_{0}=\textbf{X}_{0|0}$ and $\textbf{X}_{1}=\textbf{X}_{1|0}$, there will be ${\textbf{N}}^\epsilon_1(\textbf{X}_{0|0})= \hat{\textbf{N}}^\epsilon_1(\textbf{X}_{0|0}), \forall \textbf{X}_{0|0}$. 
Then, due to $\hat{\textbf{N}}^\epsilon_k\subset \hat{\textbf{N}}^\epsilon_1$ and ${\textbf{N}}^\epsilon_k\subset {\textbf{N}}^\epsilon_1$, we can obtain that 
\begin{equation}
    \hat{\textbf{N}}^\epsilon_k(\textbf{X}_{0|0})= \hat{\textbf{N}}^\epsilon_1(\textbf{X}_{0|0})={\textbf{N}}^\epsilon_1(\textbf{X}_{0|0})= {\textbf{N}}^\epsilon_k(\textbf{X}_{0|0}), \forall \textbf{X}_{0|0}.
\end{equation}

Further, in order to prove the theorem, we only need to prove that the $1$-order observability matrix $\hat{\textbf{N}}^\epsilon_1$ of EKF model is a constant matrix, which is unrelated to the state values. 

Now, let's consider the following $2$-order observability matrix of EKF,
\begin{equation}\label{Ap_B_eq2}
    {\hat{\textbf{O}}}^\epsilon_{2}(\textbf{X}_{0|0})=\left[
		\begin{array}{cccccc}
			{\textbf{H}}(\textbf{X}_{0|0})\\
                {\textbf{H}}(\textbf{X}_{1|0}){\textbf{F}}(\textbf{X}_{1|0},\textbf{X}_{0|0})\\
                {\textbf{H}}(\textbf{X}_{2|1}){\textbf{F}}(\textbf{X}_{2|1},\textbf{X}_{1|1}){\textbf{F}}(\textbf{X}_{1|0},\textbf{X}_{0|0})
		\end{array}
		\right],
\end{equation}
we have
\begin{equation}\label{appendBEq1}
\begin{array}{cc}
    {\textbf{F}}(\textbf{X}_{1|0},\textbf{X}_{0|0})\hat{\textbf{N}}^\epsilon_1(\textbf{X}_{0|0})={\textbf{F}}(\textbf{X}_{1|0},\textbf{X}_{0|0})\hat{\textbf{N}}^\epsilon_2(\textbf{X}_{0|0})\\ \subset \text{null}(\left[
		\begin{array}{cccccc}
		
                {\textbf{H}}(\textbf{X}_{1|0})\\
                {\textbf{H}}(\textbf{X}_{2|1}){\textbf{F}}(\textbf{X}_{2|1},\textbf{X}_{1|1})
		\end{array}
		\right]).\end{array}
\end{equation}
If Condition (i) or (ii) in Theorem \ref{MT1} is valid, we can get that 
\begin{equation}\label{appendBEq2}
    \text{null}(\left[
		\begin{array}{cccccc}
		
                {\textbf{H}}(\textbf{X}_{1|0})\\
                {\textbf{H}}(\textbf{X}_{2|1}){\textbf{F}}(\textbf{X}_{2|1},\textbf{X}_{1|1})
		\end{array}
		\right])\subset \hat{\textbf{N}}^\epsilon_1(\textbf{X}_{1|1}).
\end{equation}
And since ${\textbf{F}}(\textbf{X}_{1|0},\textbf{X}_{0|0})$ is invertible by Assumption \ref{asmp1}, through (\ref{appendBEq1}) and (\ref{appendBEq2}), we can obtain 
\begin{equation}\label{pe4}
    {\textbf{F}}(\textbf{X}_{1|0},\textbf{X}_{0|0})\hat{\textbf{N}}^\epsilon_1(\textbf{X}_{0|0})= \hat{\textbf{N}}^\epsilon_1(\textbf{X}_{1|1}).
\end{equation}
Notice that the left-hand side of Eq. (\ref{pe4}) is only related to the variables $\textbf{X}_{0|0}$ and $\textbf{X}_{1|0}$, while the right-hand side of Eq. (\ref{pe4}) is only related to the variable $\textbf{X}_{1|1}$ which can be different from $\textbf{X}_{0|0}$ and $\textbf{X}_{1|0}$. Therefore, $\hat{\textbf{N}}^\epsilon_1(\textbf{X}_{1|1})$ is a constant space unrelated to the state values. 
Further, we can conclude that $\forall \textbf{X}_{0|0},\textbf{X}_0 \in \mathcal M$,
$$\hat{\textbf{N}}_k^\epsilon(\textbf{X}_{0|0})=\textbf{N}_k^\epsilon(\textbf{X}_0)=\bar{\textbf{N}}^\epsilon, k\geq1,$$
where $\bar{\textbf{N}}^\epsilon$ is a constant space.

\end{proof}

\section{Proof of Theorem \ref{MT2}}\label{MT2_proof}
\begin{proof}
First, we prove that the $1$-order unobservable subspace of EKF model is the same as the underlying system. Let's consider another trajectory where $\textbf{X}_0=\textbf{X}_{0|0}$ and $\textbf{X}_1=\textbf{X}_{1|0}$, the unobservable subspace of underlying system should also be the same constant matrix due to the conditions in the theorem. Then, for the 1-order unobservable subspace of EKF model, we can easily obtain that
\begin{equation}{\hat{\textbf{N}}}^\epsilon_1(\textbf{X}_{0|0})=\text{null}(\left[
		\begin{array}{cccccc}
			\textbf{H}^\epsilon(\textbf{X}_{0|0})\\ \textbf{H}^\epsilon(\textbf{X}_{1|0})\textbf{F}^\epsilon(\textbf{X}_{1|0},\textbf{X}_{0|0})
		\end{array}
		\right])=\bar{\textbf{N}}^\epsilon.
\end{equation}

Next, we will prove that the $k$-order unobservable subspace of EKF model is also the same as the underlying system.

Due to the conditions that $\forall \textbf{X}_0\in \mathcal M$, and $k\geq 1$,
$$\textbf{N}^\epsilon_k(\textbf{X}_0)=\textbf{N}^\epsilon_1(\textbf{X}_0)\equiv\bar{\textbf{N}}^\epsilon,$$
we have 
$$\bar{\textbf{N}}^\epsilon=\textbf{N}^\epsilon_2(\textbf{X}_{1})\subset \text{null}(\left[
		\begin{array}{cccccc}
\textbf{H}^\epsilon_1\textbf{F}^\epsilon_0\\ \textbf{H}^\epsilon_2\textbf{F}^\epsilon_1\textbf{F}^\epsilon_0
		\end{array}
		\right])=\textbf{F}^\epsilon_0\textbf{N}^\epsilon_1(\textbf{X}_{1})=\textbf{F}^\epsilon_0\bar{\textbf{N}}^\epsilon,$$
which follows ${\textbf{F}}^\epsilon_0\bar{\textbf{N}}^\epsilon= \bar{\textbf{N}}^\epsilon$ by Assumption \ref{asmp2}.

Let's consider another 1-step trajectory starting at $\textbf{X}_{i|i}$ and ending at $\textbf{X}_{i+1|i}$, we can get $\hat{\textbf{F}}^\epsilon_i\bar{\textbf{N}}^\epsilon= \bar{\textbf{N}}^\epsilon$, where $\hat{\textbf{F}}^\epsilon_i=\textbf{F}(\textbf{X}_{i+1|i},\textbf{X}_{i|i})$. Further, for the transition matrix $\hat{\textbf{F}}^\epsilon_{k,0}=\hat{\textbf{F}}^\epsilon_k\cdots\hat{\textbf{F}}^\epsilon_0$ of EKF model, we can obtain 
\begin{equation}\label{proofT2_eq1}
\hat{\textbf{F}}^\epsilon_{k,0}\bar{\textbf{N}}^\epsilon= \bar{\textbf{N}}^\epsilon.
\end{equation}
It follows that
\begin{equation}
    \bar{\textbf{N}}^\epsilon \subset \text{null}(\hat{\textbf{H}}^\epsilon_{i}\hat{\textbf{F}}^\epsilon_{i-1,0}), \forall i\geq1.
\end{equation}
Then, we have
\begin{equation}
\begin{array}{rl}

    \bar{\textbf{N}}^\epsilon &\subset \text{null}(\hat{\textbf{H}}^\epsilon_{0})\bigcap^k_{i=1} \text{null}(\hat{\textbf{H}}^\epsilon_{i}\hat{\textbf{F}}^\epsilon_{i-1,0})\\&={\hat{\textbf{N}}}^\epsilon_k(\textbf{X}_{0|0})\subset {\hat{\textbf{N}}}^\epsilon_1(\textbf{X}_{0|0})=\bar{\textbf{N}}^\epsilon, k\geq 1,
    \end{array}
\end{equation}
which implies that ${\hat{\textbf{N}}}^\epsilon_k(\textbf{X}_{0|0})=\bar{\textbf{N}}^\epsilon$.

Therefore, we can conclude that, for $k\geq1$,
\begin{equation}
    \hat{\textbf{N}}^\epsilon_k(\textbf{X}_{0|0})=\bar{\textbf{N}}^\epsilon={\textbf{N}}^\epsilon_k(\textbf{X}_{0}).
\end{equation}



\end{proof}

\section{Proof of Lemma \ref{Lemma1}} \label{Lemma1_proof}
\begin{proof}
Due to 
$$\begin{array}{llll}
{\textbf{F}}_{i-1}^{\xi}&=\textbf{A}_{\textbf{X}_{i}}{\textbf{F}}_{i-1}^{\eta}\textbf{A}^{-1}_{\textbf{X}_{i-1}}, i=1,\cdots,n,\\
{\textbf{H}}_{n}^{\xi}&={\textbf{H}}_{n}^{\eta}\textbf{A}_{\textbf{X}_{n}}^{-1},\\
    \end{array}$$
we have
$${\textbf{H}}^{\xi}_{n}{\textbf{F}}_{n-1,0}^{\xi}={\textbf{H}}^{\eta}_{n}{\textbf{F}}_{n-1,0}^{\eta}\textbf{A}_{\textbf{X}_{0}}^{-1}.$$
Then we can easily get that 
$${\textbf{N}}_k^{{\xi}}(\textbf{X}_0)=\textbf{A}_{\textbf{X}_0}{\textbf{N}}_k^{{\eta}}(\textbf{X}_0).$$
\end{proof}

\end{appendices}

\bibliographystyle{plainnat}
\bibliography{references}

\begin{IEEEbiography}    [{\includegraphics[width=1in,height=1.33in]{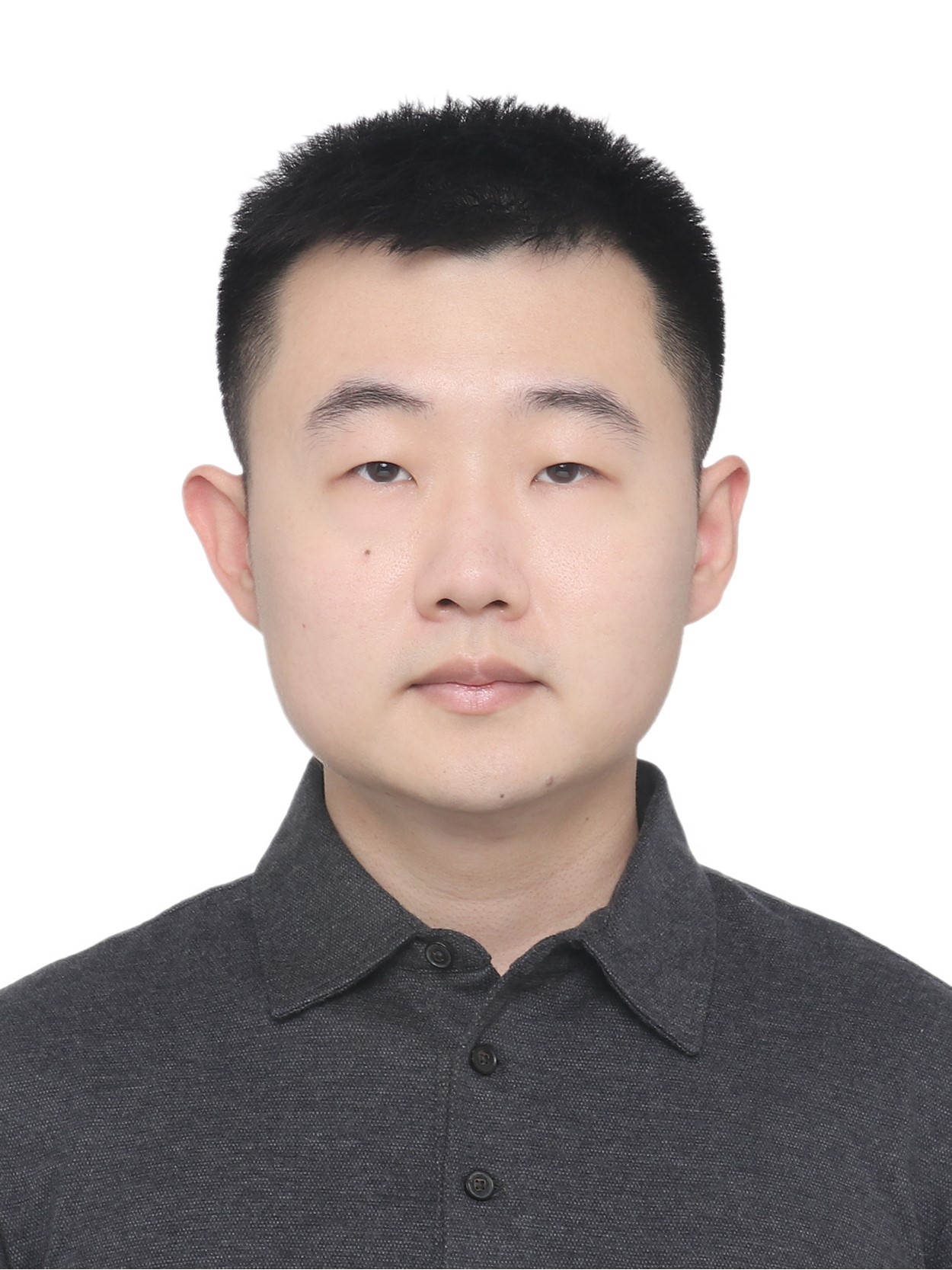}}] 
{Yang Song} received his B.S. and M.S. from School of Mathematics and Statistics, Beijing Institute of Technology in 2017 and 2020, respectively. He is currently a Ph.D. candidate in the Robotics Institute, University of Technology Sydney, Australia. His current research interests include extend Kalman filter, SLAM and optimization.
  \end{IEEEbiography}
  
\begin{IEEEbiography}    [{\includegraphics[width=1in,height=1.35in]{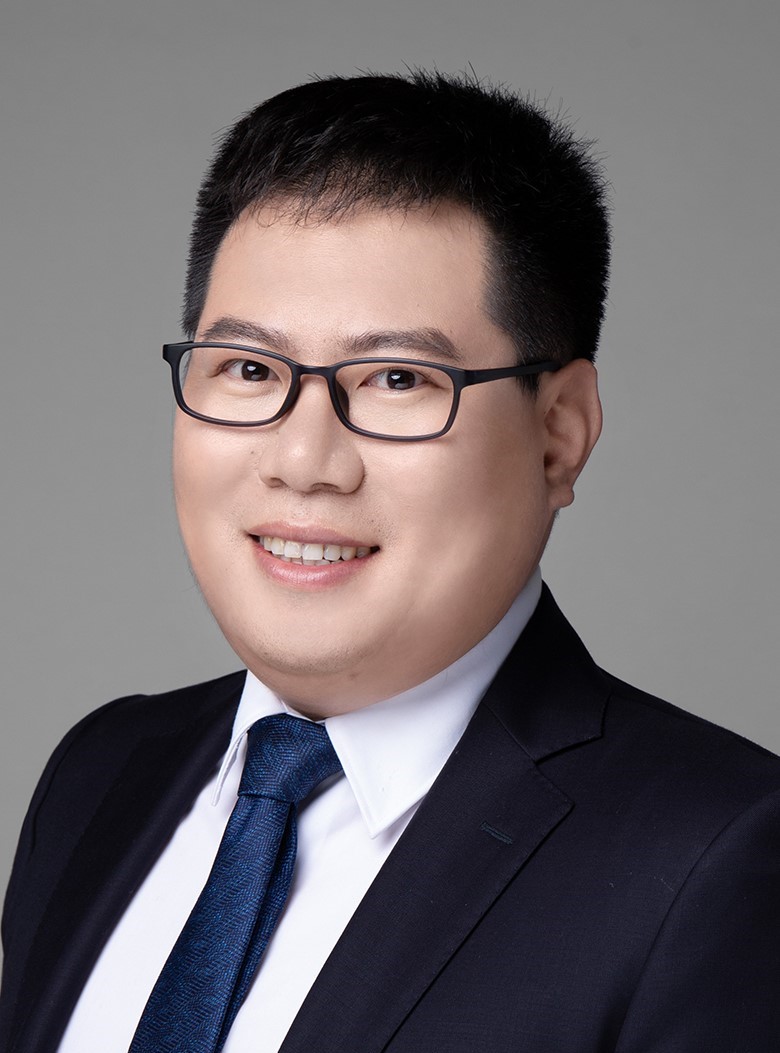}}] 
{Liang Zhao} (Member, IEEE) received the Ph.D. degree in photogrammetry and remote sensing from Peking University, Beijing China, in January 2013. From 2014 to 2016, he worked as a Postdoctoral Research Associate with the Hamlyn Centre for Robotic Surgery, Department of Computing, Faculty of Engineering, Imperial College London, London, U.K. From 2016 to 2024, he was a Senior Lecturer and the Director of Robotics in Health at the UTS Robotics Institute, Faculty of Engineering and Information Technology, University of Technology Sydney, Australia.
He is currently a Reader in Robot Systems in the School of Informatics, The University of Edinburgh, United Kingdom. His research interests include surgical robotics, robots simultaneous localisation and mapping (SLAM), monocular SLAM, aerial photogrammetry, optimisation techniques in mobile robot localisation and mapping and image guide robotic surgery. 
Dr. Zhao is an Associate Editor for IEEE TRANSACTIONS ON ROBOTICS, ICRA, and IROS.

\end{IEEEbiography}
\begin{IEEEbiography}    [{\includegraphics[width=1in,height=1.26in]{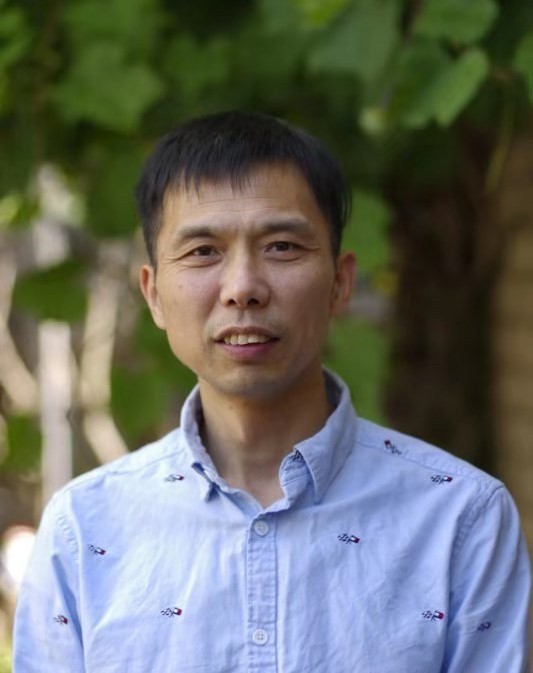}}] 
{Shoudong Huang} (Senior Member, IEEE) received the Ph.D. degree in automatic control from Northeastern University, Shenyang, China, in 1998. He is currently a Professor in Robotics Institute, Faculty of Engineering and Information Technology, University of Technology Sydney, Sydney, Australia. His current research interests include mobile robots simultaneous localization and mapping (SLAM), and robot path planning and control.
  \end{IEEEbiography}

\end{document}